\title{Estimating and Explaining Model Performance When Both Covariates and Labels Shift}
\author{%
  Lingjiao Chen$^1$, Matei Zaharia$^1$, James Zou$^{1,2}$\\
  Department of Computer Science$^1$, Department of Biomedical Data Science$^2$\\
  Stanford University\\
}
\newcommand{\systemnameShiftAtt}{\textsc{SEES}}
\newcommand{\systemnameISS}{\textsc{SJS}}
\newcommand{\Exp}{\mathbb{E}}
\newcommand{\DP}{ \mathbb{P}_s } 
\newcommand{\DQ}{ \mathbb{P}_t } 
\newcommand{\dP}{ {p}_s } 
\newcommand{\dQ}{ {p}_t } 
\newcommand{\dA}{ {p}_a } 
\newcommand{\dPhat}{ \hat{p}_s } 
\newcommand{\dQhat}{ \hat{p}_t }
\newcommand{\nP}{ n_s } 
\newcommand{\nQ}{ n_t } 
\newcommand{\Y}{ {y} } 
\newcommand{\y}{ \bar{{y}} }
\newcommand{\x}{ \bar{\pmb{x}} } 
\newcommand{\X}{ \pmb{x} } 
\newcommand{\z}{ \pmb{z} } 
\newcommand{\dist}{ dd }
\newcommand{\disthat}{ \hat{dd} }
\newcommand{\f}{ \bar{f} }
\newcommand{\WSet}{ \mathcal{W} } 
\newcommand{\w}{ w } 
\newcommand{\what}{ \hat{w} } 
\newcommand{\IndSet}{ I } 
\newcommand{\IndSetJ}{ J }
\newcommand{\sparse}{ m } 
\DeclarePairedDelimiterX{\inp}[2]{\langle}{\rangle}{#1, #2}
\newtheorem{theorem}{Theorem}
\newtheorem{lemma}[theorem]{Lemma}
\newtheorem{definition}{Definition}
\newcommand{\eat}[1]{}
\newcommand{\R}{\mathbb{R}}
\numberwithin{equation}{section}
\newlength{\dhatheight}
\newcommand{\james}[1]{{\color{blue} {\bf James:} #1}}
\newcommand{\lingjiao}[1]{{\color{orange} {\bf Lingjiao:} #1}}
\begin{document}

\maketitle

\begin{abstract}
Deployed machine learning (ML) models often encounter new user data that differs from their training data. Therefore, estimating how well a given model might perform on the new data is an important step toward reliable ML applications. This is very challenging, however, as the data distribution can change in flexible ways, and we may not have any labels on the new data, which is often the case in monitoring settings. In this paper, we propose a new distribution shift model, Sparse Joint Shift (SJS), which considers the joint shift of both labels and a few features. This unifies and generalizes several existing shift models including label shift and sparse covariate shift, where only marginal feature or label distribution shifts are considered. We describe mathematical conditions under which \systemnameISS{} is identifiable. We further propose \systemnameShiftAtt{}, an algorithmic framework to characterize the distribution shift under \systemnameISS{} and to estimate a model's performance on new data without any labels. We conduct extensive experiments on several real-world datasets with various ML models. 
\eat{\james{Do we have this?}\lingjiao{Unfortunately not yet. I had to remove this.}} Across different datasets and distribution shifts, SEES achieves significant (up to an order of magnitude) shift estimation error improvements over existing approaches.
\end{abstract}

\section{Introduction}\label{Sec:SparseShift:Intro}
Deployed machine learning (ML) models often face new data different from their training data. 
For example, mismatch of deployment-development data in geographical locations~\cite{Wild_Data21}, demographic features~\cite{DemographicEffects2019}, and label balance~\cite{ImbalanceData19} is widely observed and known to affect model performance. 
Thus, estimating and explaining how  a model's  performance changes on the new data is an important step toward reliable ML applications.

Estimating and explaining performance shift is challenging for several reasons, however.
One major challenge is that the data distribution might shift in flexible ways. Another obstacle is that we often do not have labels on the new data, especially in ML monitoring applications. 
Without any assumption on the distribution shift, it's impossible to estimate how well the model would perform on the unlabeled new data.
Previous work often assumes  (i) label shift~\cite{BBSE18}, where feature distributions conditional on the labels are fixed, or (ii) covariate shift~\cite{KLIEP07}, where label distributions conditional on features stay the same.
However, we often do not know whether the real data shift is limited to label or covariate shift, and naively applying estimation methods designed for one shift  may produce inaccurate assessments~\cite{EmpiricalShiftStudy19}.
Moreover, labels and features may shift simultaneously in practice, invalidating these common assumptions.

To tackle the above challenges, here we propose a new distribution shift model, Sparse Joint Shift (\systemnameISS{}), to consider the joint shift of both labels and a few features.
\systemnameISS{} assumes labels and a few features shift, but the remaining features' distribution conditional on the  shifted features and labels is fixed. 
This unifies and  generalizes sparse covariate shift and label shift: both of them are \systemnameISS{}, but some \systemnameISS{} is not label or sparse covariate shift (Figure \ref{fig:SparseShift:IntroOverview}). 
We describe mathematical conditions under which \systemnameISS{} is provably
identifiable: if the non-shifted features are weakly correlated, then the marginal feature distribution uniquely determines the joint distribution under \systemnameISS{}.
This makes it possible to quantify the shift and estimate model performance on new data without any labels.

\begin{figure}[t]
	\centering
	\vspace{-0mm}
	\includegraphics[width=1\linewidth]{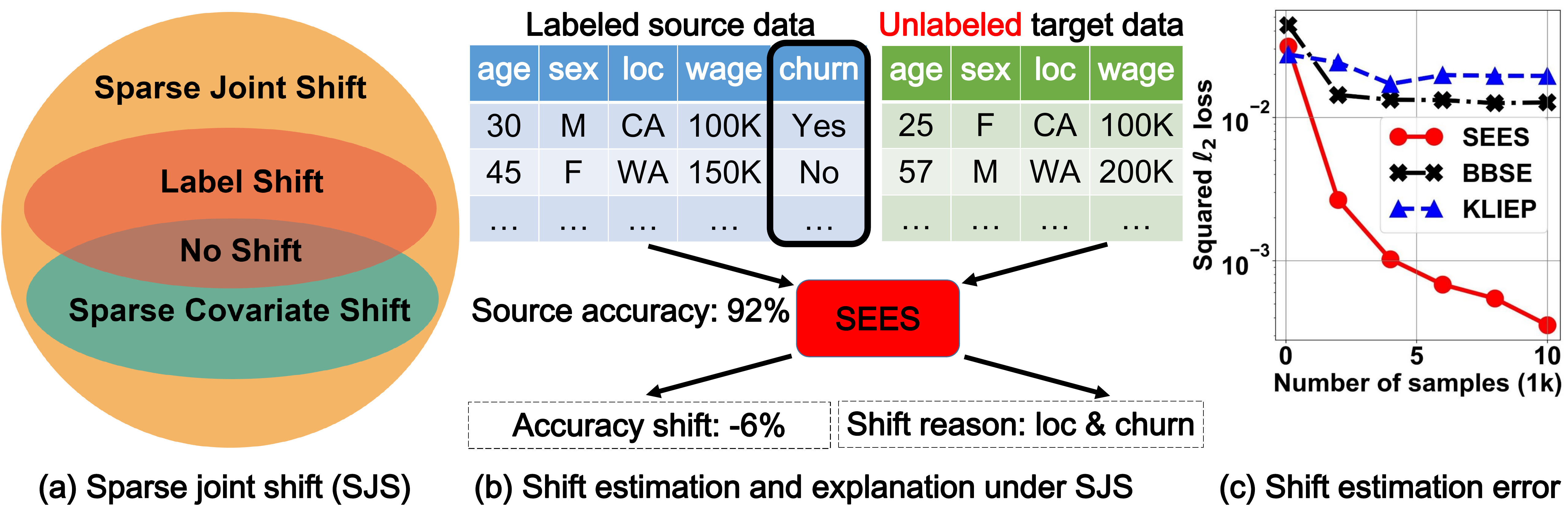}
	\vspace{-0mm}
	\caption{Overview of sparse joint shift (\systemnameISS{}).  (a) Both label shift and sparse covariate shift are \systemnameISS{}, but \systemnameISS{} contains additional shifts as well. 
	(b) illustrates \systemnameShiftAtt{}, a framework for  performance shift estimation and explanation under \systemnameISS{}.   
	Given labeled source and unlabeled target data, \systemnameShiftAtt{} exploits the joint shift modeled by \systemnameISS{} to estimate the model performance change and explain which factors drive the shift. In this example, the goal is to predict \emph{churn}. (c)  \systemnameShiftAtt{} significantly reduces the shift estimation error over existing methods when both labels and covariates shift.}
	\label{fig:SparseShift:IntroOverview}
\end{figure}

\paragraph{When \systemnameISS{} occurs: a motivating example.} Consider Alice, a data scientist who built an ML model for customer churn prediction.
Two years later, churn rate rose in some states but dropped in others, while the distribution of other features given label (churn) and location (state) remained.  
This shift in customer distribution is a natural \systemnameISS{} and challenging to estimate without labels on the new data. 

Furthermore, we propose \systemnameShiftAtt{}, an algorithmic framework for performance \underline{s}hift \underline{e}stimation and \underline{e}xplanation under \underline{S}JS.
\systemnameShiftAtt{} exploits  correlation shifts between features and labels modeled by \systemnameISS{} to improve performance estimation accuracy over existing methods, such as BBSE~\cite{BBSE18} for label shift and KLIEP~\cite{KLIEP07} for covariate shift (Figure \ref{fig:SparseShift:IntroOverview}).
It uses the identified shifted features and labels as a natural explanation of the performance shift.
We present an  extensive empirical evaluation on real world datasets with both simulated \systemnameISS{} and real shifts. 
Our experiments validate the effectiveness of \systemnameShiftAtt{}:  the estimation error of \systemnameShiftAtt{} is often an order of magnitude smaller than that of previous state-of-the-art methods. All together, \textbf{our contributions are}:  
\begin{enumerate}
    \item We formulate and study sparse joint shift (\systemnameISS{}), a new  distribution shift model that considers the joint shift of both labels and features. We  show how it unifies and generalizes existing shift models and when it is identifiable with unlabeled data. 
    \item We propose \systemnameShiftAtt{}, a general framework for performance shift estimation and explanation under \systemnameISS{}.
    We design efficient  substantiations of \systemnameShiftAtt{} for different data types.
    
    \item We provide  comprehensive empirical analysis of our methods. 
    On real world datasets with natural distribution shifts, we found \systemnameShiftAtt{} leads to up to 66\% performance estimation error reduction over standard approaches.

\end{enumerate}

\paragraph{Related Work.} 
\textbf{Label shift:} Label shift has been observed and studied in various domains, such as epidemiologic~\cite{LabelShift_epidemiologic1966}, economics~\cite{LabelShift_econ1977}, and data mining~\cite{forman2008quantifying}.
Recently, there is an increasing interest in  model evaluation and development under under label shift. 
For example, \cite{BBSE18,labelshift_mle_alexandari2020maximum} study how to quantify label shift and evaluate an ML model's performance under label shift accordingly. 
\cite{LabelShift_active2021} gives an algorithm for active learning under label shift. 
Online adaptation to label shift 
\cite{LabelShift_Online2021} is also shown to be feasible. Label shift can be viewed as a subset of our proposed shift model \systemnameISS{}. 
Although simpler and easier to estimate, label shift does not capture the joint shift of labels and features.
Thus, methods developed for label shift may not work well under \systemnameISS{}.
 
\textbf{Covariate shift:}
Covariate shift~\cite{covariate_firstpaper_shimodaira2000improving,covaraite_distcompare_yamada2011relative,covariateshift_unify_moreno2012unifying} is perhaps the most widely adopted assumption in data distribution shift. Since studied in the seminal work~\cite{covariate_firstpaper_shimodaira2000improving}, various methods have been proposed to estimate covariate shift, including  KLIEP~\cite{covariate_sugiyama2008direct}, KMM~\cite{covariate_gretton2009}, and IWCV~\cite{covariate_sugiyama2007}. 
Adaptation to covaraite shift has been found useful in many applications, such as 
spam filtering~\cite{covariate_application_spam_bickel2006dirichlet}, emotion recognition~\cite{covariate_emotion_jirayucharoensak2014eeg} and human activity detection~ \cite{covaraite_applications_humanactivity_hachiya2012importance}.
More recently, covariate shift adaptation is jointly optimized with  model robustness~\cite{covariate_robustness_schneider2020}, fairness~\cite{covariate_fairness_rezaei2021}, and conformal prediction~\cite{covariate_conformal_tibshirani2019}.


\textbf{Unsupervised model performance evaluation}:
Model performance evaluation without labels has received relatively limited attention. Domain-specific models' performance can be estimated via certain statistics, such as confidence score~\cite{modeleval_confidence_guillory2021predicting}, rotation prediction~\cite{ModelEval_rotation_deng2021does} and feature statistics of the datasets sampled from a meta-dataset~\cite{ModelEval_featurestats_deng2021labels} for image recognition.
General model evaluation often relies on different assumptions and accessibility~\cite{ModelEval_chen2021detecting, chen2021did,ModelEval_mandoline_chen2021, ModelEval_MIT_chuang2020estimating,ModelEval_donmez2010unsupervised,guillory2021predicting,jiang2019fantastic,ModelEval_Lazy_welinder2013}. 
For example, \cite{ModelEval_mandoline_chen2021} assumes covariate shift and requires users to provide an approximation (slice) of the shifted features, while \cite{ModelEval_chen2021detecting} needs white-box access to the ML models to train an ensemble as a reference.
\cite{ModelEval_donmez2010unsupervised} assumes the label distributions are known, while \cite{ModelEval_MIT_chuang2020estimating} needs a feature independence structure given the labels.
When a small number of labels can be obtained, 
\cite{ModelEval_Lazy_welinder2013} proposes an active model evaluation approach. 
To our knowledge, this is the first paper that explicitly models the joint shift of both labels and a few features with provable identification guarantees. 
Moreover, we do not require access to side information such as model design or metadata.

\section{Preliminaries  and Problem Statement}\label{Sec:SparseShift:Preli}
We start by giving the preliminaries and   the problem of estimating and explaining  performance shift.


\paragraph{Prediction tasks and ML models.} In this paper we consider the standard classification task: given a $d$-dimensional feature vector $\pmb{x} \in \mathcal{X}\subseteq \R^{d}$ from the feature space $\mathcal{X}$, the goal is to predict its associated label $y \in \mathcal{Y}$ in the label space $\mathcal{Y}$. 
Let $f(\cdot): \R^{d} \mapsto \mathcal{Y}$ denote an ML model designed for such a task. 
For simplicity, we assume that $\mathcal{Y}=\{1,2,\cdots, L\}$.
Given the model's prediction $f(\pmb x)$ and its true label $y$, its performance is quantified by some loss function $\ell(\cdot,\cdot)$.
A popular choice is the standard 0-1 loss: $\ell(a,b) = \mathbbm{1}_{a=b}$, which we focus on, but other losses are also applicable. 

\paragraph{Joint distribution  shift.} 
The training and inference data for ML often come from two different distributions, referred to as source domain and target domain.
Here, we consider the general case when the joint distribution vary across the source and target domains, and  call this \textit{joint distribution shift}. Formally, let $\DP, \DQ:\mathcal{X}\times \mathcal{Y} \mapsto [0,1]$ denote the source and target domains, and  $\dP,\dQ$ be their probability density (or mass) functions.
Then joint distribution shift means $\dP(\X,y)\not=\dQ(\X,y)$.

\paragraph{Problem statement.} Suppose we are given a labeled dataset $D_{s} \triangleq \{(\X^{s,i}, y^{s,i})\}_{i=1}^{\nP}$ from the source distribution $\DP$,  an unlabeled dataset $D_{t} \triangleq \{(\X^{t,i})\}_{i=1}^{\nQ}$ from the target distribution ${\DQ}$, and an ML model $f(\cdot)$ predicting the associated label given any feature vector $\X$.
Our goal is to estimate how much performance changes from the source domain to the target domain.
More formally, we aim at estimating the performance shift $\Delta \triangleq \Exp_{(\X,y)\sim \DQ}[\ell(f(\X),y)] - \Exp_{(\X,y)\sim \DP}[\ell(f(\X),y)]$.
This is challenging as we do not observe labels on the target domain.
In many applications, attributing the performance shift to certain features is also desired. Thus, we are also interested in identifying a set of features to explain the performance shift. 
\section{\systemnameISS{}: A Tractable Unification of Label Shift and Sparse Covariate Shift}\label{Sec:SparseShift:SparseShiftTheory}

At a first glance, estimating the performance shift under joint distribution shift without observing labels from target domain seems hopeless: if the marginal feature distributions are identical for both domains, then observing the features alone should give 0 as the estimated performance shift. However, the label distribution given any feature on the target domain is arbitrary, and thus the estimated shift can be arbitrarily bad. 
In other words, joint distribution shift is not identifiable with no target labels.


To mitigate nonidentifiability, it's necessary to make additional assumptions. 
The most popular assumptions in literature are label shift~\cite{BBSE18} and covariate shift~\cite{covariate_firstpaper_shimodaira2000improving}.
Label shift assumes that only label distribution may change, but the feature distribution given a label remains, i.e.,  $\dP(\X|y)=\dQ(\X|y)$.
On the other hand, covariate shift assumes that feature distribution can shift, but the label distribution given the features is fixed, i.e.,    $\dP(y|\X)=\dQ(y|\X)$. 
However, those assumptions disallow simultaneous changes of both features and labels, which often happen in real-world data~\cite{Wild_Data21,recht2019imagenet,taori2020measuring}. 
To enable joint feature and label estimation which is tractable, we introduce a subclass of joint distribution shift, \textit{Sparse Joint Shift} (\systemnameISS{}), as follows.

\begin{definition}[Sparse Joint Shift (\systemnameISS)]
Suppose for an integer $\sparse\leq d$ and an index set 
$\mathcal{I}\subset [d]$ with size at most $\sparse$ (i.e., $|\IndSet|\leq \sparse$), 
$\dP(\X_{\IndSet^c}|\X_\IndSet, y)=\dQ(\X_{\IndSet^c}|\X_\IndSet, y)$.
Then we say the source and target pair $(\dP,\dQ)$ is under $\sparse$-Sparse Joint Shift, or $\sparse$-\systemnameISS{}. 
Here, $\IndSet^c\triangleq [d]-\IndSet$. We call  $\IndSet$ the shift index set.
\end{definition}

Roughly speaking, \systemnameISS{} allows both labels and a few features to shift, but assumes the remaining features' conditional distribution to stay the same.
Section \ref{Sec:SparseShift:Intro} gives one example when \systemnameISS{} occurs, and more examples and discussions can be found in the appendix. 
Next, we will study when this assumption allows tractable performance shift estimation. 
All proofs are left to the appendix.

\subsection{When is sparse joint shift identifiable?}
Recall that additional assumptions are needed because the general joint distribution shift is not identifiable.
However, when $\sparse=d$, $\sparse$-\systemnameISS{} simply becomes general joint distribution shift. 
Thus, it is worthy understanding when $\sparse$-\systemnameISS{} resolves the identifiability issue.
To do so, let us first formally introduce the notation of identifiability.

\begin{definition}[Identifiable]
Suppose the source-target tuple $(\dP, \dQ)$ is under $\sparse$-\systemnameISS{}. $(\dP, \dQ)$ is  identifiable if and only if for any alternative distribution  $p_a(\X,\Y)$, if $p_a(\X) = \dQ(\X)$ and $\exists \mathcal{J}\subset [d], |\mathcal{J}|\leq \sparse$, such that $ p_a(\X_{\mathcal{J}^c} |\X_\mathcal{J},y) =\dP(\X_{\mathcal{J}^c} |\X_\mathcal{J},y)$, then $p_a(\X,y)=\dQ(\X,y)$. 
\end{definition}
The identifiability can be easily interpreted in words: If a joint feature and label distribution matches the target feature distribution and satisfies the $\sparse$-\systemnameISS{} requirement together with the source distribution, it has to be the target distribution.
The following statement shows when $(\dP,\dQ)$ is identifiable. 

\begin{theorem}\label{thm:SparseShift:identifiable}
Suppose $(\dP,\dQ)$ is under $\sparse$-\systemnameISS{}.
Assume for any set $\mathcal{J} \subset [d], |\mathcal{J}|\leq \sparse$ and any fixed $\x \in \mathcal{X}$, the probability density (or mass) functions $\{\dP(\X_{\IndSetJ^c\cap\IndSet^c},\X_{\mathcal{J}\cup\IndSet}=\x_{\mathcal{J}\cup \IndSet},y=i)\}_{i=1}^{L}$ are linearly independent.
Then $(\dP, \dQ)$ is  identifiable.

\end{theorem}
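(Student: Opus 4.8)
The plan is to collapse the two a priori different shift index sets into their union $K \triangleq \mathcal{J}\cup\IndSet$ and to show that, relative to this combined set, both $\dQ$ and the alternative $\dA$ inherit the \emph{same} conditional law of the remaining coordinates from the source. Writing $K^c=\mathcal{J}^c\cap\IndSet^c$ (exactly the index set appearing in the theorem's hypothesis), I would first record the reduction: if one can show
$$\dQ(\X_{K^c}\mid \X_K,y)=\dA(\X_{K^c}\mid \X_K,y)=\dP(\X_{K^c}\mid \X_K,y),$$
then by the chain rule $\dQ(\X,y)=\dP(\X_{K^c}\mid\X_K,y)\,\dQ(\X_K,y)$ and $\dA(\X,y)=\dP(\X_{K^c}\mid\X_K,y)\,\dA(\X_K,y)$, so the whole claim reduces to proving that the joint marginals $\dQ(\X_K,y)$ and $\dA(\X_K,y)$ coincide.

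The technical heart --- and the step I expect to be the main obstacle --- is a monotonicity property: invariance of the conditional on a \emph{small} set extends to invariance on any \emph{larger} set. For $\dQ$ this says that $\dP(\X_{\IndSet^c}\mid\X_\IndSet,y)=\dQ(\X_{\IndSet^c}\mid\X_\IndSet,y)$ (the $\sparse$-\systemnameISS{} hypothesis, with $\IndSet\subseteq K$) implies $\dP(\X_{K^c}\mid\X_K,y)=\dQ(\X_{K^c}\mid\X_K,y)$. I would prove this by decomposing $\IndSet^c=K^c\,\sqcup\,(K\setminus\IndSet)$ so that $\X_{K^c}$ and $\X_{K\setminus\IndSet}$ together constitute $\X_{\IndSet^c}$, and then conditioning:
$$\dQ(\X_{K^c}\mid\X_K,y)=\frac{\dQ(\X_{\IndSet^c}\mid\X_\IndSet,y)}{\sum_{\X_{K^c}}\dQ(\X_{\IndSet^c}\mid\X_\IndSet,y)}.$$
Substituting the \systemnameISS{} identity into both the numerator and the marginalized denominator turns the right-hand side into the corresponding ratio for $\dP$, namely $\dP(\X_{K^c}\mid\X_K,y)$ (replacing $\sum$ by $\int$ in the continuous case). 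The identical argument applied to $\dA$, using $p_a(\X_{\mathcal{J}^c}\mid\X_\mathcal{J},y)=\dP(\X_{\mathcal{J}^c}\mid\X_\mathcal{J},y)$ with $\mathcal{J}\subseteq K$, yields $\dA(\X_{K^c}\mid\X_K,y)=\dP(\X_{K^c}\mid\X_K,y)$, which supplies the hypothesis of the reduction.

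Finally, I would invoke the marginal-matching constraint $\dA(\X)=\dQ(\X)$. Summing the two chain-rule expansions over $y$ and subtracting gives, for every fixed $\x_K$ and every value of $\X_{K^c}$,
$$\sum_{i=1}^{L}\big[\dQ(\X_K=\x_K,y=i)-\dA(\X_K=\x_K,y=i)\big]\,\dP(\X_{K^c}\mid\X_K=\x_K,y=i)=0.$$
Multiplying the $i$-th coefficient by $\dP(\X_K=\x_K,y=i)$ --- which the linear-independence hypothesis forces to be strictly positive, since a vanishing marginal would make one of the listed functions identically zero and destroy independence --- rewrites this as a vanishing linear combination of precisely the joint densities $\{\dP(\X_{K^c},\X_K=\x_K,y=i)\}_{i=1}^{L}$ assumed linearly independent in the theorem. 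Hence every coefficient vanishes, so $\dQ(\X_K,y)=\dA(\X_K,y)$ for all $(\X_K,y)$; combined with the shared conditional from the previous step, this gives $\dQ(\X,y)=\dA(\X,y)$, establishing identifiability.
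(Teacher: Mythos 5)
Your proof is correct and takes essentially the same route as the paper's: both arguments work on the union set $K=\mathcal{J}\cup\IndSet$ with complement $K^c=\mathcal{J}^c\cap\IndSet^c$, expand the matched feature marginals of $\dQ$ and $\dA$ as label-indexed linear combinations of source densities in $\X_{K^c}$, and invoke the linear-independence hypothesis to force the $L$ coefficients to agree, which yields $\dA(\X,y)=\dQ(\X,y)$. The differences are purely presentational: you bookkeep with conditionals and marginals on $K$ (making the ``invariance extends to larger conditioning sets'' step an explicit lemma, and noting that linear independence forces $\dP(\X_K=\x_K,y=i)>0$) and argue directly, whereas the paper carries the density ratios $\dQ(\X_\IndSet,y)/\dP(\X_\IndSet,y)$ and $\dA(\X_\mathcal{J},y)/\dP(\X_\mathcal{J},y)$ and frames the argument as a contradiction.
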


This statement sheds light on why uniquely identifying the target distribution without target label is feasible  under sparse joint shift.
Roughly speaking, $\sparse$-\systemnameISS{} requires that given the shifted features  and labels, the remaining features' distribution remains the same on both domains. 
If those remaining features are different enough (linear independence), they can uniquely determine the distribution of the shifted features and labels.
We stress that the linear independence is necessary: if it does not hold, then for any $\sparse$, we can always find some source-target pair $(\dP,\dQ)$ which is not identifiable. 
Linear independence implicitly requires sparsity: if $\sparse>d/2$, then $\IndSetJ^c \cap \IndSet^c$ can be empty and the linear independence does not hold.  
In other words, the sparsity is necessary for the shift to be identifiable.


\subsection{How does \systemnameISS{} relate to label shift and covariate  shift?}
A natural question is how does SJS relates to standard label shift and covariate shift. 
To answer this, let us first introduce label and sparse covariate shift formally. 
\begin{definition}
The source and target  $(\dP,\dQ)$ is under \textit{Label Shift} iff $\dP(\X|\Y)=\dQ(\X|\Y)$, and under $\sparse$-\textit{Sparse Covariate Shift} iff $\dP(\X_{I^c},\Y|\X_I)=\dQ(\X_{I^c},\Y|\X_I)$ for some index set $I$ with size $\sparse<d$. 
\end{definition}

Now we are ready to answer the above question.
\begin{theorem}\label{thm:SparseShift:connection}
If $(\dP,\dQ)$ is under label shift, then it is also under  $0$-\systemnameISS{}.
If $(\dP,\dQ)$ is under $\sparse$-sparse covariate shift, then it is also under $\sparse$-\systemnameISS{}.
In addition, there exists $(\dP,\dQ)$ under $\sparse$-\systemnameISS{} such that it is under neither label shift or covariate shift.
\end{theorem}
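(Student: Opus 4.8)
The plan is to treat the three claims separately: the first two are set inclusions that fall out of the definitions with a one-line manipulation each, while only the last requires an explicit construction. For the label shift claim, I would note that $0$-\systemnameISS{} forces the shift index set to be empty, $\IndSet=\emptyset$, so that $\IndSet^c=[d]$ and conditioning on $\X_\IndSet$ is vacuous; the \systemnameISS{} requirement $\dP(\X_{\IndSet^c}\mid\X_\IndSet,y)=\dQ(\X_{\IndSet^c}\mid\X_\IndSet,y)$ then collapses to $\dP(\X\mid y)=\dQ(\X\mid y)$, which is precisely the label shift hypothesis. For the sparse covariate shift claim, I would reuse the same index set $\IndSet$ and start from $\dP(\X_{\IndSet^c},y\mid\X_\IndSet)=\dQ(\X_{\IndSet^c},y\mid\X_\IndSet)$; marginalizing both sides over $\X_{\IndSet^c}$ gives $\dP(y\mid\X_\IndSet)=\dQ(y\mid\X_\IndSet)$, and dividing the joint conditional by this marginal yields $\dP(\X_{\IndSet^c}\mid\X_\IndSet,y)=\dQ(\X_{\IndSet^c}\mid\X_\IndSet,y)$, i.e. $\sparse$-\systemnameISS{} with the same $\IndSet$.

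The substantive part is exhibiting a pair $(\dP,\dQ)$ that is $\sparse$-\systemnameISS{} yet neither label shift nor covariate shift. I would take $d=2$ with binary coordinates $x_1,x_2\in\{0,1\}$ and binary label $y\in\{1,2\}$, and concentrate all the shift on the pair $(x_1,y)$ using index set $\IndSet=\{1\}$ (legal for every $\sparse\geq 1$ since $|\IndSet|=1\leq\sparse$). I would fix a conditional $g(x_2)$ for the non-shifting coordinate that is independent of $(x_1,y)$ and identical across domains, so the \systemnameISS{} condition $\dP(x_2\mid x_1,y)=\dQ(x_2\mid x_1,y)=g(x_2)$ holds by construction. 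I would then pick the joint of the shifting pair to be a symmetric ``swap'': $\dP(y{=}1,x_1{=}0)=\dP(y{=}2,x_1{=}1)=0.4$ and $\dP(y{=}1,x_1{=}1)=\dP(y{=}2,x_1{=}0)=0.1$, with $\dQ(x_1,y)$ obtained by exchanging the roles of $x_1{=}0$ and $x_1{=}1$.

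I would finish by verifying the two negative statements on this example, where the conditional independence of $x_2$ makes both checks one-line Bayes computations rather than messy marginalizations. Since $g$ does not depend on $y$, we get $\dP(\X\mid y)=\dP(x_1\mid y)\,g(x_2)$, and the swap changes $\dP(x_1{=}0\mid y{=}1)=0.8$ into $\dQ(x_1{=}0\mid y{=}1)=0.2$, so $\dP(\X\mid y)\neq\dQ(\X\mid y)$ and label shift fails; since $g$ also does not depend on $x_1$, we get $\dP(y\mid\X)=\dP(y\mid x_1)$, and the swap changes $\dP(y{=}1\mid x_1{=}0)=0.8$ into $\dQ(y{=}1\mid x_1{=}0)=0.2$, so covariate shift fails as well. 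The only point requiring genuine care is arranging for \emph{both} conditionals $\dP(x_1\mid y)$ and $\dP(y\mid x_1)$ to change at once; the symmetric swap is the cleanest device for guaranteeing this, and decoupling $x_2$ from $(x_1,y)$ is what keeps the two verifications from turning into a computation over the full three-variable joint.
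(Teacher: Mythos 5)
Your proposal is correct, and its first two parts are essentially the paper's argument; the real divergences are the starting point of part two and the choice of counterexample, and they are worth comparing. For sparse covariate shift, the paper does not argue from its stated definition but from the characterization ``covariate shift together with $\dP(\X_{\IndSet^c}|\X_\IndSet)=\dQ(\X_{\IndSet^c}|\X_\IndSet)$,'' manipulating ratios until it reaches $\dP(\X_{\IndSet^c},\Y|\X_\IndSet)=\dQ(\X_{\IndSet^c},\Y|\X_\IndSet)$ and then asserting this is $\sparse$-\systemnameISS{}; your marginalize-then-divide step starts from the definition as written and supplies exactly the final implication the paper leaves implicit, so your version is, if anything, tighter. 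For the counterexample, the paper also takes $d=2$ with the shift concentrated on $(\X_1,\Y)$ and an invariant conditional for $\X_2$, but there $\X_2$ depends on $\Y$ (Bernoulli $0.6$ versus $0.2$ according to the label), whereas you make $\X_2$ independent of everything and realize the shift as a symmetric swap. Your swap buys cleaner verification --- both negative checks are the same $0.8$ versus $0.2$ flip --- but it has a side effect you should be aware of: the swap preserves the marginal of $\X_1$, and $\X_2$ is untouched, so your source and target have \emph{identical} feature distributions, $\dP(\X)=\dQ(\X)$. Consequently your pair is a non-identifiable instance of \systemnameISS{} in the sense of the paper's identifiability definition (taking $p_a=\dP$ itself gives an alternative with $p_a(\X)=\dQ(\X)$ that is \systemnameISS{}-compatible with the source yet differs from $\dQ(\X,\Y)$), i.e., a shift invisible to any method that only sees unlabeled target data. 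The paper's example instead satisfies the linear-independence condition of Theorem \ref{thm:SparseShift:identifiable} and is identifiable. Both constructions prove the theorem as stated, since it only concerns class membership, but the paper's choice also supports its broader claim that such joint shifts can actually be detected and estimated, while yours inadvertently illustrates why the linear-independence assumption in Theorem \ref{thm:SparseShift:identifiable} cannot be dropped.
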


There are several takeaways. 
First, label shift implies \systemnameISS{} without additional requirements.
In fact, as certain distribution pairs are under \systemnameISS{} but not label shift, \systemnameISS{} is strictly more general than label shift. 
Second, \systemnameISS{} also includes sparse covariate shift. 
When $\sparse=d$, \systemnameISS{} completely unifies both label shift and covariate shift, though it is not identifiable. Identifiable \systemnameISS{}, on the other hand, unifies label shift and sparse covariate shift.
Finally, \systemnameISS{} also allows shifts not covered by label shift and covaraite shift: the correlation between label and (a set of) features can be shifted.  

\section{Shift Estimation and Explanation under Sparse Joint Shift}\label{Sec:SparseShift:theory}

\begin{figure*}[t]
	\centering
	\vspace{-0mm}
	\includegraphics[width=0.99\linewidth]{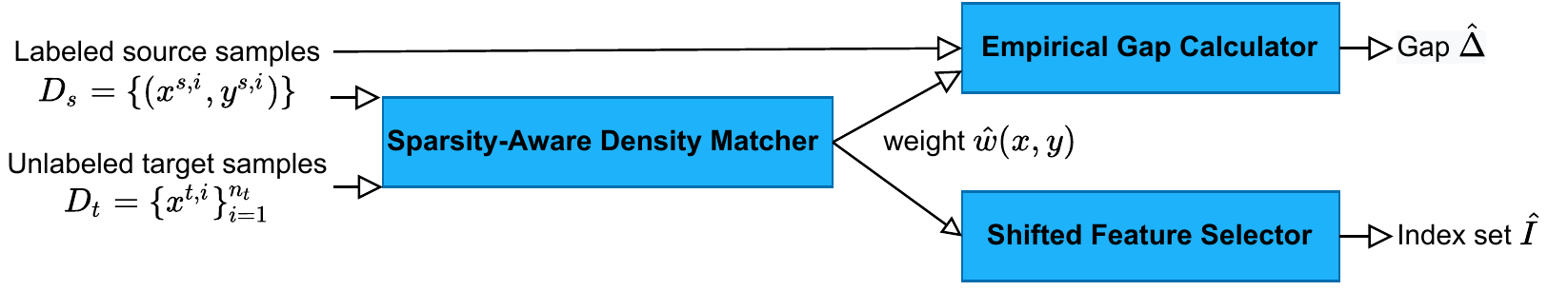}
	\vspace{-0mm}
	\caption{How \systemnameShiftAtt{} works. Given labeled source and unlabeled target data, \systemnameShiftAtt{}  uses a sparsity-aware density matcher to learn a weight function ${\what(\x,y)}$. 
	Next, an empirical gap calculator computes the performance gap $\hat{\Delta}$ by  weighing the source samples with the learned ${\what(\x,y)}$. 
	The shifted feature selector extracts the  features $\hat{I}$ on which the weight function depends heavily.  	
	}
	\label{fig:SparseShift:MainFlow}
\end{figure*}
Now we present \systemnameShiftAtt{}, an algorithmic framework to estimate and explain the performance shift $\Delta$ when the source and target domain is under $\sparse$-\systemnameISS{}. As shown in Figure \ref{fig:SparseShift:MainFlow}, it consists of three components: a sparsity-aware density matcher, an empirical gap calculator, and a shifted feature selector.
Given the labeled source samples and unlabeled target samples, we first adopt the sparsity-aware density matcher to obtain an estimated ratio of the target and source  density functions, denoted by $\hat{w}(\X,y)$.
Next, the empirical gap calculator is responsible to estimate the performance shift $\Delta$ via appropriately reweighting source samples with ratio $\what(\X,y)$.  
Finally, the shifted feature selector picks a set of features as the explanation for the shift.
We explain each component as follows.
\subsection{Sparsity-aware density matcher}
A key component of \systemnameShiftAtt{} is the sparsity-aware density matcher.
Here we want to find some weight function $\what(\X,y)$ to induce an estimated target distribution $\dQhat(\X,y) \triangleq \what(\X,y) \cdot \dP(\X,y)$.
Our goals include (i) a small distance between the estimated target distribution and the true target distribution, (ii)  $\sparse$-\systemnameISS{} between the source and the estimated target distributions, and (iii) flexible parameterization of the weight function.
To achieve those goals, we propose the following  optimization framework
\begin{equation}\label{Problem:SparseShift:OptimizationFramework}
\begin{split}
\min_{\w(\X,y)\in \WSet} & \textit{ } D(\dQ(\X),\dQhat(\X))\\
\mbox{s.t.} & \textit{ } \dQhat(\X) = \sum_{y=1}^{L}\w(\X,y) \cdot \dP(\X,y),\mbox{ and } 
 \w(\X,y) \mbox{ depends on at most $\sparse$ features of $\X$.} \\
\end{split}
\end{equation}
Here, $D(\cdot,\cdot)$ is some distance metric that measures the difference between two density functions. 
We minimize the distance between the induced feature density $\dQhat(\X)$ and the target feature density $\dQ(\X)$. The minimization is not over joint label and feature  distributions since target labels are not available.
The induced feature density function can be easily derived from source density function and the weight function, encoded in the first constraint. 
$\sparse$-\systemnameISS{} is enforced by the second constraint: $\sparse$-\systemnameISS{} means given $\sparse$ features and labels, the distributions of remaining features are fixed across source and the induced domain, which holds if and only if their density ratio $\w(\X,y)$ only depends on those $\sparse$ features.  
$\WSet$ represents the set of all feasible weight functions.
Different parameterization can be easily realized by adopting different $\WSet$. 
Assume access to density functions $\dP(\X,y)$ and $\dQ(\X)$, and a weight function set $\WSet$ containing the true weight $\w^*(\X,y)\triangleq \frac{\dQ(\X,y)}{\dP(\X,y)}$.
One can easily show the above optimization returns the true weight function $\w^*(\X,y)$ for identifiable $\sparse$-\systemnameISS{}.

One benefit of the above framework is the flexibility of concrete instantiations. 
Different design choices, including the distance metric $D(\cdot,\cdot)$ and the weight parameterization space $\WSet$, can fit different feature types, incorporate domain knowledge, and tradeoff different sample and computational complexity. 
We give two instantiations of the above optimization:
\systemnameShiftAtt{}-c for continuous features, and \systemnameShiftAtt{}-d for discrete features.

\paragraph{\systemnameShiftAtt{}-c: SEES for continuous features.}
For continuous features, we  use KL-divergence $D_{KL}(\cdot,\cdot)$ as the distance metric, and initialize the parameterization space $\WSet$ by linear combinations of $K$ fixed basis functions, $\phi_1(\X,y),\phi_2(\X,y),\cdots, \phi_K(\X,y)$.
That is to say, $\WSet=\{\w(\X,y)|\w(\X,y)=\sum_{k=1}^{K} a_{k,y} \phi_k(\X,y), a_{k,y}\geq 0, \Exp_{(\X,y) \sim  \DP}\left[\sum_{k=1}^{K}a_{k,y} \phi_k(\X,y)\right]=1, \alpha_{k,y}\geq 0  \}$. 
The last two constraints ensure $\w(\X,y)\cdot \dP(\X,y)$ is a valid probability density. 
Those basis functions encode users' prior knowledge about the shift.
A simple choice, for example,  is linear functions (when $\X_k\geq 0$): setting $K=d$ and $\phi_k(\X,y)=\X_k$.
To model the dependence on different features, let $e_i$ denote all indexes $k$ such that  $\phi_k(\cdot)$ depends on feature $\X_i$, and introduce a vector $\pmb \beta\in\R^{d}$ such that $\pmb \beta_i\triangleq\sqrt{\sum_{k\in e_i}^{} \sum_{y=1}^{L} a_{k,y}^2}$.
The feature dependence requirement in Problem \ref{Problem:SparseShift:OptimizationFramework} is equivalent to sparsity constraint $\|\pmb \beta\|_0\leq s$. 
We can relax the $0$-norm by $1$-norm and obtain one instantiation  as

\begin{equation*}
\begin{split}
    \max_{a_{1,1},a_{1,2},\cdots,a_{K,L}} & \Exp_{\X \sim \DQ}\left[\log \sum_{y=1}^{L} \dP(y|\X) \sum_{k=1}^{K} a_{k,y} \phi_k(\X,y)\right] + \eta  \sum_{i=1}^{d}\sqrt{\sum_{k\in e_i}^{} \sum_{y=1}^{L} a_{k,y}^2} \\
    \textit{s.t. } &
    \Exp_{(\X,y) \sim  \DP}\left[\sum_{k=1}^{K}a_{k,y} \phi_k(\X,y)\right]=1, \alpha_{k,y}\geq 0 
\end{split}
\end{equation*}
where $\eta>0$ controls the trade-off between sparsity and the KL distance. One benefit of this instantiation is computational efficiency: the constraint is linear in the optimization variables, and the objective is convex.
Thus, the problem is convex and can be  efficiently solved.
The label distribution given feature $\dP(y|\X)$ is unknown but can be approximated by the ML model $f(\cdot)$ trained on the source domain.
Given finite samples, the expectations can be replaced by their empirical estimation.

\paragraph{\systemnameShiftAtt{}-d: \systemnameShiftAtt{} for discrete features.}
Another interesting instantiation exists for discrete features.
With no prior  knowledge, we parameterize $\WSet$ to include all possible $\sparse$-\systemnameISS{}: specifically, $\WSet$ contains all  tuple $(J,\w_J(\X_J,y))$,  where index set $J\subset [d], |J|=\sparse$ represents the shifted features, and weight function $\w_J(\X_J,y)$ only depends on $\X_J$ and $y$.

Features only take finite values, so we can view the density (mass) functions as vectors with finite dimensions. 
Thus, we adopt the squared $\ell_2$ distance, i.e., $D(\pmb z,\pmb z')=\sum_{i=1}^{|\pmb z|}(\pmb z_i-\pmb z_i')^2$ to measure distance.
However, naively measuring $\ell_2$ distance between $\dQ(\X)$ and $\dQhat(\X)$ leads to a computational complexity exponential in $d$.
Instead, we measure the distance on a set of marginal densities: given an index set $J$, for every index set with size $2s$ that contains $J$, denoted by $\kappa$, we measure the squared $\ell_2$ distance between $\dQ(\X_\kappa,f(\X))$ and $\dQhat(\X_\kappa,f(\X))$, and then aggregate over $\kappa$. 
This design leads to the following instantiation of Problem \ref{Problem:SparseShift:OptimizationFramework}

\begin{equation}\label{Prob:SparseShift:discretefull}
    \begin{split}
    \min_{J,\w_J(\X,y)}& \sum_{\kappa:J\subseteq\kappa,|\kappa|=2 \sparse  }^{}\sum_{\bar{f}=1  }^{L}\sum_{
    \x_\kappa \in \mathcal{X}_\kappa    }^{}\|\dQ(\x_\kappa,\bar{f}) - \sum_{\bar{y}=1}^{L} \w_J(\x_J,\bar{y}) \cdot \dP( \x_\kappa, \bar{f}, \bar{y}) \|_2^2, 
    \textit{ } s.t.  |J|=\sparse \\     
    \end{split}
\end{equation}
where $\dQ(\x_\kappa,\bar{f})$ and $\dP(\x_\kappa,\bar{f},\bar{y})$ are short for $\dQ(\X_\kappa={\x}_\kappa,f(\X)=\bar{f})$ and $\dP(\X_\kappa=\x_\kappa,f(\X)=\bar{f}, y=\bar{y})$, respectively.

Compared to the naive approach, the above formulation is much more computationally efficient: the number of parameters in the above objective is only polynomial in the feature dimension $d$.
For fixed $J$, the problem is simply a linear regression over the weight $\w_J(\X_J,y)$ and thus can be efficiently solved.
In practice, one can estimate $\dQ(\x_\kappa,\bar{f})$ and $\dP(\x_\kappa,\bar{f},\bar{y})$ via  labeled source and unlabeled target samples, and then solve the empirical version of the above problem.
Compared to using KL-divergence,
 solving the empirical version produces the correct shifted index set and a weight function close to the true weight $\w^*(\X,y)$ (under mild conditions). This is formally stated as follows.
\begin{theorem}\label{Thm:SparseShift:AlgConvergence}
Consider when all features are discrete, i.e., for each $i$, $\X_i\in\{1,2,\cdots, v\}$. Suppose (i) the source and target are under exact $\sparse$-\systemnameISS{}, (ii) for any set $\mathcal{J} \subset [d], |\mathcal{J}|\leq \sparse$ and any ${\x}\in \mathcal{X}$, the   marginal probability density (or mass) functions $\{\dP(f(\X),\X_{\mathcal{J}\cup\IndSet}=\x_{\mathcal{J}\cup \IndSet},y=i)\}_{i=1}^{d}$ are linearly independent, and (iii) $w(\X,y)$ is bounded by a constant $M$. Then there exists some constant $c$ (independent of $d, \nP$ and $\nQ$), such that if $ \sqrt{\frac{1}{2\nP}} +
 LM \sqrt{\frac{1}{2\nQ}} < c/\sqrt{2 \sparse \log d + \sparse \log {v} +2 \log L + \log 1/\delta}  $,  then with probability $1-\delta$, (i)  the index set $\hat{J}$ learned by Problem \ref{Prob:SparseShift:discretefull} matches the true shift index set $I$, and (ii) the produced weights $\w_{\hat{J}}(\X_{\hat{J}},y)$ satisfies  $\left|\w_{\hat{J}}(\X_{\hat{J}},y) -\w^*(\X,y)\right| \leq O\left(\sqrt{2 \sparse \log d + \sparse \log {v} +2 \log L + \log 1/\delta} \left( \sqrt{\frac{1}{2\nP}} +
 LM \sqrt{\frac{1}{2\nQ}} \right)\right)$.
\end{theorem}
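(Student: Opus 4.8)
The plan is to view Problem~\ref{Prob:SparseShift:discretefull} as the empirical (plug-in) counterpart of a population least-squares problem and to combine uniform concentration with a quantitative version of the identifiability in Theorem~\ref{thm:SparseShift:identifiable}. First I would analyze the population objective, i.e.\ Problem~\ref{Prob:SparseShift:discretefull} with the true $\dP,\dQ$ in place of their estimates. Marginalizing the exact $\sparse$-\systemnameISS{} identity $\dQ(\X,y)=\w^*(\X_\IndSet,y)\,\dP(\X,y)$ (assumption (i)) over $\X_{\kappa^c}$ shows that for $J=\IndSet$ (hence for every $\kappa\supseteq\IndSet$) the choice $\w_J=\w^*$ makes each summand vanish, so the population minimum is $0$ and is attained at the truth. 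The crux is to establish a strictly positive \emph{margin} $\gamma>0$ lower-bounding the population objective for every feasible pair $(J,\w_J)$ whose index set $J$ differs from the true shift set. For fixed $\x_\IndSet$, the linear-independence hypothesis (ii) makes the linear map $\{\w_\IndSet(\x_\IndSet,y)\}_{y}\mapsto \big\{\textstyle\sum_y \w_\IndSet(\x_\IndSet,y)\,\dP(\x_\kappa,\bar f,y)\big\}_{\bar f}$ injective with smallest singular value bounded away from $0$; this is exactly the quantitative form of Theorem~\ref{thm:SparseShift:identifiable}. I would use it twice: to show that a wrong $J$ cannot reproduce the matched $2\sparse$-marginals and therefore incurs objective at least $\gamma$, and later to convert objective value into weight error.

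Second I would prove uniform concentration of the plug-in densities. Each cell probability $\dP(\x_\kappa,\bar f,\bar y)$ and $\dQ(\x_\kappa,\bar f)$ is an average of i.i.d.\ bounded indicators, so Hoeffding's inequality controls its deviation by $\sqrt{1/(2\nP)}$ and $\sqrt{1/(2\nQ)}$ respectively, and a union bound over all relevant $(\kappa,\x_\kappa,\bar f,\bar y)$ cells contributes the factor $\sqrt{2\sparse\log d + \sparse\log v + 2\log L + \log(1/\delta)}$. Propagating these deviations through the objective — where a source term is multiplied by a weight bounded by $M$ and summed over the $L$ labels, producing the $LM$ amplification — yields, with probability at least $1-\delta$, a uniform bound $\epsilon\triangleq O\!\big(\sqrt{2\sparse\log d + \sparse\log v + 2\log L + \log(1/\delta)}\,(\sqrt{1/(2\nP)}+LM\sqrt{1/(2\nQ)})\big)$ on the gap between the empirical and population objectives, simultaneously over all feasible $(J,\w_J)$ with $\|\w_J\|_\infty\le M$.

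Third I would combine the two ingredients. By optimality of the empirical solution $(\hat J,\w_{\hat J})$, its empirical objective is at most that of $(\IndSet,\w^*)$, which is $O(\epsilon)$; uniform concentration then bounds its \emph{population} objective by $O(\epsilon)$ as well. The hypothesis forces $\sqrt{1/(2\nP)}+LM\sqrt{1/(2\nQ)}<c/\sqrt{2\sparse\log d + \sparse\log v + 2\log L + \log(1/\delta)}$, and choosing $c$ to absorb $\gamma$ and the smallest singular value from the first step makes $\epsilon<\gamma$; this contradicts the margin unless $\hat J=\IndSet$, giving claim (i). Conditioned on $\hat J=\IndSet$, the recovered weights solve a linear system whose matrix and right-hand side are within $O(\epsilon)$ of the population system, so a standard matrix-perturbation bound — again using the lower bound on the smallest singular value from (ii) — gives $|\w_{\hat J}(\X_{\hat J},y)-\w^*(\X,y)|=O(\epsilon)$, which is claim (ii).

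The step I expect to be the main obstacle is the first one: upgrading the qualitative linear-independence/identifiability statement into a \emph{quantitative}, uniform margin $\gamma$ together with a uniform lower bound on the smallest singular value, while keeping the absorbing constant $c$ independent of $d,\nP,\nQ$ as the theorem requires. This needs assumption (ii) to hold with a dimension-free spectral gap across the combinatorially many candidate sets $\mathcal{J}$, and care that the separation between the true and incorrect index sets does not wash out as $d$ grows.
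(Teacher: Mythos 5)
Your proposal is correct and follows essentially the same route as the paper's own proof: the paper likewise establishes (a) that the population objective vanishes only at the true pair $(I,\w^*)$ and hence admits a positive margin $c_1=\min_{J\neq I}\dist(J,\w^*_J)$ over wrong index sets, (b) uniform Hoeffding-plus-union-bound concentration of the empirical objective with exactly the $\sqrt{2\sparse\log d+\sparse\log v+2\log L+\log(1/\delta)}\left(\sqrt{1/(2\nP)}+LM\sqrt{1/(2\nQ)}\right)$ rate, and (c) the same optimality-plus-margin argument for index recovery followed by conversion of objective error into weight error, with your matrix-perturbation step appearing there as an equivalent strong-convexity stability lemma for the quadratic least-squares objective. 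The obstacle you flag at the end --- obtaining a dimension-free margin and spectral gap across combinatorially many candidate sets --- is present but glossed in the paper as well, which simply treats $c_1$ as a fixed distribution-dependent constant.
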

Roughly speaking, this statement ensures that, 
when source and target sample sizes are large enough, with high probability, the true shift index set can be identified with finite samples, and the error rate of the learned weight function is approximately the inverse of sample sizes' square root. 

\paragraph{Comparisons of \systemnameShiftAtt{}-c and \systemnameShiftAtt{}{}-d.} \systemnameShiftAtt{}-d enjoys mathematical guarantees, but \systemnameShiftAtt{}-c  can be computationally more efficient. 
In practice, we can  discretize continuous features to use \systemnameShiftAtt{}-d. 

\subsection{Empirical gap calculator and shifted feature selector}
Now we explain how the other two components of \systemnameShiftAtt{} work. 
The empirical gap calculator computes the performance shift $\hat{\Delta}$ via three steps. First, it estimates the source performance by $\frac{1}{\nP}\sum_{i=1}^{\nP} \ell(\X^{s,i},y^{s,i})$.
Next, it estimates the performance on the induced target distribution.
Note that the performance on the induced target domain is $ \int \dQhat(\X,y) \ell(\X,y)  d\X dy = \int \hat{\w}(\X,y)\dP(\X,y) \ell(\X,y)  d\X dy =\Exp_{(\X,y)\sim \DP} \left[ \hat{\w}(\X,y) \ell(\X,y)\right]$.
Thus, we use the weighted loss on the source samples $\frac{1}{\nP}\sum_{i=1}^{\nP} \hat{w}(\X^{s,i},y^{s,i}) \ell(\X^{s,i},y^{s,i})$ as the estimation.
Finally, their  difference, i.e.,  $\hat{\Delta} = \frac{1}{\nP}\sum_{i=1}^{\nP} (\hat{w}(\X^{s,i},y^{s,i})-1) \ell(\X^{s,i},y^{s,i})$ is returned as the estimated performance shift.


The shifted feature selector picks a set of features as the shift explanation. 
For discrete data, the weight function $\what(\X,y)$ learned by the density matcher's instantiation is 
parameterised  as a shifted index $\hat{J}$ and the corresponding weight $\what_{\hat{J}}(\X_{\hat{J}},y)$.    
Thus, a natural choice is to return $\hat{I}=\hat{J}$ as the explanation. 
For continuous data, the weight function is  $\what(\X,y) =\sum_{k=1}^{K} \hat{a}_{k,y} \phi_k(\X,y)$, where $\hat{a}_{k,y}$ is learned by the corresponding instantiation.  
Recall that $e_i$ denotes all basis functions that depends on feature $i$. 
Then  $\hat{\pmb \beta}_i\triangleq\sqrt{\sum_{k\in e_i}^{} \sum_{y=1}^{L} \hat{a}_{k,y}^2}$ can be viewed as the total contribution of feature $i$. Thus, a simple choice is to pick features with the  $\sparse$ largest contributions. Formally, we use $\hat{I} = \{i| \hat{\pmb \beta}_i > \hat{\pmb \beta}_{(d-\sparse)}\}$, where $\hat{\pmb \beta}_{(d-\sparse)}$  is the $d-\sparse$ smallest value in $\hat{\pmb \beta}$.
\section{Experiments}\label{Sec:SparseShift:Experiment}
In this section, we study the performance of \systemnameShiftAtt{} on several real world datasets with synthetic and natural distribution shifts.
Our goal is four-fold:  (i) understand when and how \systemnameShiftAtt{} estimates the performance shift, (ii) evaluate the trade-offs between the estimation performance reached by \systemnameShiftAtt{} and the required dataset sizes, (iii) explore the effects of shift sparsity on \systemnameShiftAtt{}'  performance, and (iv) validate the effectiveness of  \systemnameShiftAtt{}  on datasets with real world distribution shifts.

\paragraph{Datasets, ML models and baselines.} 
Six datasets are used for evaluation purposes.
we first simulate various \systemnameISS{} on BANKCHURN~\cite{dataset_bankchurn}, COVID-19~\cite{dataset_covid19}, and CREDIT~\cite{dataset_credit_yeh2009comparisons} to systematically understand the performance of \systemnameShiftAtt{}.
Next, we apply \systemnameShiftAtt{} on EMPLOY, INCOME, and INSURANCE~\cite{dataset_census_ding2021retiring} with real world distribution shifts and perform an in-depth analysis.
We use a gradient boosting tree model as the ML model, and results for more models can be found in the Appendix. For comparison, we adopt two state-of-the-art methods for comparison: BBSE~\cite{BBSE18} for label shift and KLIEP~\cite{KLIEP07} for covariate shift. More details on the experiments can be found in the Appendix.

\paragraph{Case study.} We start with a case study on the dataset COVID-19.  The task is to predict whether a person tests positive for COVID-19. We simulate a joint shift of the feature aged and label. 
Specifically, both source and target data contain 5000 young and aged individuals.
The positive rate is 40\% for both young and aged group from the source. 
In the target data, we raise the positive rate to 80\% for aged group and 50\% for young group.
This simulates a shift due to a COVID variant more harmful to the elder than its ancestor. 
We adopt \systemnameShiftAtt{}-d as all features are categorical.

\begin{figure}[t]
    \centering
    \includegraphics[width=1.0\linewidth]{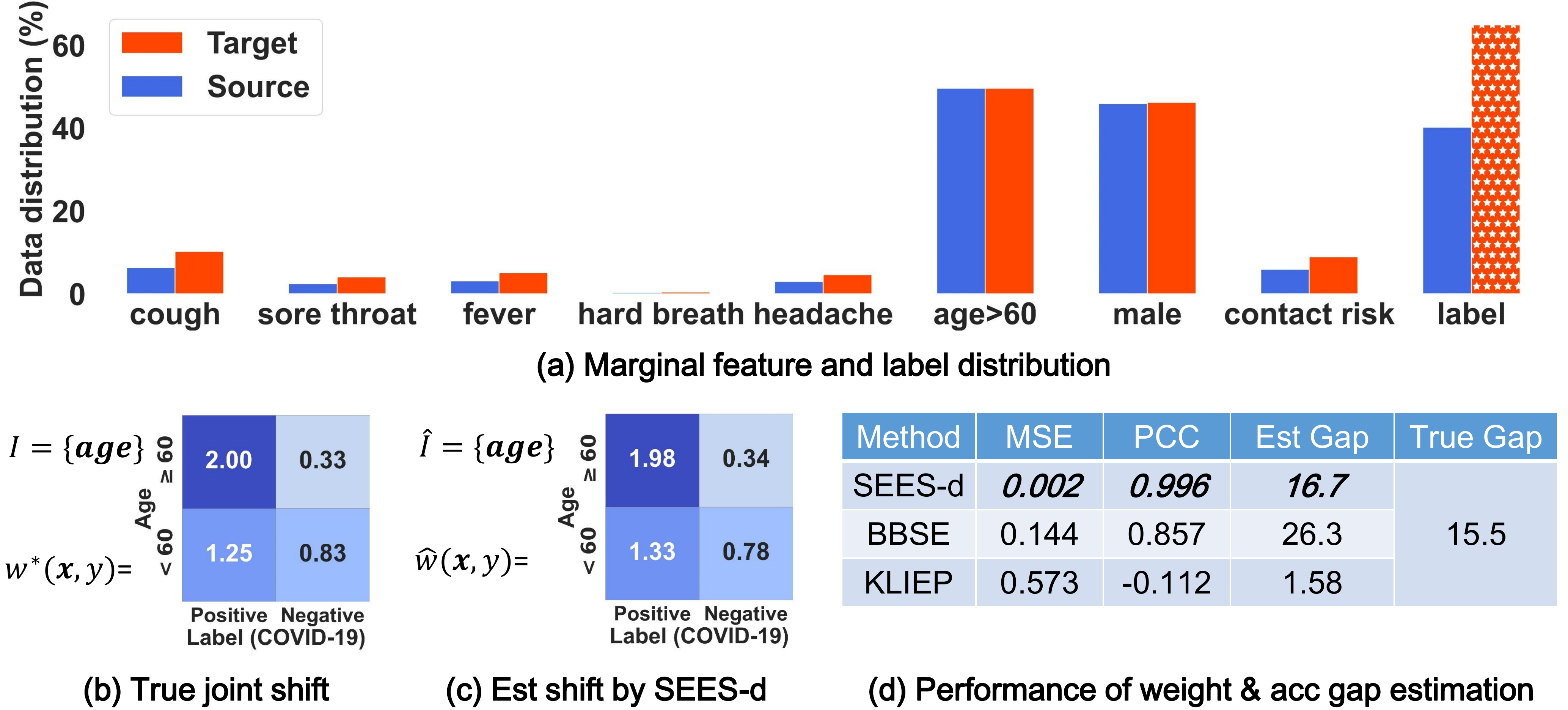}
    \caption{A case study on the  COVID-19 dataset. (a) The marginal distribution of labels and all features. The label on the target domain (the last red bar) is not observable.
    (b) The actual joint shift between source and target data. (c) The mean square error (MSE), pearson correlation coefficient (PCC) between learned and true weights, and the the estimated  accuracy gap. 
    Overall,   \systemnameShiftAtt{}-d significantly improves estimation performance over existing methods.}
    \label{fig:SparseShift:casestudyCOVID}
\end{figure}

Figure \ref{fig:SparseShift:casestudyCOVID} summarizes this case study.
First note that identifying which feature is shifted is not obvious. 
As shown in Figure \ref{fig:SparseShift:casestudyCOVID}(a), the marginal distribution of most features except age and gender has changed from the source to the target. 
The actual joint shift (Figure \ref{fig:SparseShift:casestudyCOVID}(b)) is, on the other hand, due to age group and the labels. 
Identifying shifted age is  challenging as the label on the target (last red bar in Figure \ref{fig:SparseShift:casestudyCOVID}(a)) also shifts but cannot be observed.
On the other hand,  \systemnameShiftAtt{}-d correctly identifies the shifted feature age, and produces a weight function close to the true weight   
(Figure \ref{fig:SparseShift:casestudyCOVID}(b) and (c)).
This is primarily because \systemnameShiftAtt{}-d explicitly exploits the joint shift modeled by \systemnameISS{}. 
In fact, \systemnameShiftAtt{}-d's performance is significantly better than existing methods. As shown in Figure \ref{fig:SparseShift:casestudyCOVID}(d), the mean square error (MSE) between the true weights and learned weights is only 0.002 when adopting  \systemnameShiftAtt{}-d, but 0.144 and 0.573 when using BBSE and KLIEP, respectively.  The Pearson correlation  coefficient (PCC)   between the true weights and weights learned by \systemnameShiftAtt{}-d is 0.996, indicating a strong correlation. The weight estimation performance directly affects how precise the estimated accuracy gap is.
The estimated gap $\hat{\Delta}$ of \systemnameShiftAtt{}-d is 16.7\%, which is close to the true gap (15.5\%).
By contrast, BBSE tends to overestimate the gap (26.3\$) while KLIEP underestimates it (1.58\%). 

\begin{figure} \centering
\begin{subfigure}[data shift]{\label{fig:sample_a}\includegraphics[width=0.225\linewidth]{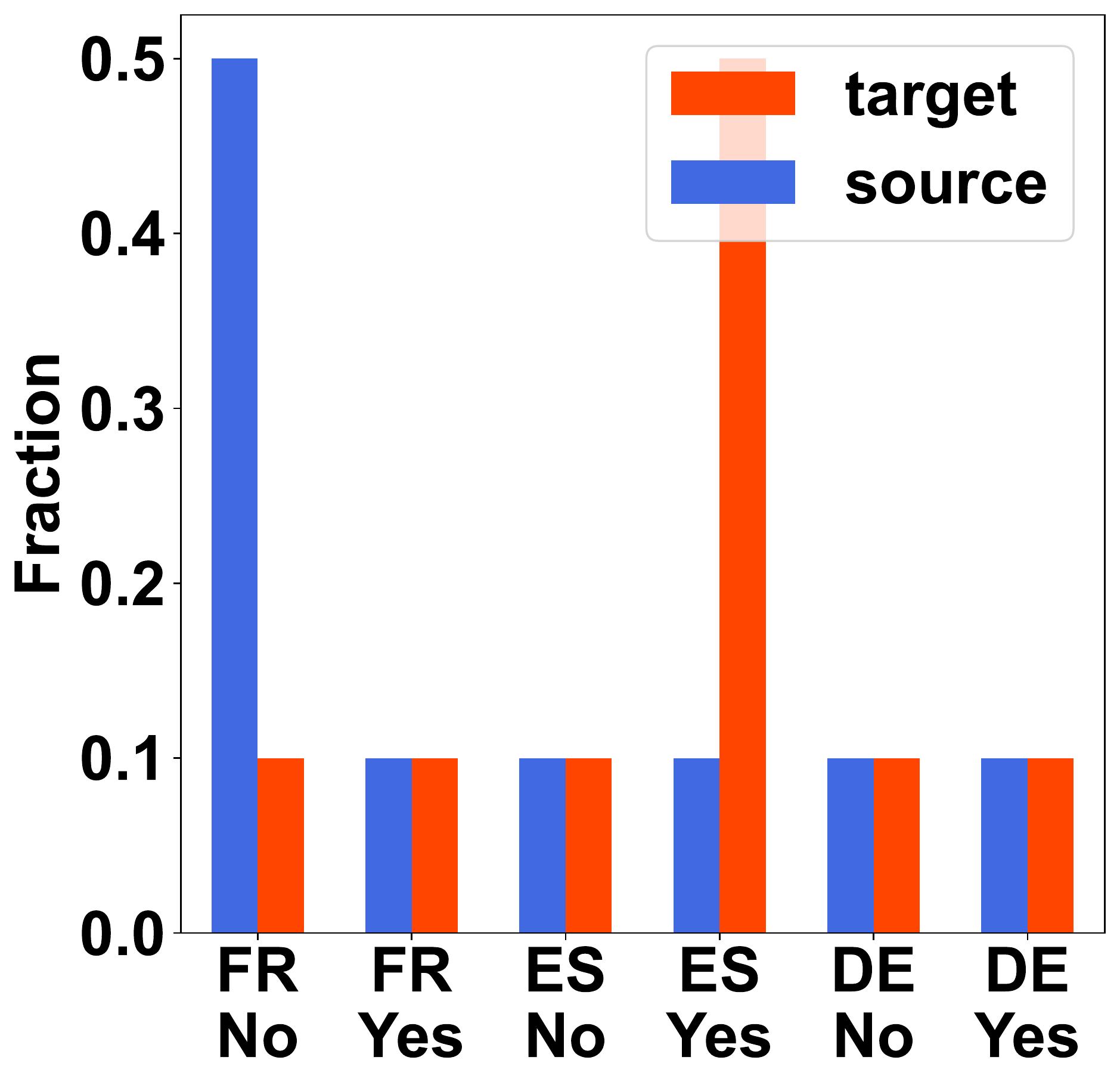}}
\end{subfigure}
\begin{subfigure}[estimated accuracy]{\label{fig:sample_b}\includegraphics[width=0.24\linewidth]{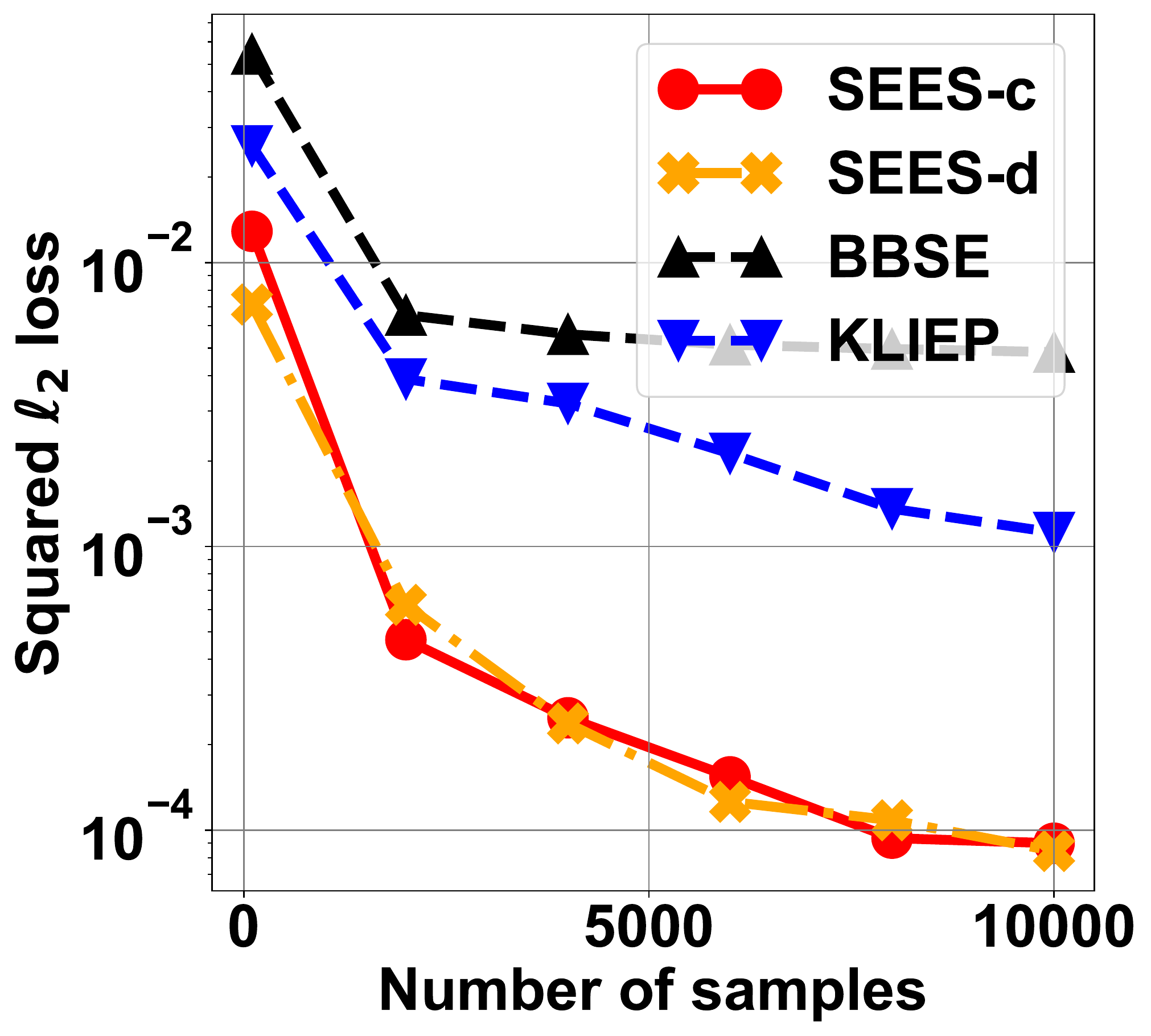}}
\end{subfigure}
\begin{subfigure}[estimated weights]{\label{fig:sample_c}\includegraphics[width=0.24\linewidth]{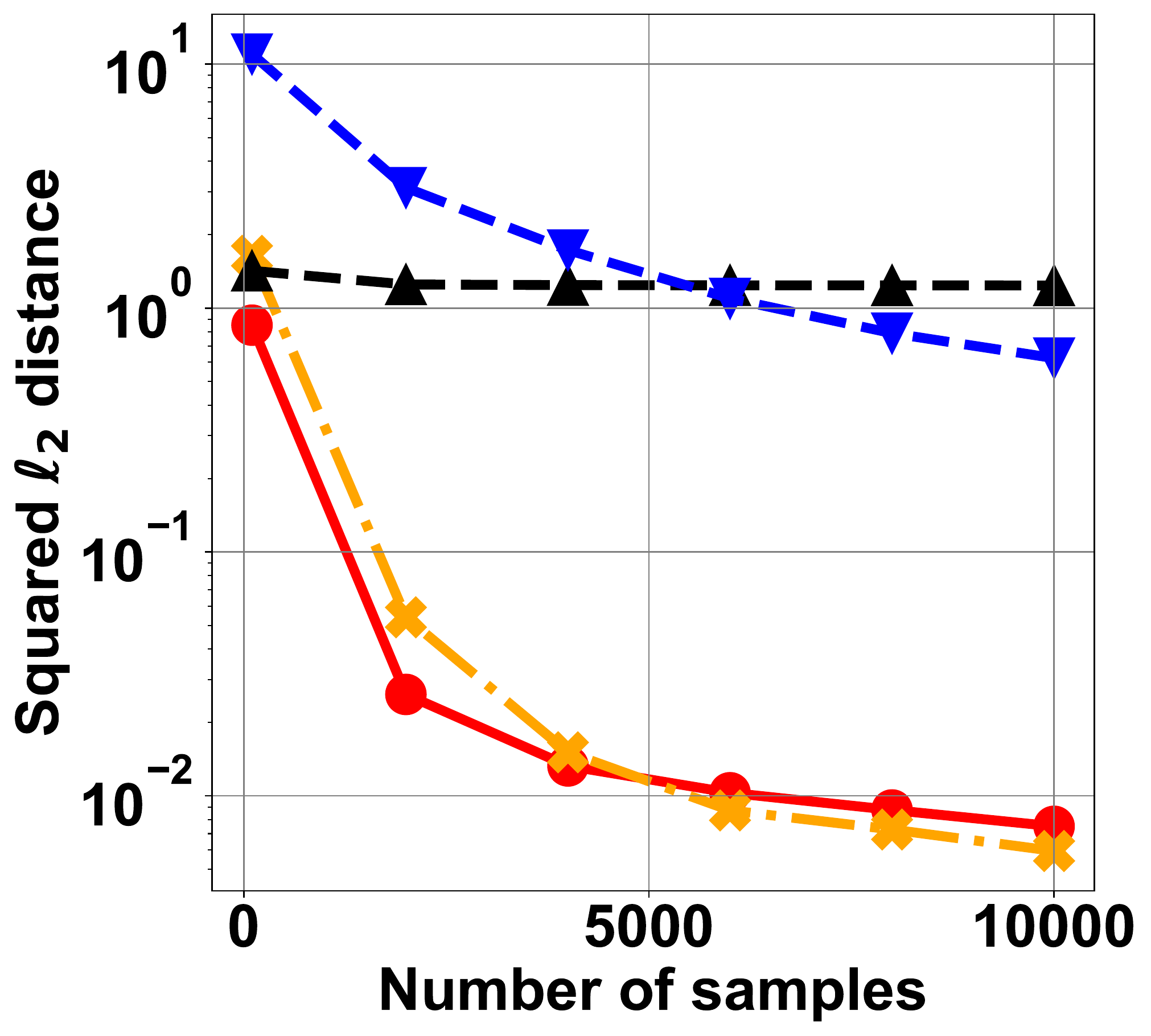}}
\end{subfigure}
\begin{subfigure}[shift discovery rate]{\label{fig:sample_d}\includegraphics[width=0.23\linewidth]{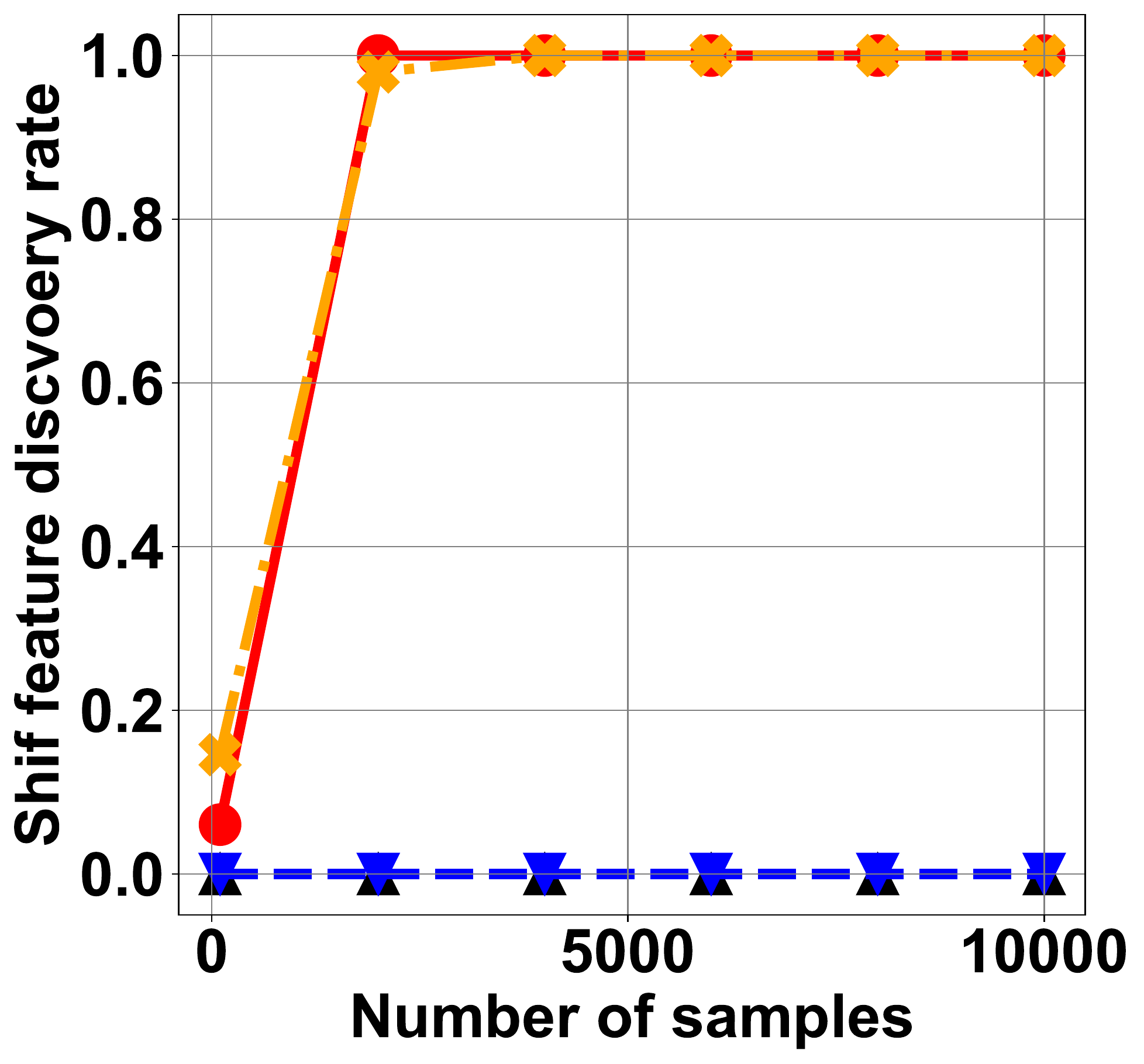}}
\end{subfigure}

\begin{subfigure}[data shift]{\label{fig:sample_e}\includegraphics[width=0.225\linewidth]{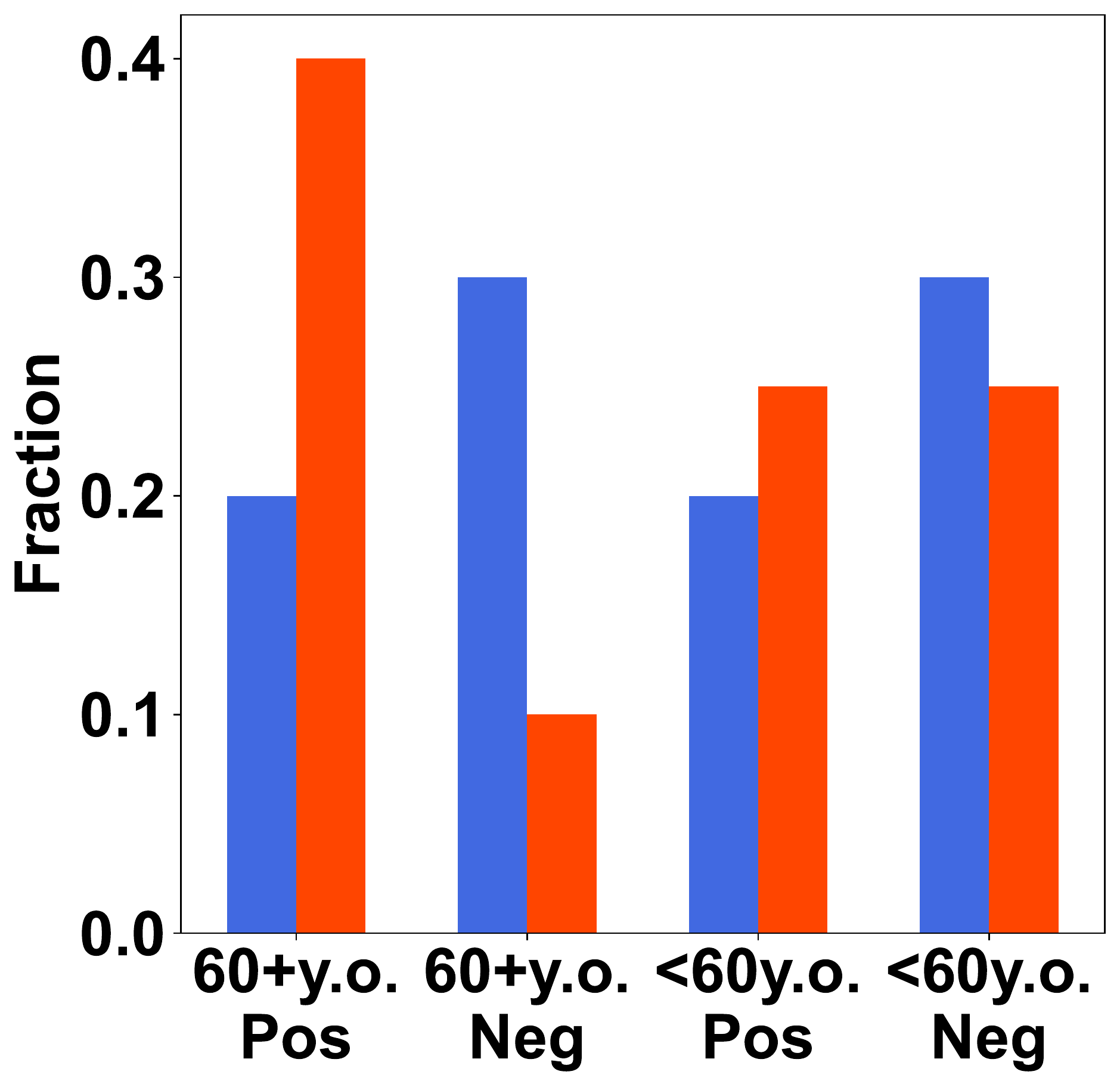}}
\end{subfigure}
\begin{subfigure}[estimated accuracy]{\label{fig:sample_f}\includegraphics[width=0.24\linewidth]{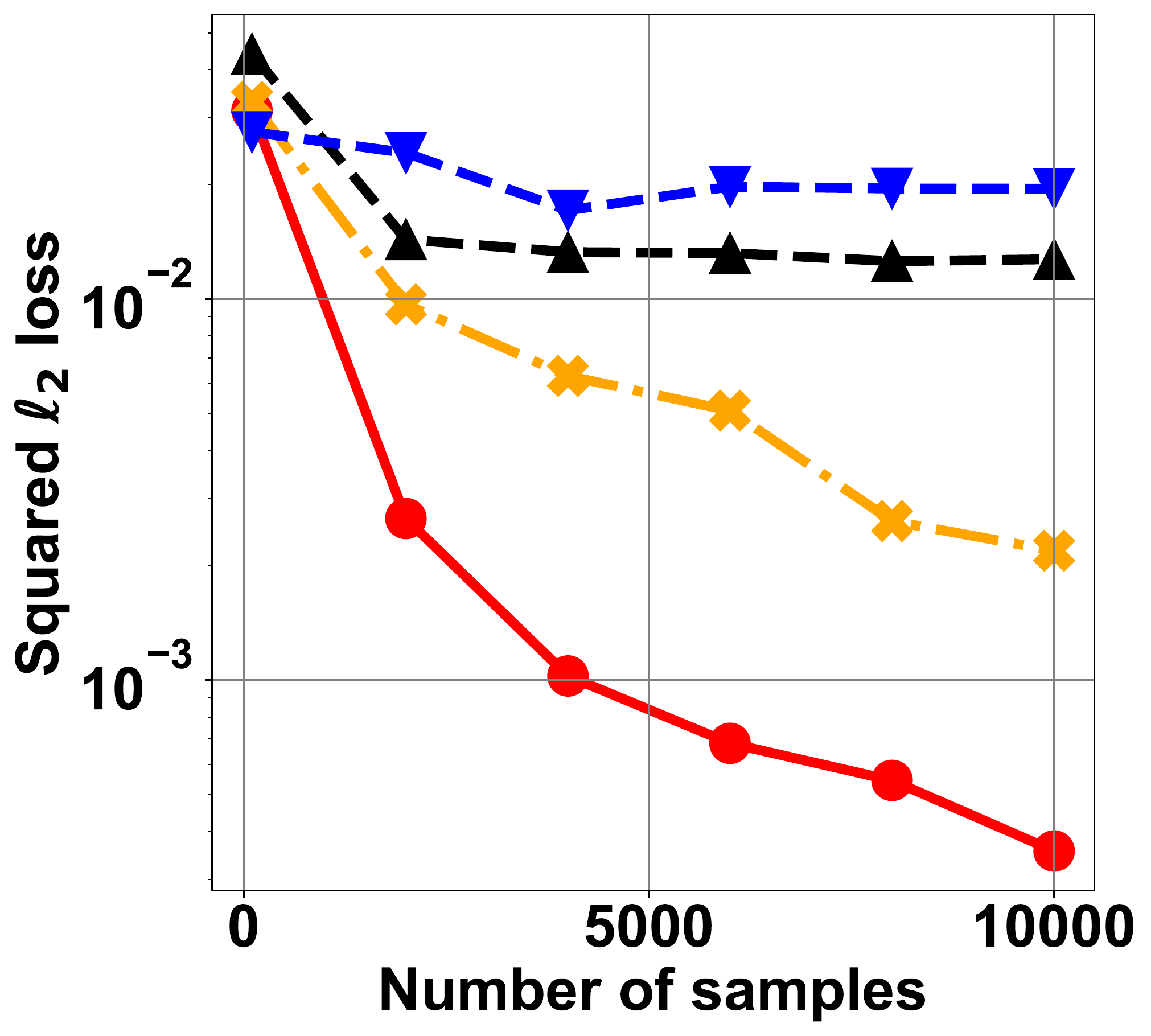}}
\end{subfigure}
\begin{subfigure}[estimated weights]{\label{fig:sample_g}\includegraphics[width=0.24\linewidth]{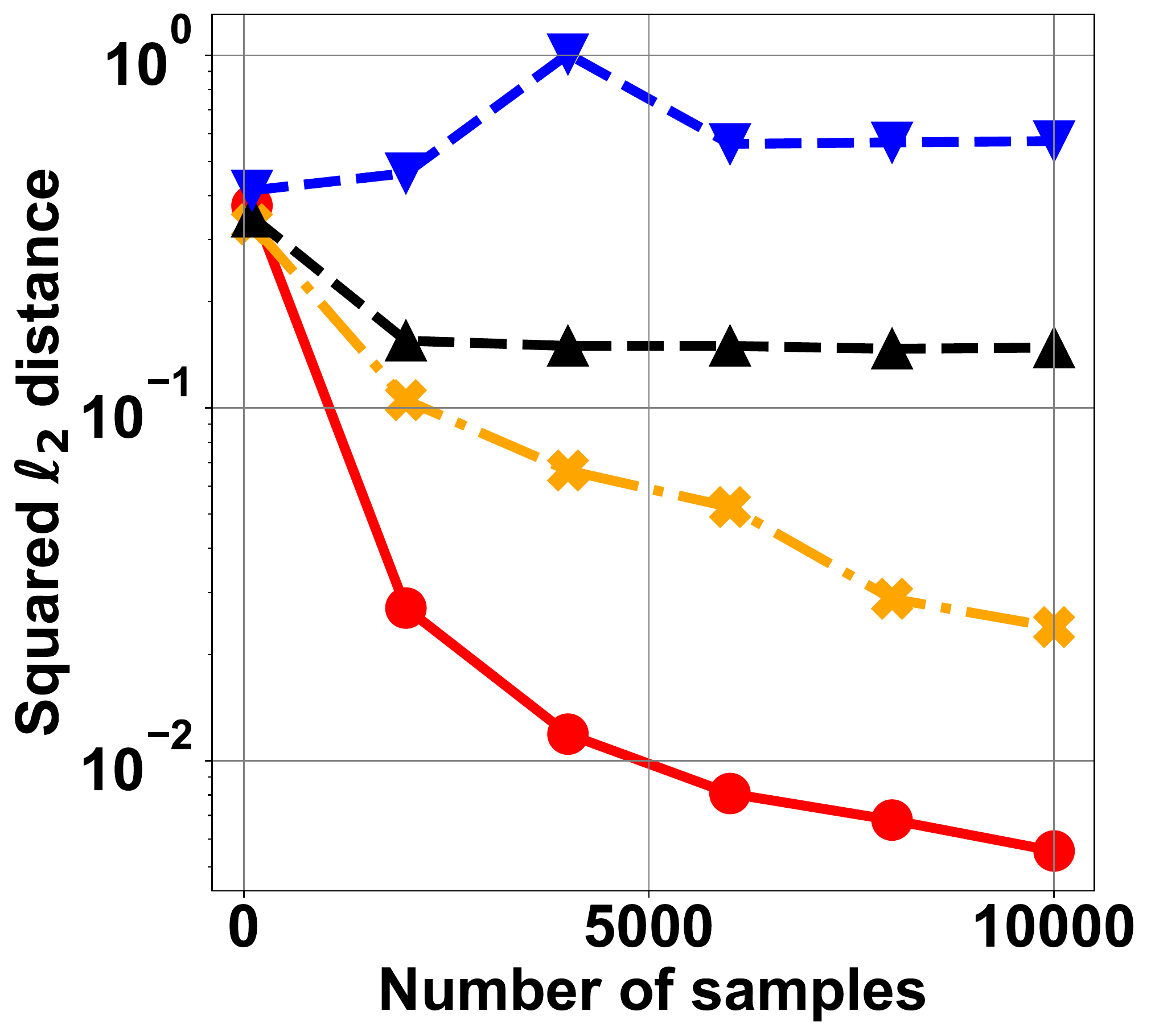}}
\end{subfigure}
\begin{subfigure}[shift discovery rate]{\label{fig:sample_h}\includegraphics[width=0.23\linewidth]{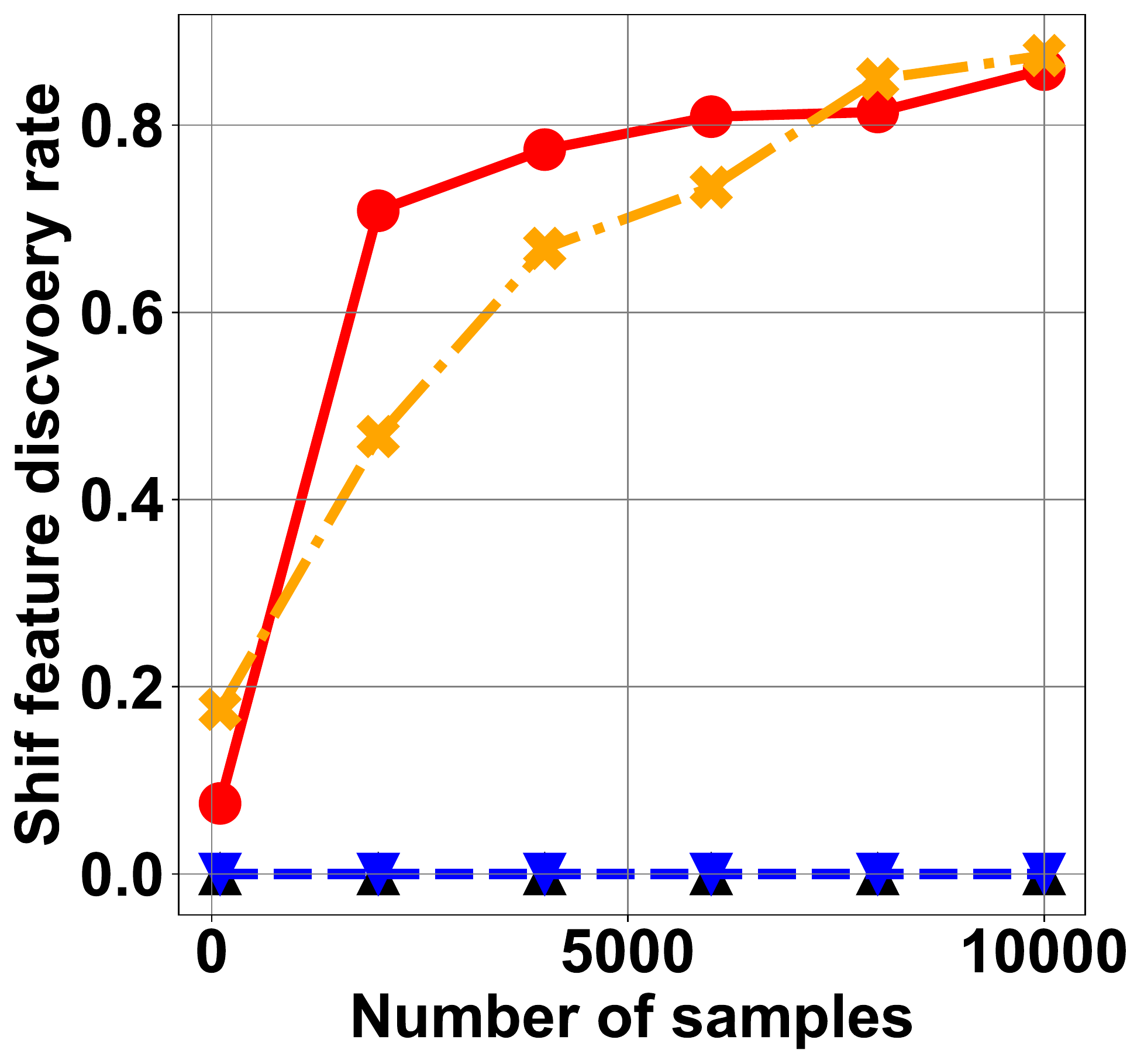}}
\end{subfigure}

\begin{subfigure}[data shift]{\label{fig:sample_i}\includegraphics[width=0.23\linewidth]{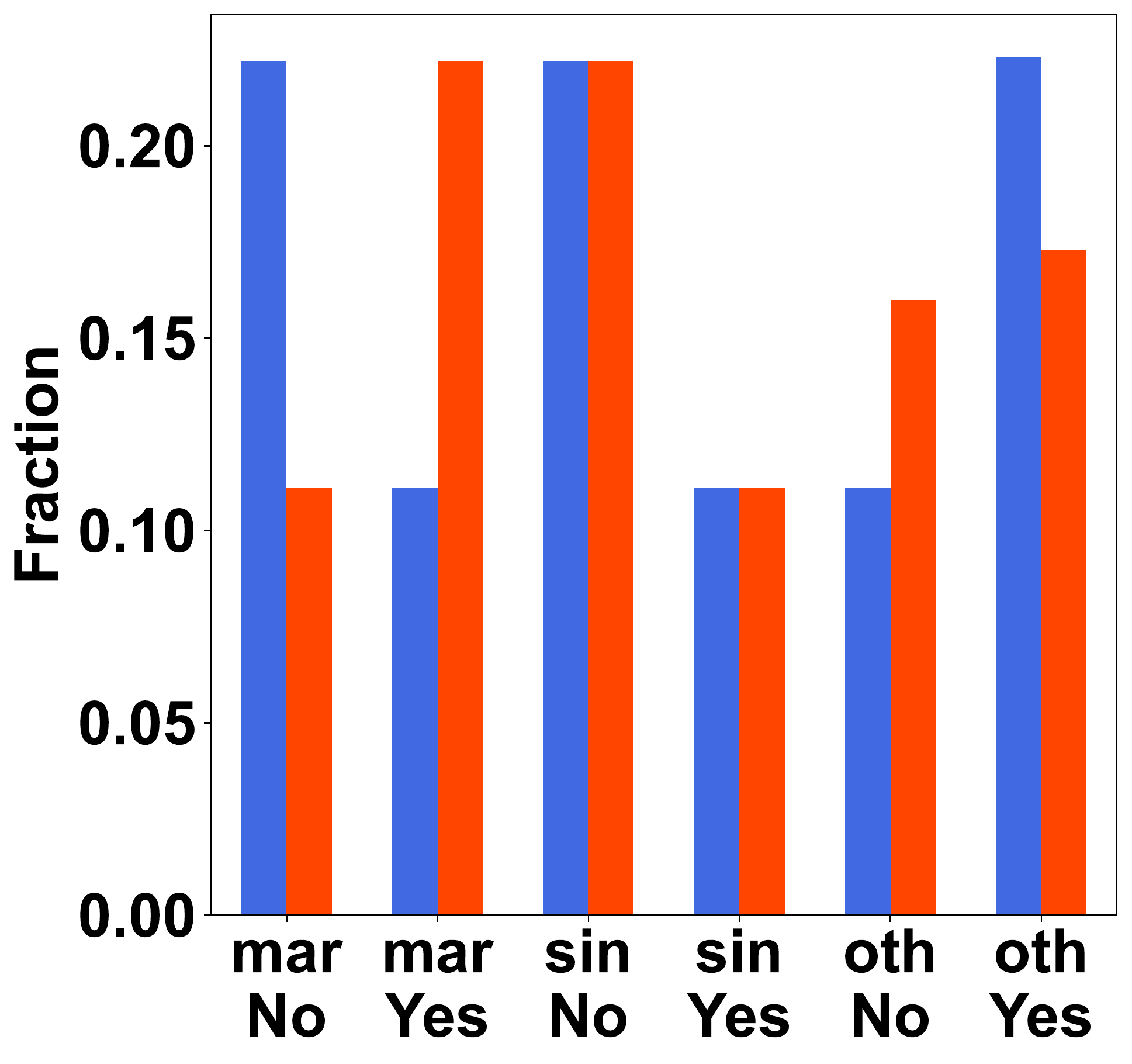}}
\end{subfigure}
\begin{subfigure}[estimated accuracy]{\label{fig:sample_j}\includegraphics[width=0.24\linewidth]{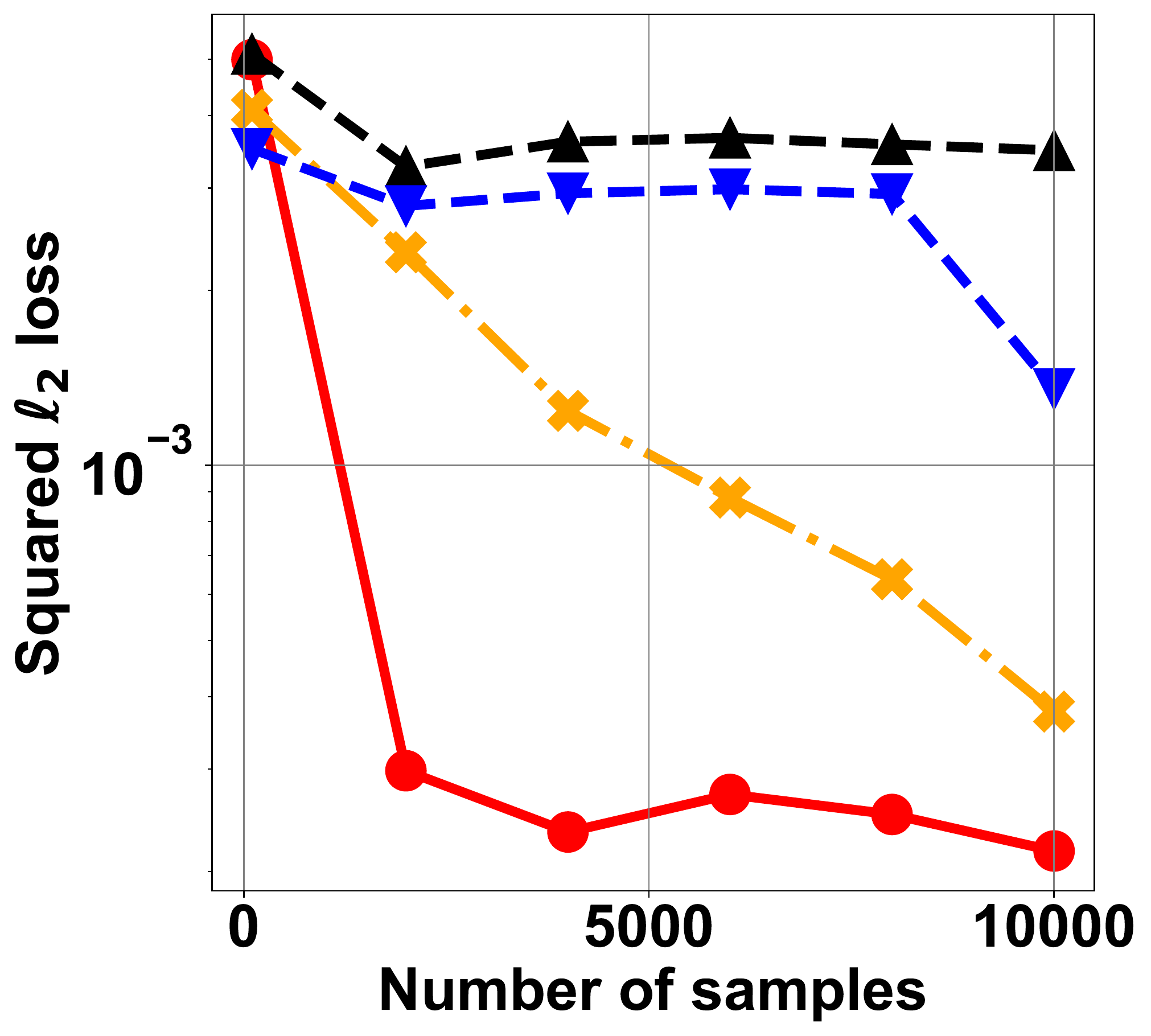}}
\end{subfigure}
\begin{subfigure}[estimated weights]{\label{fig:sample_k}\includegraphics[width=0.24\linewidth]{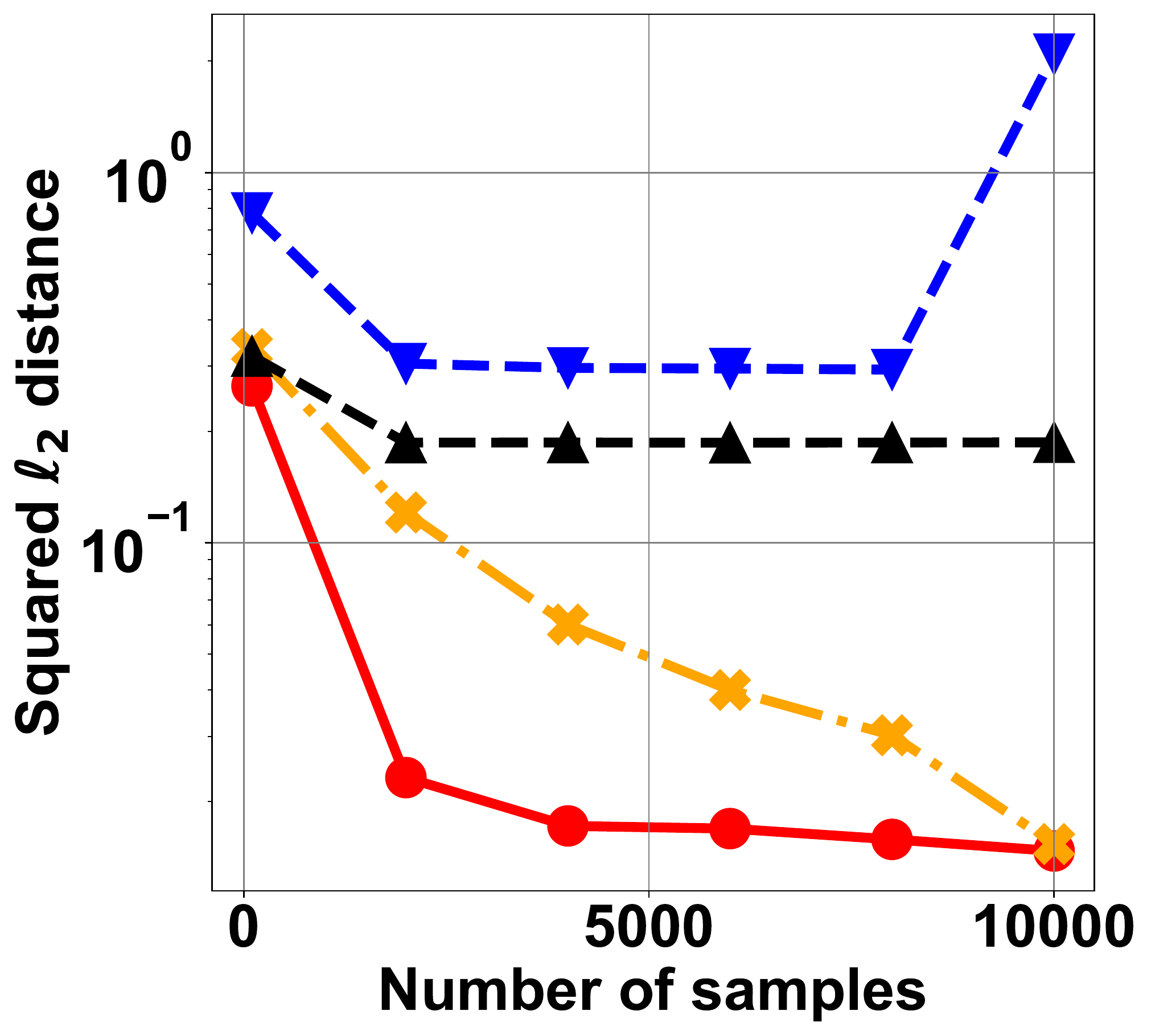}}
\end{subfigure}
\begin{subfigure}[shift discovery rate]{\label{fig:sample_l}\includegraphics[width=0.23\linewidth]{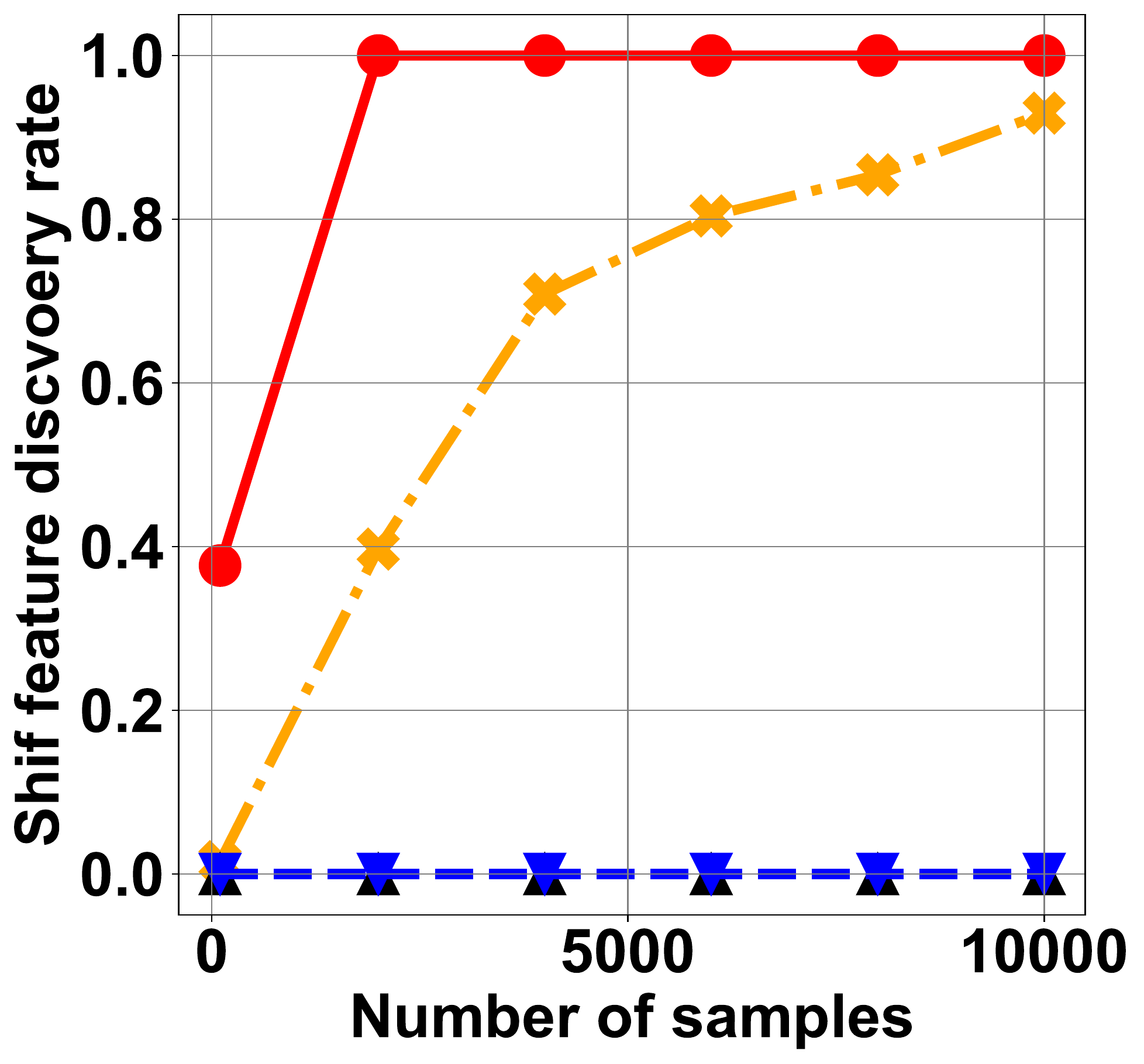}}
\end{subfigure}

\vspace{-0mm}
	\caption{Trade-offs between shift estimation and sample size. We vary the shifted features (first column), and measure performance of the estimated accuracy shift (second column), estimated weights (third column), and how often the true shifted features are discovered (last column). The first, second, and third row corresponds to dataset BANKCHURN, COVID-19, and CREDIT, respectively. Overall, both \systemnameShiftAtt{}-c and \systemnameShiftAtt{}-d consistently outperforms existing estimation approaches.  }\label{fig:SparseShift:tradeoffs}
\end{figure}

\paragraph{Trade-offs between estimation performance and sample size.} Next, we study the trade-offs between estimation performance and the number of available samples. 
For simplicity, we simulate various sparse joint shifts with $s=1$ via (i) specifying marginal distribution of  the shifted feature and labels first,   and (ii) then drawing random samples from the original dataset  conditional on values of specified labels and shifted feature. 
Same number of samples are allocated to both source and target datasets. 
Figure \ref{fig:SparseShift:tradeoffs} shows the simulated data shift (column 1), the squared $\ell_2$ loss of accuracy estimation (column 2), weight estimation (column 3), and the shifted feature discovery rate (column 4) for three datasets. 
Overall, we observe that the estimation error of  \systemnameShiftAtt{}-c and \systemnameShiftAtt{}-d diminishes as the number of samples increases, while that of  BBSE and KLIEP is almost flat. 

\begin{table}[htbp]
  \centering
  \scriptsize
  \caption{Root mean square error of estimated accuracy gap (\%) under real shifts for a gradient boosting model.
  The numbers are averaged over all source-target pairs in each dataset.
  Results for other models (e.g., a neural network) can be found in the Appendix.
  For each dataset and ML model, \systemnameShiftAtt{} provides significant estimation error reduction over baselines.
  }
    \begin{tabular}{|c|c|c|c|c|c|c|c|c|c|c|c|}
    \hline
    \multicolumn{4}{|c|}{EMPLOY}  & \multicolumn{4}{c|}{INCOME}   & \multicolumn{4}{c|}{INSURANCE} \bigstrut\\
    \hline
    SEES-c & SEES-d & BBSE  & KLIEP & SEES-c & SEES-d & BBSE  & KLIEP & SEES-c & SEES-d & BBSE  & KLIEP \bigstrut\\
    \hline
    \textbf{2.90 } & 3.00  & 5.20  & 5.20  & \textbf{1.90 } & 2.40  & 3.00  & 3.40  & \textbf{1.70 } & 2.20  & 2.10  & 5.00  \bigstrut\\
    \hline
    \end{tabular}%
  \label{tab:SparseShift:realshift}%
\end{table}%

\paragraph{Effects of shift sparsity.}
We have focused on $1$-\systemnameISS{} for simplicity, but how the sparsity $s$ affects the estimation performance remains  unknown. To answer this, we fix the number of samples to be 10,000, and then measure the performance of all compared methods for joint distribution of label and different number of features.
Overall, we observe that 
the estimation error often grows as more features shift jointly with the labels. 
This is because more shifted features often implies higher complexity in the distribution shifts and thus more parameters to estimate. 
It is worthy noting that
 $0$-\systemnameISS{} degenerates to label shift, and the performance of \systemnameShiftAtt{}-c and \systemnameShiftAtt{}-d is slightly worse than BBSE (designed for label shift) under $0$-\systemnameISS{}.
This is a trade-off for covering more general distribution shifts. More details can be found in the appendix.


\paragraph{Accuracy gap estimation on real  shifts.}
Finally we validate the effectiveness of \systemnameShiftAtt{} on accuracy estimation with real world shifts. 
EMPLOY and INCOME are  partitioned by geography (states) and INSURANCE is partitioned by time (year). 
For each partition pair, we train a gradient boosting model on one and estimate its performance on the other. Table \ref{tab:SparseShift:realshift}  shows the estimation error averaged over all partition pairs for each dataset.
\systemnameShiftAtt{} consistently outperforms BBSE and KLIEP, and reduces the estimation error by  up to 66\% (1-1.7/5.0).
We also evaluate other models (including a neural network) and observe similar results. 
More results can be  found in  the Appendix. 

\eat{
and thus we train the ML model on one state and estimate its performance on another. 
Insurance is divided based on which year the data is collected (2014, 2016, 2018), and thus we train the model for one year's data and estimate its performance on another. The evaluation was performed for three different models:  a gradient boosting, a neural network, and a decision tree. We leave more details to the Appendix due to space constraints. 
Table \ref{tab:SparseShift:realshift} lists the average accuracy estimation error for each dataset.
Overall, \systemnameShiftAtt{} consistently outperforms BBSE and KLIEP. This is because the joint distribution is not fixed across locations or years, and thus assuming pure label or covariate shift leads to misleading performance estimation. 
}


\section{Conclusion}\label{Sec:SparseShift:Conclusion}
In this paper,  we propose Sparse Joint Shift (\systemnameISS), a new distribution shift model that accounts for both label and covariate shifts.
We show how \systemnameISS{} unifies and generalizes  existing distribution shift models and remains identifiable under reasonable assumptions. We develop  \systemnameShiftAtt{}, an algorithmic framework for unsupervised model performance estimation and explanation under \systemnameISS{}.
Both theoretical analysis and empirical study validates the effectiveness of  \systemnameShiftAtt{}. Our work contributes to making ML more reliable when data can change.
A natural next step is how to improve estimation performance under \systemnameISS{} when a small number of target labels can be queried.
This paper focuses on model performance estimation and explanation. 
 Developing ML models robust to different \systemnameISS{} is also an open question. 

\newpage
{
\small

\bibliography{MLService}

\begin{thebibliography}{10}

\bibitem{dataset_bankchurn}
{The BANKCHURN dataset}.
\newblock
  \url{https://www.kaggle.com/code/kmalit/bank-customer-churn-prediction/notebook}.
\newblock [Accessed 2022].

\bibitem{dataset_covid19}
{The COVID-19 government dataset}.
\newblock \url{https://data.gov.il/dataset/covid-19}.
\newblock [Accessed 2022].

\bibitem{labelshift_mle_alexandari2020maximum}
Amr Alexandari, Anshul Kundaje, and Avanti Shrikumar.
\newblock Maximum likelihood with bias-corrected calibration is hard-to-beat at
  label shift adaptation.
\newblock In {\em International Conference on Machine Learning}, pages
  222--232. PMLR, 2020.

\bibitem{covariate_application_spam_bickel2006dirichlet}
Steffen Bickel and Tobias Scheffer.
\newblock Dirichlet-enhanced spam filtering based on biased samples.
\newblock {\em Advances in neural information processing systems}, 19, 2006.

\bibitem{LabelShift_epidemiologic1966}
Alfred~A Buck and John~J Gart.
\newblock Comparison of a screening test and a reference test in epidemiologic
  studies: I. indices of agreement and their relation to prevalence.
\newblock {\em American Journal of Epidemiology}, 83(3):586--592, 1966.

\bibitem{ModelEval_chen2021detecting}
Jiefeng Chen, Frederick Liu, Besim Avci, Xi~Wu, Yingyu Liang, and Somesh Jha.
\newblock Detecting errors and estimating accuracy on unlabeled data with
  self-training ensembles.
\newblock {\em Advances in Neural Information Processing Systems}, 34, 2021.

\bibitem{chen2021did}
Lingjiao Chen, Matei Zaharia, and James Zou.
\newblock How did the model change? efficiently assessing machine learning api
  shifts.
\newblock In {\em International Conference on Learning Representations}, 2021.

\bibitem{ModelEval_mandoline_chen2021}
Mayee Chen, Karan Goel, Nimit~S Sohoni, Fait Poms, Kayvon Fatahalian, and
  Christopher R{\'e}.
\newblock Mandoline: Model evaluation under distribution shift.
\newblock In {\em International Conference on Machine Learning}, pages
  1617--1629. PMLR, 2021.

\bibitem{ModelEval_MIT_chuang2020estimating}
Ching-Yao Chuang, Antonio Torralba, and Stefanie Jegelka.
\newblock Estimating generalization under distribution shifts via
  domain-invariant representations.
\newblock {\em arXiv preprint arXiv:2007.03511}, 2020.

\bibitem{ModelEval_rotation_deng2021does}
Weijian Deng, Stephen Gould, and Liang Zheng.
\newblock What does rotation prediction tell us about classifier accuracy under
  varying testing environments?
\newblock In {\em International Conference on Machine Learning}, pages
  2579--2589. PMLR, 2021.

\bibitem{ModelEval_featurestats_deng2021labels}
Weijian Deng and Liang Zheng.
\newblock Are labels always necessary for classifier accuracy evaluation?
\newblock In {\em Proceedings of the IEEE/CVF Conference on Computer Vision and
  Pattern Recognition}, pages 15069--15078, 2021.

\bibitem{dataset_census_ding2021retiring}
Frances Ding, Moritz Hardt, John Miller, and Ludwig Schmidt.
\newblock Retiring adult: New datasets for fair machine learning.
\newblock {\em Advances in Neural Information Processing Systems}, 34, 2021.

\bibitem{ModelEval_donmez2010unsupervised}
Pinar Donmez, Guy Lebanon, and Krishnakumar Balasubramanian.
\newblock Unsupervised supervised learning i: Estimating classification and
  regression errors without labels.
\newblock {\em Journal of Machine Learning Research}, 11(4), 2010.

\bibitem{forman2008quantifying}
George Forman.
\newblock Quantifying counts and costs via classification.
\newblock {\em Data Mining and Knowledge Discovery}, 17(2):164--206, 2008.

\bibitem{covariate_gretton2009}
Arthur Gretton, Alex Smola, Jiayuan Huang, Marcel Schmittfull, Karsten
  Borgwardt, and Bernhard Sch{\"o}lkopf.
\newblock Covariate shift by kernel mean matching.
\newblock {\em Dataset shift in machine learning}, 3(4):5, 2009.

\bibitem{DemographicEffects2019}
Patrick Grother, Mei Ngan, and Kayee Hanaoka.
\newblock Face recognition vendor test part 3: Demographic effects, 2019-12-19
  2019.

\bibitem{modeleval_confidence_guillory2021predicting}
Devin Guillory, Vaishaal Shankar, Sayna Ebrahimi, Trevor Darrell, and Ludwig
  Schmidt.
\newblock Predicting with confidence on unseen distributions.
\newblock In {\em Proceedings of the IEEE/CVF International Conference on
  Computer Vision}, pages 1134--1144, 2021.

\bibitem{guillory2021predicting}
Devin Guillory, Vaishaal Shankar, Sayna Ebrahimi, Trevor Darrell, and Ludwig
  Schmidt.
\newblock Predicting with confidence on unseen distributions.
\newblock In {\em Proceedings of the IEEE/CVF International Conference on
  Computer Vision}, pages 1134--1144, 2021.

\bibitem{covaraite_applications_humanactivity_hachiya2012importance}
Hirotaka Hachiya, Masashi Sugiyama, and Naonori Ueda.
\newblock Importance-weighted least-squares probabilistic classifier for
  covariate shift adaptation with application to human activity recognition.
\newblock {\em Neurocomputing}, 80:93--101, 2012.

\bibitem{jiang2019fantastic}
Yiding Jiang, Behnam Neyshabur, Hossein Mobahi, Dilip Krishnan, and Samy
  Bengio.
\newblock Fantastic generalization measures and where to find them.
\newblock {\em arXiv preprint arXiv:1912.02178}, 2019.

\bibitem{covariate_emotion_jirayucharoensak2014eeg}
Suwicha Jirayucharoensak, Setha Pan-Ngum, and Pasin Israsena.
\newblock Eeg-based emotion recognition using deep learning network with
  principal component based covariate shift adaptation.
\newblock {\em The Scientific World Journal}, 2014, 2014.

\bibitem{ImbalanceData19}
Harsurinder Kaur, Husanbir~Singh Pannu, and Avleen~Kaur Malhi.
\newblock A systematic review on imbalanced data challenges in machine
  learning: Applications and solutions.
\newblock {\em {ACM} Comput. Surv.}, 52(4):79:1--79:36, 2019.

\bibitem{Wild_Data21}
Pang~Wei Koh, Shiori Sagawa, Henrik Marklund, Sang~Michael Xie, Marvin Zhang,
  Akshay Balsubramani, Weihua Hu, Michihiro Yasunaga, Richard~Lanas Phillips,
  Irena Gao, et~al.
\newblock Wilds: A benchmark of in-the-wild distribution shifts.
\newblock In {\em International Conference on Machine Learning}, pages
  5637--5664. PMLR, 2021.

\bibitem{BBSE18}
Zachary Lipton, Yu-Xiang Wang, and Alexander Smola.
\newblock Detecting and correcting for label shift with black box predictors.
\newblock In {\em International conference on machine learning}, pages
  3122--3130. PMLR, 2018.

\bibitem{LabelShift_econ1977}
Charles~F Manski and Steven~R Lerman.
\newblock The estimation of choice probabilities from choice based samples.
\newblock {\em Econometrica: Journal of the Econometric Society}, pages
  1977--1988, 1977.

\bibitem{covariateshift_unify_moreno2012unifying}
Jose~G Moreno-Torres, Troy Raeder, Roc{\'\i}o Alaiz-Rodr{\'\i}guez, Nitesh~V
  Chawla, and Francisco Herrera.
\newblock A unifying view on dataset shift in classification.
\newblock {\em Pattern recognition}, 45(1):521--530, 2012.

\bibitem{park2020calibrated}
Sangdon Park, Osbert Bastani, James Weimer, and Insup Lee.
\newblock Calibrated prediction with covariate shift via unsupervised domain
  adaptation.
\newblock In {\em International Conference on Artificial Intelligence and
  Statistics}, pages 3219--3229. PMLR, 2020.

\bibitem{EmpiricalShiftStudy19}
Stephan Rabanser, Stephan G{\"u}nnemann, and Zachary Lipton.
\newblock Failing loudly: An empirical study of methods for detecting dataset
  shift.
\newblock {\em Advances in Neural Information Processing Systems}, 32, 2019.

\bibitem{recht2019imagenet}
Benjamin Recht, Rebecca Roelofs, Ludwig Schmidt, and Vaishaal Shankar.
\newblock Do imagenet classifiers generalize to imagenet?
\newblock In {\em International Conference on Machine Learning}, pages
  5389--5400. PMLR, 2019.

\bibitem{covariate_fairness_rezaei2021}
Ashkan Rezaei, Anqi Liu, Omid Memarrast, and Brian~D Ziebart.
\newblock Robust fairness under covariate shift.
\newblock In {\em Proceedings of the AAAI Conference on Artificial
  Intelligence}, volume~35, pages 9419--9427, 2021.

\bibitem{covariate_robustness_schneider2020}
Steffen Schneider, Evgenia Rusak, Luisa Eck, Oliver Bringmann, Wieland Brendel,
  and Matthias Bethge.
\newblock Improving robustness against common corruptions by covariate shift
  adaptation.
\newblock {\em Advances in Neural Information Processing Systems},
  33:11539--11551, 2020.

\bibitem{covariate_firstpaper_shimodaira2000improving}
Hidetoshi Shimodaira.
\newblock Improving predictive inference under covariate shift by weighting the
  log-likelihood function.
\newblock {\em Journal of statistical planning and inference}, 90(2):227--244,
  2000.

\bibitem{covariate_sugiyama2007}
Masashi Sugiyama, Matthias Krauledat, and Klaus-Robert M{\"u}ller.
\newblock Covariate shift adaptation by importance weighted cross validation.
\newblock {\em Journal of Machine Learning Research}, 8(5), 2007.

\bibitem{KLIEP07}
Masashi Sugiyama, Shinichi Nakajima, Hisashi Kashima, Paul Buenau, and Motoaki
  Kawanabe.
\newblock Direct importance estimation with model selection and its application
  to covariate shift adaptation.
\newblock {\em Advances in neural information processing systems}, 20, 2007.

\bibitem{covariate_sugiyama2008direct}
Masashi Sugiyama, Taiji Suzuki, Shinichi Nakajima, Hisashi Kashima, Paul von
  B{\"u}nau, and Motoaki Kawanabe.
\newblock Direct importance estimation for covariate shift adaptation.
\newblock {\em Annals of the Institute of Statistical Mathematics},
  60(4):699--746, 2008.

\bibitem{taori2020measuring}
Rohan Taori, Achal Dave, Vaishaal Shankar, Nicholas Carlini, Benjamin Recht,
  and Ludwig Schmidt.
\newblock Measuring robustness to natural distribution shifts in image
  classification.
\newblock {\em Advances in Neural Information Processing Systems},
  33:18583--18599, 2020.

\bibitem{covariate_conformal_tibshirani2019}
Ryan~J Tibshirani, Rina Foygel~Barber, Emmanuel Candes, and Aaditya Ramdas.
\newblock Conformal prediction under covariate shift.
\newblock {\em Advances in neural information processing systems}, 32, 2019.

\bibitem{ModelEval_Lazy_welinder2013}
Peter Welinder, Max Welling, and Pietro Perona.
\newblock A lazy man's approach to benchmarking: Semisupervised classifier
  evaluation and recalibration.
\newblock In {\em Proceedings of the IEEE Conference on Computer Vision and
  Pattern Recognition}, pages 3262--3269, 2013.

\bibitem{LabelShift_Online2021}
Ruihan Wu, Chuan Guo, Yi~Su, and Kilian~Q Weinberger.
\newblock Online adaptation to label distribution shift.
\newblock {\em Advances in Neural Information Processing Systems}, 34, 2021.

\bibitem{covaraite_distcompare_yamada2011relative}
Makoto Yamada, Taiji Suzuki, Takafumi Kanamori, Hirotaka Hachiya, and Masashi
  Sugiyama.
\newblock Relative density-ratio estimation for robust distribution comparison.
\newblock {\em Advances in neural information processing systems}, 24, 2011.

\bibitem{dataset_credit_yeh2009comparisons}
I-Cheng Yeh and Che-hui Lien.
\newblock The comparisons of data mining techniques for the predictive accuracy
  of probability of default of credit card clients.
\newblock {\em Expert systems with applications}, 36(2):2473--2480, 2009.

\bibitem{LabelShift_active2021}
Eric Zhao, Anqi Liu, Animashree Anandkumar, and Yisong Yue.
\newblock Active learning under label shift.
\newblock In {\em International Conference on Artificial Intelligence and
  Statistics}, pages 3412--3420. PMLR, 2021.

\end{thebibliography}
\bibliographystyle{plain}
}

\section*{Checklist}

\begin{enumerate}

\item For all authors...
\begin{enumerate}
  \item Do the main claims made in the abstract and introduction accurately reflect the paper's contributions and scope?
    \answerYes{}{}
  \item Did you describe the limitations of your work?
    \answerYes{}. See Appendix \ref{sec:SparseShift:Discussions}.
  \item Did you discuss any potential negative societal impacts of your work?
    \answerYes{}. See Appendix \ref{sec:SparseShift:Discussions}.
  \item Have you read the ethics review guidelines and ensured that your paper conforms to them?
    \answerYes{}.
\end{enumerate}

\item If you are including theoretical results...
\begin{enumerate}
  \item Did you state the full set of assumptions of all theoretical results?
    \answerYes{}.
        \item Did you include complete proofs of all theoretical results?
    \answerYes.{See Appendix \ref{Sec:SparseShift:theorydtails}} 
\end{enumerate}

\item If you ran experiments...
\begin{enumerate}
  \item Did you include the code, data, and instructions needed to reproduce the main experimental results (either in the supplemental material or as a URL)?
    \answerYes{}
  \item Did you specify all the training details (e.g., data splits, hyperparameters, how they were chosen)?
    \answerYes{}. Please see Appendix \ref{sec:SparseShift:experimentdetails}.
        \item Did you report error bars (e.g., with respect to the random seed after running experiments multiple times)?
    \answerYes{}
        \item Did you include the total amount of compute and the type of resources used (e.g., type of GPUs, internal cluster, or cloud provider)?
    \answerYes{}. Please see Appendix \ref{sec:SparseShift:experimentdetails}.
\end{enumerate}

\item If you are using existing assets (e.g., code, data, models) or curating/releasing new assets...
\begin{enumerate}
  \item If your work uses existing assets, did you cite the creators?
    \answerYes{}.
  \item Did you mention the license of the assets?
    \answerNA{}
  \item Did you include any new assets either in the supplemental material or as a URL?
    \answerNA{}
  \item Did you discuss whether and how consent was obtained from people whose data you're using/curating?
    \answerNA{}
  \item Did you discuss whether the data you are using/curating contains personally identifiable information or offensive content?
    \answerNA{}
\end{enumerate}

\item If you used crowdsourcing or conducted research with human subjects...
\begin{enumerate}
  \item Did you include the full text of instructions given to participants and screenshots, if applicable?
    \answerNA{}.
  \item Did you describe any potential participant risks, with links to Institutional Review Board (IRB) approvals, if applicable?
    \answerNA{}.
  \item Did you include the estimated hourly wage paid to participants and the total amount spent on participant compensation?
    \answerNA{}.
\end{enumerate}

\end{enumerate}


\newpage

\appendix
\section{Broader Impact and Limitation Discussion }\label{sec:SparseShift:Discussions}
Monitoring,  estimating, and explaining performance of deployed ML models is a growing area with significant economic and social  impact.  
In this paper, we propose \systemnameISS{}, a new data distribution shift model to consider when both labels and features shift after model deployment. 
We show how \systemnameISS{} generalizes existing data shift models, and further propose \systemnameShiftAtt{}, a generic framework that efficiently explains and estimates an ML model's performance under \systemnameISS{}. 
This may serve as a monitoring tool to  help ML practitioners  recognize performance changes, discover potential fairness issues and take appropriate business  decisions (e.g., switching to other models or debugging the existing ones).  
One limitation in general is adaption to continuously changing data streams.
An online algorithm for performance estimation and explanation under \systemnameISS{} is in need to address this challenge. 
We will also open source our prototype of \systemnameShiftAtt{} serving as a resource for broad community to use. 

\section{Missing Proofs}\label{Sec:SparseShift:theorydtails}
We provide all missing proofs in this section.

\subsection{Proof of Theorem \ref{thm:SparseShift:identifiable}}
\begin{proof}
We prove the statement by contradiction. 
Suppose not. That is to say, there exists a distribution $p_a(\X,\Y)$, such that $p_a(\X)=\dQ(\X)$, and there exists some set $\mathcal{J}\subset [d], |\mathcal{J}|\leq \sparse$, such that $p_a(\X_{\mathcal{J}^c}|\X_\mathcal{J},\Y) = \dP(\X_{\mathcal{J}^c}|\X_{\mathcal{J}},\Y)$ and $p_a(\X,\Y)\not=\dQ(\X,\Y)$.
Let $w_a(\X,\Y)\triangleq \frac{p_a(\X,\Y)}{\dP(\X,\Y)}$ denote the ratio between this alternative distribution $p_a$ and the source distribution.
Recall that $\w^*(\X,\Y)=\frac{\dQ(\X,\Y)}{\dP(\X,\Y)}$.
To show the contradiction, we simply need to show $p_a(\X,\Y)=\dQ(\X,\Y)$, which is equivalent to show $w_a(\X,\Y) = w^*(\X,\Y)$.

By the $\sparse$-\systemnameISS{} assumption, $ \dQ(\X_\IndSet^c|\X_\IndSet,\Y)= \dP(\X_\IndSet^c|\X_\IndSet,\Y)$, we have 
\begin{equation*}
    \w^*(\X,\Y) = \frac{\dQ(\X,\Y)}{\dP(\X,\Y)} = \frac{\dQ(\X_\IndSet,\Y) \dQ(\X_\IndSet^c|\X_\IndSet,\Y)}{\dP(\X_\IndSet,\Y) \dP(\X_\IndSet^c|\X_\IndSet,\Y)} = \frac{\dQ(\X_\IndSet,\Y) }{\dP(\X_\IndSet,\Y)}
\end{equation*}
Similarly, by the assumption  $\dA(\X_\IndSetJ^c|\X_\IndSetJ,\Y)= \dP(\X_\IndSetJ^c|\X_\IndSetJ,\Y)$, we have 
\begin{equation*}
    w_a(\X,\Y) = \frac{\dA(\X,\Y)}{\dP(\X,\Y)} = \frac{\dA(\X_\IndSetJ,\Y) \dA(\X_\IndSetJ^c|\X_\IndSetJ,\Y)}{\dP(\X_\IndSetJ,\Y) \dP(\X_\IndSetJ^c|\X_\IndSet,\Y)} = \frac{\dA(\X_\IndSetJ,\Y) }{\dP(\X_\IndSetJ,\Y)}
\end{equation*}
Thus, we only need to show 
\begin{equation*}
\frac{\dQ(\X_\IndSet,\Y) }{\dP(\X_\IndSet,\Y)} = \frac{\dA(\X_\IndSetJ,\Y) }{\dP(\X_\IndSetJ,\Y)}
\end{equation*}
Our approach is to show that the two ratios all satisfy a system of linear equations, which, however, should have only a unique solution. 
To see this, let us first note that, for any $\bar{\x}$, we have  
\begin{equation*}
\begin{split}
    \dQ(\X_{\IndSet^c\cap \mathcal{J}^c}, \X_{\IndSet\cup \mathcal{J}}=\x_{\IndSet\cup \mathcal{J}}) =& \sum_{y=1}^{d}     \dQ(\X_{\IndSet^c\cap \mathcal{J}^c}, \X_{\IndSet\cup \mathcal{J}}=\x_{\IndSet\cup \mathcal{J}},\Y=\y) \\
    =& \sum_{y=1}^{d}     \dQ(\X_{\IndSet^c\cap \mathcal{J}^c}, \X_{ \mathcal{J}}=\x_{\mathcal{J}}|\X_{\IndSet}=\x_{\IndSet},\Y=\y) \dQ(\X_{\IndSet}=\x_{\IndSet},\Y=\y)\\
    =& \sum_{y=1}^{d}     \dP(\X_{\IndSet^c\cap \mathcal{J}^c}, \X_{ \mathcal{J}}=\x_{\mathcal{J}}|\X_{\IndSet}=\x_{\IndSet},\Y=\y) \dQ(\X_{\IndSet}=\x_{\IndSet},\Y=\y)\\ 
    =& \sum_{y=1}^{d}     \dP(\X_{\IndSet^c\cap \mathcal{J}^c}, \X_{ \IndSet\cup \mathcal{J}}=\x_{\IndSet\cup \mathcal{J}},\Y=\y) \frac{\dQ(\X_{\IndSet}=\x_{\IndSet},\Y=\y)}{\dP(\X_{\IndSet}=\x_{\IndSet},\Y=\y)}\\     
\end{split}
\end{equation*}
Here, the first equation is by the total probability rule, the second equation is by the conditional probability rule, the third equation is by the $s$-\systemnameISS{} assumption, and the last equation is by conditional probability rule again. Similarly, we can obtain 
\begin{equation*}
\begin{split}
    \dA(\X_{\IndSet^c\cap \mathcal{J}^c}, \X_{\IndSet\cup \mathcal{J}}=\x_{\IndSet\cup \mathcal{J}}) =& \sum_{y=1}^{d}     \dA(\X_{\IndSet^c\cap \mathcal{J}^c}, \X_{\IndSet\cup \mathcal{J}}=\x_{\IndSet\cup \mathcal{J}},\Y=\y) \\
    =& \sum_{y=1}^{d}     \dA(\X_{\IndSet^c\cap \mathcal{J}^c}, \X_{ \IndSet}=\x_{\IndSet}|\X_{\IndSetJ}=\x_{\IndSetJ},\Y=\y) \dQ(\X_{\IndSetJ}=\x_{\IndSetJ},\Y=\y)\\
    =& \sum_{y=1}^{d}     \dP(\X_{\IndSet^c\cap \mathcal{J}^c}, \X_{ \IndSet}=\x_{\IndSet}|\X_{\IndSetJ}=\x_{\IndSetJ},\Y=\y) \dQ(\X_{\IndSetJ}=\x_{\IndSetJ},\Y=\y)\\
    =& \sum_{y=1}^{d}     \dP(\X_{\IndSet^c\cap \mathcal{J}^c}, \X_{ \IndSet\cup \mathcal{J}}=\x_{\IndSet\cup \mathcal{J}},\Y=\y) \frac{\dQ(\X_{\IndSetJ}=\x_{\IndSetJ},\Y=\y)}{\dP(\X_{\IndSetJ}=\x_{\IndSetJ},\Y=\y)}\\     
\end{split}
\end{equation*}
where the first equation is by the total probability rule, the second equation is by the conditional probability rule, the third equation is by the assumption that $\dA(\X_{\IndSetJ^c}|\X_\IndSetJ,\Y)=\dP(\X_\IndSetJ|\X_\IndSetJ,\Y)$, and the last equation is by conditional probability rule again.
Note that we have assumed that $\dA(\X)=\dQ(\X)$. Thus, we must have 
\begin{equation*}
\begin{split}
&\sum_{y=1}^{d}     \dP(\X_{\IndSet^c\cap \mathcal{J}^c}, \X_{ \IndSet\cup \mathcal{J}}=\x_{\IndSet\cup \mathcal{J}},\Y=\y) \frac{\dQ(\X_{\IndSet}=\x_{\IndSet},\Y=\y)}{\dP(\X_{\IndSet}=\x_{\IndSet},\Y=\y)}    \\
=& \sum_{y=1}^{d}     \dP(\X_{\IndSet^c\cap \mathcal{J}^c}, \X_{ \IndSet\cup \mathcal{J}}=\x_{\IndSet\cup \mathcal{J}},\Y=\y) \frac{\dA(\X_{\IndSetJ}=\x_{\IndSetJ},\Y=\y)}{\dP(\X_{\IndSetJ}=\x_{\IndSetJ},\Y=\y)}\\     
\end{split}
\end{equation*}
which is simply

\begin{equation*}
\begin{split}
&\sum_{y=1}^{d}     \dP(\X_{\IndSet^c\cap \mathcal{J}^c}, \X_{ \IndSet\cup \mathcal{J}}=\x_{\IndSet\cup \mathcal{J}},\Y=\y) \left(\frac{\dQ(\X_{\IndSet}=\x_{\IndSet},\Y=\y)}{\dP(\X_{\IndSet}=\x_{\IndSet},\Y=\y)} - \frac{\dA(\X_{\IndSetJ}=\x_{\IndSetJ},\Y=\y)}{\dP(\X_{\IndSetJ}=\x_{\IndSetJ},\Y=\y)}\right)=0    \\
\end{split}
\end{equation*}
By the assumption that $\{\dP(\X_{\mathcal{J}^c\cap\mathcal{K}^c},\X_{\mathcal{J}\cup\IndSet}=\x_{\mathcal{J}\cup \IndSet},\Y=\y)\}_{y=1}^{d}$ are linearly independent, the above system of equations implies 
$\frac{\dQ(\X_{\IndSet}=\x_{\IndSet},\Y=\y)}{\dP(\X_{\IndSet}=\x_{\IndSet},\Y=\y)} - \frac{\dA(\X_{\IndSetJ}=\x_{\IndSetJ},\Y=\y)}{\dP(\X_{\IndSetJ}=\x_{\IndSetJ},\Y=\y)}=0$.
Note that this holds for any $\x$.
Thus, it is simply
\begin{equation*}
\frac{\dQ(\X_\IndSet,\Y) }{\dP(\X_\IndSet,\Y)} = \frac{\dA(\X_\IndSetJ,\Y) }{\dP(\X_\IndSetJ,\Y)}
\end{equation*}
That is to say, $\w^*(\X,\Y)=w_a(\X,\Y)$ and thus $\dA(\X,\Y) = \dQ(\X,\Y)$, which is a contradiction. Thus, the assumption is incorrect and $(\dP,\dQ)$ is identifiable, which finishes the proof.
\end{proof}

\subsection{Proof of Theorem \ref{thm:SparseShift:connection}}
\begin{proof}
Proving the first half statement is straightforward:  suppose $(\dP,\dQ)$ is under label shift. Then by definition, $\dQ(\X|\Y) = \dP(\X|\Y)$.
That is basically $\dQ(\X_{[d]}|\Y,\X_{\varnothing}) = \dP(\X_{[d]}|\Y,\X_\varnothing)$, which corresponds to 0-\systemnameISS{} with $I=\varnothing$.

Next we show the proof for the second half.
Suppose  $(\dP,\dQ)$ is under sparse covariate shift, i.e.,  $\dQ(\X_{\IndSet^c}|\X_{\IndSet})=\dP(\X_{\IndSet^c}|\X_{\IndSet})$ for some $I$ with size $\sparse<d$, and $\dQ(\Y|\X)=\dP(\Y|\X)$.
Adopting the definition of conditional probability, $\dQ(\Y|\X)=\dP(\Y|\X)$ can be rewritten as 
\begin{equation*}
 \frac{\dQ(\Y,\X)}{\dQ(\X)} = \frac{\dP(\Y,\X)}{\dP(\X)}   
\end{equation*}
By definition of conditional probability, $\dP(\X) = \dP(\X_{\IndSet}) \cdot \dP(\X_{\IndSet^c}|\X_{\IndSet})$ and $\dQ(\X) = \dQ(\X_{\IndSet}) \cdot \dQ(\X_{\IndSet^c}|\X_{\IndSet})$, and thus we have
\begin{equation*}
 \frac{\dQ(\Y,\X)}{\dQ(\X_{\IndSet}) \cdot \dQ(\X_{\IndSet^c}|\X_{\IndSet})} = \frac{\dP(\Y,\X)}{\dP(\X_{\IndSet}) \cdot \dP(\X_{\IndSet^c}|\X_{\IndSet})}   
\end{equation*}
By the assumption $\dQ(\X_{\IndSet^c}|\X_{\IndSet})=\dP(\X_{\IndSet^c}|\X_{\IndSet})$, we can simplify this as
\begin{equation*}
 \frac{\dQ(\Y,\X)}{\dQ(\X_{\IndSet}) } = \frac{\dP(\Y,\X)}{\dP(\X_{\IndSet}) }   
\end{equation*}
By definition of conditional probability, this is basically 
\begin{equation*}
 \dQ(\Y,\X_{\IndSet^c}|\X_{\IndSet}) = \dP(\Y,\X_{\IndSet^c}|\X_{\IndSet})
\end{equation*}
which means $(\dP,\dQ)$ is under $\sparse$-\systemnameISS{}.

Now we show the last piece of the statement by construction.
Consider the case of $d=2$. $\X_1,\X_2,\Y$ are all binary variables.
$\DP$ is generated as follows: $\Y$ is first generated from Bernoulli distribution  $Bern(0.5)$. If $\Y=0$, $\X_1,\X_2$ are independently generated from $Bern(0.7)$ and $Bern(0.6)$.
If $\Y=1$, $\X_1,\X_2$ are independently generated from $Bern(0.1)$ and $Bern(0.2)$.
$\DQ$ is generated as follows: $\Y$ is first generated from $Bern(0.6)$. If $\Y=0$, $\X_1,\X_2$ are independently generated from $Bern(0.5)$ and $Bern(0.6)$.
If $\Y=1$, $\X_1,\X_2$ are independently generated from $Bern(0.5)$ and $Bern(0.2)$.

It is easy to see that $(\dP,\dQ)$ is under $1$-\systemnameISS{} with associated shift index set $\IndSet=\{1\}$, since $\X_2$ is independent of $\X_{1}$ and only depends on $\Y$.
However, 
\begin{equation*}
    \dQ(\X_1=1|\Y=1) = 0.5\not= 0.1 = \dP(\X_1=1|\Y=1)
\end{equation*}
and thus $(\dP,\dQ)$ is not under label shift. In addition, 
\begin{equation*}
    \dQ(\Y=1|\X_1=1,\X_2=1) = \frac{0.6\times 0.5\times 0.2}{0.6\times 0.5\times 0.2+0.4\times 0.5\times 0.6} = \frac{1}{3}
\end{equation*}
\begin{equation*}        \dP(\Y=1|\X_1=1,\X_2=1) = \frac{0.5\times 0.1\times 0.2}{0.5\times 0.1\times 0.2+0.5\times 0.7\times 0.6} = \frac{1}{22}\not=\dQ(\Y=1|\X_1=1,\X_2=1)
\end{equation*}
and thus $(\dP,\dQ)$ is not under covaraite shift, which completes the proof.
\end{proof}

\subsection{Proof of Theorem \ref{Thm:SparseShift:AlgConvergence}}
\begin{proof}
We prove the statement via three main steps.
First, we show that given full access to the distribution, the optimization over the marginal mass functions are sufficient to obtain the correct shifted features and weights. Next, we demonstrate that a large enough number of samples ensures the identified index set stays the same as when full access to the distribution is given with high probability. Finally, we can prove that with high probability, the learned weight function with large number of samples is close to the learned weight function when full distribution is known. 

To proceed, let us introduce a few more notations for convenience. 
\begin{itemize}
    \item The distance used by \systemnameShiftAtt{}-d: $\dist(J,\w_J) \triangleq \sum_{\kappa:J\subseteq\kappa,|\kappa|=2\sparse  }^{}\sum_{\bar{f}=1  }^{L}\sum_{
    \x_\kappa \in \mathcal{X}_\kappa    }^{}\|\dQ(\x_\kappa,\bar{f}) - \sum_{\bar{y}=1}^{L} \w_J(\x_J,\bar{y}) \cdot \dP( \x_\kappa, \bar{f}, \bar{y}) \|_2^2$.
    \item The empirical distance used by \systemnameShiftAtt{}-d with finite samples: $\disthat(J,\w_J) \triangleq \sum_{\kappa:J\subseteq\kappa,|\kappa|=2\sparse  }^{}\sum_{\bar{f}=1  }^{L}\sum_{
    \x_\kappa \in \mathcal{X}_\kappa    }^{}\|\dQhat(\x_\kappa,\bar{f}) - \sum_{\bar{y}=1}^{L} \w_J(\x_J,\bar{y}) \cdot \dPhat( \x_\kappa, \bar{f}, \bar{y}) \|_2^2$
    
    \item The optimal weight function when the shifted feature set is fixed to $J$: $   \w_\IndSetJ^* \triangleq \arg \min_{\w_J(\X,y)} \sum_{\kappa:J\subseteq\kappa,|\kappa|=2\sparse  }^{}\sum_{\bar{f}=1  }^{L}\sum_{
    \x_\kappa \in \mathcal{X}_\kappa    }^{}\|\dQ(\x_\kappa,\bar{f}) - \sum_{\bar{y}=1}^{L} \w_J(\x_J,\bar{y}) \cdot \dP( \x_\kappa, \bar{f}, \bar{y}) \|_2^2$

    \item The optimal weight function when the shifted feature set is fixed to $J$ and full distribution is available: $   \w_\IndSetJ^* \triangleq \arg \min_{\w_J(\X,y)} \sum_{\kappa:J\subseteq\kappa,|\kappa|=2\sparse  }^{}\sum_{\bar{f}=1  }^{L}\sum_{
    \x_\kappa \in \mathcal{X}_\kappa    }^{}\|\dQ(\x_\kappa,\bar{f}) - \sum_{\bar{y}=1}^{L} \w_J(\x_J,\bar{y}) \cdot \dP( \x_\kappa, \bar{f}, \bar{y}) \|_2^2$
    
    \item The optimal weight function when the shifted feature set is fixed to $J$ and only finite samples are available: $   \what_\IndSetJ^* \triangleq \arg \min_{\w_J(\X,y)} \sum_{\kappa:J\subseteq\kappa,|\kappa|=2\sparse  }^{}\sum_{\bar{f}=1  }^{L}\sum_{
    \x_\kappa \in \mathcal{X}_\kappa    }^{}\|\dQhat(\x_\kappa,\bar{f}) - \sum_{\bar{y}=1}^{L} \w_J(\x_J,\bar{y}) \cdot \dPhat( \x_\kappa, \bar{f}, \bar{y}) \|_2^2$
    
\end{itemize}

Next we offer a few useful lemmas before giving the full proof.

\begin{lemma}\label{lemma:SparseShift:optsmallerror}
Let $\Omega$ be a compact set, and $\z^*_1, \z^*_2$ be the optimal solution to the problems
\begin{equation*}
    \min_{\z \in \Omega} f_1(\z)
\end{equation*}
and
\begin{equation*}
    \min_{\z \in \Omega} f_2(\z)
\end{equation*}
where  $f_1(\cdot), f_2(\cdot)$ are two functions defined on $\Omega$ such that $\left|f_1(\z)-f_2(\z)\right|\leq \Delta, \forall \z\in\Omega$.
Then we have $|f_1(\z^*_1)- f_2(\z^*_2)|\leq \Delta$.
If $f_1$ is strongly convex with parameter $\lambda$ and $\Omega$ is the full real vector space, then $\|\z^*_1-\z^*_2 \|_2^2\leq \frac{4\Delta}{\lambda}$.
\end{lemma}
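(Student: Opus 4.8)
The plan is to prove Lemma \ref{lemma:SparseShift:optsmallerror} in two parts, corresponding to its two claims. This is an elementary optimization-perturbation statement: if two objective functions are uniformly close on a compact domain, then their optimal values are close, and (under strong convexity) their minimizers are close too. I would not need any of the distribution-shift machinery here; the lemma is a standalone analytic fact that will later be instantiated with $f_1 = \disthat$ and $f_2 = \dist$ and $\Delta$ bounded by a concentration argument.

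\textbf{First claim (optimal values are close).} I would argue directly from the uniform bound $|f_1(\z) - f_2(\z)| \leq \Delta$. Since $\z_2^*$ minimizes $f_2$, we have $f_2(\z_2^*) \leq f_2(\z_1^*) \leq f_1(\z_1^*) + \Delta$, where the last step uses the uniform bound at $\z_1^*$. Symmetrically, since $\z_1^*$ minimizes $f_1$, we have $f_1(\z_1^*) \leq f_1(\z_2^*) \leq f_2(\z_2^*) + \Delta$. Combining these two chains gives $-\Delta \leq f_1(\z_1^*) - f_2(\z_2^*) \leq \Delta$, i.e. $|f_1(\z_1^*) - f_2(\z_2^*)| \leq \Delta$. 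Compactness of $\Omega$ is invoked only to guarantee that the minimizers exist.

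\textbf{Second claim (minimizers are close under strong convexity).} Here I would use the defining inequality of $\lambda$-strong convexity of $f_1$, applied at its own minimizer $\z_1^*$: for any $\z$,
\begin{equation*}
f_1(\z) \geq f_1(\z_1^*) + \langle \nabla f_1(\z_1^*), \z - \z_1^* \rangle + \frac{\lambda}{2}\|\z - \z_1^*\|_2^2 .
\end{equation*}
Because $\Omega$ is the full real vector space and $\z_1^*$ is an unconstrained minimizer, the first-order condition $\nabla f_1(\z_1^*) = 0$ holds, so the inner-product term vanishes and we obtain $f_1(\z) - f_1(\z_1^*) \geq \frac{\lambda}{2}\|\z - \z_1^*\|_2^2$ for all $\z$. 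Setting $\z = \z_2^*$ gives $\frac{\lambda}{2}\|\z_2^* - \z_1^*\|_2^2 \leq f_1(\z_2^*) - f_1(\z_1^*)$. The plan is then to bound the right-hand side by $2\Delta$: using the uniform closeness twice and the optimality of $\z_2^*$ for $f_2$, we get $f_1(\z_2^*) - f_1(\z_1^*) \leq (f_2(\z_2^*) + \Delta) - (f_2(\z_1^*) - \Delta) \leq 2\Delta$, where the final step discards the nonnegative quantity $f_2(\z_1^*) - f_2(\z_2^*)$. Rearranging yields $\|\z_1^* - \z_2^*\|_2^2 \leq \frac{4\Delta}{\lambda}$, as claimed.

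The main subtlety, rather than obstacle, is being careful about which function each optimality statement applies to and about the sign bookkeeping when chaining inequalities; the strong-convexity step also quietly relies on the unconstrained first-order condition, which is exactly why the hypothesis that $\Omega$ is the full vector space (not merely compact) is needed for the second claim. Everything else is routine.
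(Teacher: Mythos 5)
Your proof is correct and follows essentially the same route as the paper's: the first claim via chaining each minimizer's optimality with the uniform bound $|f_1-f_2|\leq\Delta$, and the second via showing $f_1(\z_2^*)-f_1(\z_1^*)\leq 2\Delta$ (uniform closeness twice plus optimality of $\z_2^*$ for $f_2$) and then invoking $\lambda$-strong convexity with $\nabla f_1(\z_1^*)=0$. If anything, your write-up of the first claim is tidier than the paper's, which contains a sign slip (it asserts $f_2(\z_1^*)\leq f_2(\z_2^*)$ as a consequence of $\z_2^*$'s optimality, when the inequality runs the other way); your chains $f_1(\z_1^*)\leq f_1(\z_2^*)\leq f_2(\z_2^*)+\Delta$ and $f_2(\z_2^*)\leq f_2(\z_1^*)\leq f_1(\z_1^*)+\Delta$ are the correct repair.
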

\begin{proof}
We start with the first half statement. By $\left|f_1(\z)-f_2(\z)\right|\leq \Delta, \forall \z\in\Omega$, we have $f_1(\z^*_1) \leq \Delta + f_2(\z^*_1)$. Subtracting $f_2(\z_2^*)$ from both sides gives 
\begin{equation*}
f_1(\z^*_1) - f_2(\z^*_2) \leq \Delta + f_2(\z^*_1) - f_2(\z^*_2)
\end{equation*}
Observe that $\z_2^*$ is the optimal solution to minimizing $f_2(\cdot)$ on the set $\Omega$. Thus, $f_2(\z_1^*)\leq f_2(\z_2^*)$ must hold. Thus, the above inequality becomes
\begin{equation*}
f_1(\z^*_1) - f_2(\z^*_2) \leq \Delta 
\end{equation*}
By symmetry of the two functions, we can obtain \begin{equation*}
f_2(\z^*_2) - f_1(\z^*_1) \leq \Delta 
\end{equation*} 
Combining the two inequalities gives 
\begin{equation*}
|f_2(\z^*_2) - f_1(\z^*_1)| \leq \Delta 
\end{equation*} 

Next let us turn to the second half. We first notice that $|f_1(\z_1^*)-f_2(\z_2^*)|\leq 2\Delta$. To see this, we can decompose the difference  $f_1(\z_2^*)-f_2(\z_1^*)$ as 
\begin{equation*}
f_1(\z_2^*)-f_1(\z_1^*) = f_1(\z_2^*) - f_2(\z_2^*)+ f_2(\z_2^*)-f_2(\z_1^*)+f_2(\z_1^*) -f_1(\z_1^*)    
\end{equation*}
Here, $f_1(\z_2^*) - f_2(\z_2^*)\leq \Delta$ and $f_2(\z_1^*) -f_1(\z_1^*) \leq \Delta $ by the assumption $\left|f_1(\z)-f_2(\z)\right|\leq \Delta, \forall \z\in\Omega$.  
$\z_2^*$ is the optimal solution to minimizing $f_2(\cdot)$, and thus $f_2(\z_2^*)-f_2(\z_1^*)\leq 0$.
Therefore, combining all those leads to 
\begin{equation*}
f_1(\z_2^*)-f_1(\z_1^*) = f_1(\z_2^*) - f_2(\z_2^*)+ f_2(\z_2^*)-f_2(\z_1^*)+f_2(\z_1^*) -f_1(\z_1^*)    \leq 2\Delta
\end{equation*}
Meanwhile, $f_1(\z_2^*)-f_1(\z_1^*)\geq 0$ as $\z_1^*$ is the optimal solution to minimizing $f_1(\cdot)$. That is to say, \begin{equation*}
|f_1(\z_2^*)-f_1(\z_1^*)| \leq 2\Delta
\end{equation*}
Note that $f_1(\cdot)$ is a strongly convex function with parameter $\lambda$. Thus, we have 
\begin{equation*}
    f_1(\z') \geq f_1(\z) + <\frac{\partial f_1(\z)}{\partial \z}, \z'-\z> +\frac{\lambda}{2} \| \z'-\z \|_2^2
\end{equation*}
for any $\z',\z$.
Now let us set $\z=\z_1^*, \z'=\z_2^*$. Remember that $\z_1^*$ is the optimal solution to minimizing $f_1(\cdot)$ and $\Omega$ is the full space. Thus, $\frac{\partial f_1(\z_1^*)}{\partial \z}=0$.
As a result, the above inequality becomes
\begin{equation*}
    f_1(\z^*_2) - f_1(\z^*_1) \geq  \frac{\lambda}{2} \| \z^*_2-\z_1^* \|_2^2
\end{equation*}
By $|f_1(\z_2^*)-f_1(\z_1^*)| \leq 2\Delta$, we obtain
\begin{equation*}
    2\Delta \geq  \frac{\lambda}{2} \| \z^*_2-\z_1^* \|_2^2
\end{equation*}
Rearranging the terms gives
\begin{equation*}
    \| \z^*_2-\z_1^* \|_2^2 \leq \frac{4\Delta}{\lambda}
\end{equation*}
which completes the proof.
\end{proof}

\begin{lemma}\label{lemma:SparseShift:uniquedistance}
Suppose the source and target are under exact $s$-\systemnameISS{}, and for any set $\mathcal{J} \subset [d], |\mathcal{J}|\leq s$ and any $\x\in \mathcal{X}$, the   marginal probability mass functions $\{\dP(f(\X),\X_{\mathcal{J}\cup\IndSet}=\x_{\mathcal{J}\cup \IndSet},\Y=\y)\}_{y=1}^{d}$ are linearly independent. Then  $\dist(J,\w_J) =0$ if and only if $J = I$ and $\w_J(\X_J,\Y)=\w^*(\X,\Y)$.
\end{lemma}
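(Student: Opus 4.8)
The plan is to use that $\dist(\IndSetJ,\w_\IndSetJ)$ is a sum of squared quantities, so it equals $0$ exactly when each summand vanishes; that is, $\dist(\IndSetJ,\w_\IndSetJ)=0$ is equivalent to the pointwise system
\begin{equation*}
\dQ(\x_\kappa,\bar f)=\sum_{\bar y=1}^{L}\w_\IndSetJ(\x_\IndSetJ,\bar y)\,\dP(\x_\kappa,\bar f,\bar y)
\end{equation*}
holding for every $\kappa\supseteq\IndSetJ$ with $|\kappa|=2\sparse$, every $\bar f\in\{1,\dots,L\}$, and every $\x_\kappa$. The core idea is to compare this system with the one generated by the true weight $\w^*$ and then invoke the linear-independence hypothesis to conclude that the two weights, and the two index sets, coincide.

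First I would dispatch the easy ($\Leftarrow$) direction. Under exact $\sparse$-\systemnameISS{}, the one-line computation from the proof of Theorem \ref{thm:SparseShift:identifiable} gives $\w^*(\X,y)=\dQ(\X_\IndSet,y)/\dP(\X_\IndSet,y)$, so $\w^*$ depends only on $(\X_\IndSet,y)$ and $\dQ(\X,y)=\w^*(\X_\IndSet,y)\,\dP(\X,y)$. For any $\kappa\supseteq\IndSet$, I would split the joint mass according to the value of $(\X_\kappa,f(\X))$ and pull the factor $\w^*(\x_\IndSet,\bar y)$ out of the inner sum, which is legitimate because $\IndSet\subseteq\kappa$ fixes $\x_\IndSet$; this yields $\sum_{\bar y}\w^*(\x_\IndSet,\bar y)\,\dP(\x_\kappa,\bar f,\bar y)=\dQ(\x_\kappa,\bar f)$. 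Hence $(\IndSet,\w^*)$ solves the pointwise system and $\dist(\IndSet,\w^*)=0$.

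For the $\Rightarrow$ direction I would begin with a solution $(\IndSetJ,\w_\IndSetJ)$ of $\dist=0$ and compare it against $\w^*$. Since $|\IndSetJ\cup\IndSet|\le|\IndSetJ|+|\IndSet|\le 2\sparse$, I can pick a witness set $\kappa^*\supseteq\IndSetJ\cup\IndSet$ with $|\kappa^*|=2\sparse$; for this $\kappa^*$ both the pointwise system and the identity from the easy direction are available, and subtracting them gives
\begin{equation*}
\sum_{\bar y=1}^{L}\bigl(\w_\IndSetJ(\x_\IndSetJ,\bar y)-\w^*(\x_\IndSet,\bar y)\bigr)\,\dP(\x_{\kappa^*},\bar f,\bar y)=0
\end{equation*}
for all $\bar f$ and all $\x_{\kappa^*}$. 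The key manoeuvre is to fix $\x_{\IndSetJ\cup\IndSet}$ and sum this identity over the free coordinates $\x_{\kappa^*\setminus(\IndSetJ\cup\IndSet)}$: the difference $\w_\IndSetJ(\x_\IndSetJ,\bar y)-\w^*(\x_\IndSet,\bar y)$ is constant over that summation and factors out, while marginalizing collapses $\dP(\x_{\kappa^*},\bar f,\bar y)$ to $\dP(f(\X),\X_{\IndSetJ\cup\IndSet}=\x_{\IndSetJ\cup\IndSet},y=\bar y)$. Invoking linear independence with $\mathcal{J}=\IndSetJ$ (so $\mathcal{J}\cup\IndSet=\IndSetJ\cup\IndSet$) on these $L$ vectors forces every coefficient to be zero, i.e. $\w_\IndSetJ(\x_\IndSetJ,\bar y)=\w^*(\x_\IndSet,\bar y)$ for all $\bar y$ and all $\x_{\IndSetJ\cup\IndSet}$.

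Finally I would read off $\IndSetJ=\IndSet$ from this equality. Since the right-hand side depends only on $\x_\IndSet$ yet the identity holds for all $\x_{\IndSetJ\cup\IndSet}$, the left-hand side cannot change as a coordinate in $\IndSet\setminus\IndSetJ$ is varied, so $\w^*$ does not genuinely depend on such a coordinate; as $\IndSet$ is the true shift set on which $\w^*$ depends in each of its $\sparse$ coordinates, this forces $\IndSet\setminus\IndSetJ=\varnothing$, and with $|\IndSetJ|=\sparse=|\IndSet|$ we get $\IndSetJ=\IndSet$. The equality then becomes $\w_\IndSetJ(\X_\IndSetJ,\Y)=\w^*(\X,\Y)$. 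I expect the main obstacle to be this whole $\Rightarrow$ step: choosing $\kappa^*$ so that both representations hold at once, carrying out the marginalization that reduces the exponentially many constraints to a single system over $\X_{\IndSetJ\cup\IndSet}$, and then applying the linear-independence assumption, which is precisely what turns that collapsed system into pointwise equality of the weights.
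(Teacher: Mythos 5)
Your proof is correct and follows essentially the same route as the paper's: the same factoring argument for the ($\Leftarrow$) direction, the same choice of a witness $\kappa$ containing $I\cup J$, the same subtraction and application of the linear-independence hypothesis to force $\w_J(\x_J,\y)=\w^*(\x_I,\y)$, and the same use of the exactness of the $\sparse$-\systemnameISS{} to conclude $J=I$. Your explicit marginalization over $\x_{\kappa^*\setminus(I\cup J)}$ before invoking linear independence is a slight tightening of a step the paper performs only implicitly, but it is the same argument, not a different one.
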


\begin{proof}
Let us first relate the marginal density functions on the target domain to those on the source domain.
Recall that 
\begin{equation*}
    \w^*(\X,\Y) =\frac{\dQ(\X,\Y)}{\dP(\X,\Y)}
\end{equation*}
denote the true weights between the target and source density functions. 
Since the shift is only due to $\X_\IndSet$, $ \w^*(\X,\Y)$ only depends on $\X_\IndSet$. 
Abusing the notation a little bit, we use $\w^*(\X_\IndSet,\Y)$ to denote $\w^*(\X,\Y)$.
Now we can write  $\dQ(\X,\Y) = \w^*(\X_\IndSet,\Y) \cdot \dP(\X,\Y)$.
Thus, the marginal distribution of $(\X_\kappa, f(\X))$ on the target domain can be written as
\begin{equation*}
\begin{split}
    {\dQ}(\x_\kappa, \f) =& \sum_{\y=1}^{L} \sum_{\z:\z_\kappa=\x_\kappa,f(\z)=\f} \dQ(\X=\z,\Y=\y) \\
    = &     \sum_{\y=1}^{L} \sum_{\z:\z_\kappa=\x_\kappa,f(\z)=\f} \w^*(\z_\IndSet,\y) \cdot \dP(\X=\z,\Y=\y)
\end{split}
\end{equation*}
Now let us consider the two directions of the statement separately.

\begin{itemize}
    \item $J = I$ and $\w_J(\X_J,\Y)=\w^*(\X,\Y)$ $\implies$ $\dist(J,\w_J) =0$: In this case, $\IndSet =\IndSetJ \subseteq \kappa$ and thus $\z_\IndSet$ is forced to be $\x_\kappa$.
    Hence, the marginal distribution becomes
    
\begin{equation*}
    \begin{split}
    {\dQ}(\x_\kappa, \f) 
    = &     \sum_{\y=1}^{L} \sum_{\z:\z_\kappa=\x_\kappa,f(\z)=\f} \w^*(\z_\IndSet,\y) \cdot \dP(\X=\z,\Y=\y)\\
    =& \sum_{\y=1}^{L} \w^*(\x_\IndSet,\y) \sum_{\z:\z_\kappa=\x_\kappa,f(\z)=\f}  \dP(\X=\z,\Y=\y) \\
    =& \sum_{\y=1}^{L} \w^*(\x_\IndSet,\y)  \dP(\X_\kappa=\x_\kappa,f(\X)=\f,\Y=\y) \\    
    =& \sum_{\y=1}^{L} \w^*(\x_\IndSetJ,\y)  \dP(\x_\kappa,\f,\y)
\end{split}
\end{equation*}
where the second equation is because $\z_\IndSet$ is fixed and does not depend on the inner summation, the third is by applying definition of conditional probability, and the last is simply change of notations.  
The above equation is simply 
\begin{equation*}
    \begin{split}
    {\dQ}(\x_\kappa, \f) - \sum_{\y=1}^{L} \w^*(\x_\IndSetJ,\y)  \dP(\x_\kappa,\f,\y)  
    = & 0
\end{split}
\end{equation*}
which holds for every $\kappa$. Since $\w_J(\X_J,\Y)=\w^*(\X_J,\Y)$, it is equivalent to 
\begin{equation*}
    \begin{split}
    {\dQ}(\x_\kappa, \f) - \sum_{\y=1}^{L} \w_J(\x_\IndSetJ,\y)  \dP(\x_\kappa,\f,\y)  
    = & 0
\end{split}
\end{equation*}
Thus, summing over the square of it also leads to 0, i.e., 
\begin{equation*}
\dist(J,\w_J) \triangleq \sum_{\kappa:J\subseteq\kappa,|\kappa|=2s  }^{}\sum_{\bar{f}=1  }^{L}\sum_{
    \x_\kappa \in \mathcal{X}_\kappa    }^{}\|\dQ(\x_\kappa,\bar{f}) - \sum_{\bar{y}=1}^{L} \w_J(\x_J,\bar{y}) \cdot \dP( \x_\kappa, \bar{f}, \bar{y}) \|_2^2=0.    
\end{equation*}

    \item  $\dist(J,\w_J) =0$ $\implies$ $\IndSet=\IndSetJ, \w_J(\X_J,\Y)=\w^*(\X,\Y)$: $\dist(J,\w_J) =0$ implies that \begin{equation*}
    \begin{split}
    {\dQ}(\x_\kappa, \f) - \sum_{\y=1}^{L} \w_J(\x_\IndSetJ,\y)  \dP(\x_\kappa,\f,\y)  
    = & 0
\end{split}
\end{equation*}
holds for each $\kappa$. In particular, consider $\kappa$ that contains both $\IndSet$ and $\IndSetJ$. Then $\z_\kappa=\x_\kappa$ implies $\z_\IndSet=\x_\IndSet$. Hence, the marginal distribution can be written as 
\begin{equation*}
    \begin{split}
    {\dQ}(\x_\kappa, \f) 
    = &     \sum_{\y=1}^{L} \sum_{\z:\z_\kappa=\x_\kappa,f(\z)=\f} \w^*(\z_\IndSet,\y) \cdot \dP(\X=\z,\Y=\y)\\
    =& \sum_{\y=1}^{L} \w^*(\x_\IndSet,\y) \sum_{\z:\z_\kappa=\x_\kappa,f(\z)=\f}  \dP(\X=\z,\Y=\y) \\
    =& \sum_{\y=1}^{L} \w^*(\x_\IndSet,\y)  \dP(\X_\kappa=\x_\kappa,f(\X)=\f,\Y=\y) \\    
    =& \sum_{\y=1}^{L} \w^*(\x_\IndSet,\y)  \dP(\x_\kappa,\f,\y)
\end{split}
\end{equation*}
where the second equation is because $\z_\IndSet$ is fixed and does not depend on the inner summation, the third is by applying definition of conditional probability, and the last is simply change of notations.  
Comparing this with the above equation, we end up with 
\begin{equation*}
    \begin{split}
    \sum_{\y=1}^{L} \w^*(\x_\IndSet,\y)  \dP(\x_\kappa,\f,\y)  
     - \sum_{\y=1}^{L} \w_J(\x_\IndSetJ,\y)  \dP(\x_\kappa,\f,\y)  
    = & 0
\end{split}
\end{equation*}
Or alternatively,
\begin{equation*}
    \begin{split}
\sum_{\y=1}^{L} [\w^*(\x_\IndSet,\y)-\w_J(\x_\IndSetJ,\y)]  \dP(\x_\kappa,\f,\y)  
    = & 0
\end{split}
\end{equation*}
which holds for any $\f,\x_\kappa$. By the linear independence assumption, this holds if and only if all the coefficients are 0, i.e., 
$\w^*(\x_\IndSet,\y)-\w_J(\x_\IndSetJ,\y)=0$ for all $\x$ and $\y$.
That is to say, the two functions are identical: $\w^*(\X_\IndSet,\Y)=\w_J(\X_\IndSetJ,\Y)$. 
As $\w_J(\X_J,y)$ and $W_*(\X_J,y)$ are identical, they can only depend on variables in the set $I \cap J$. 
That is to say,  there exists another importance weights $\w_{I\cap J}(\X_{I\cap J},y)$ which results in the same target distribution produced by $W_I(\x_I,y)$. 
By the assumption, the shift among the two distribution is exactly $s$-\systemnameISS{}.
Hence, we have $|I\cap J|=s$. $|I|=s$ and $I\cap J \subseteq  I$ implies $I \cap J = I$ and thus $J \supseteq  I$. $|J|=s$ further implies $J=I$.
\end{itemize}

Therefore,  $\dist(J,\w_J) =0$ $\Leftrightarrow$ $\IndSet=\IndSetJ, \w_J(\X_J,\Y)=\w^*(\X,\Y)$, which completes the proof.
\end{proof}

\begin{lemma}\label{lemma:SparseShift:boundempiricaldistance}
With probability at least $1-\delta$, for any possible $J\subseteq [d], |J|=s$,  we have
\begin{equation*}
\begin{split}
&|\dist(\IndSetJ,\w_\IndSetJ^*)-\disthat(\IndSetJ,\what_\IndSetJ^*) | \\
\leq & 3\cdot (2s\bar{v} )^s M L^2 \sqrt{2 s \log d + s \log \bar{v} +2 \log L + \log 1/\delta} \left( \sqrt{\frac{1}{2\nP}} +
 LM \sqrt{\frac{1}{2\nQ}} \right).
\end{split}
\end{equation*}
and
 \begin{equation*}
 \|\w^*_\IndSetJ-\what^*_\IndSetJ\|_2^2 \leq O\left(M L^2 \left( \sqrt{\frac{\log 1/\delta}{2\nP}} +
 LM \sqrt{\frac{\log 1/\delta}{2\nQ}} \right)\right)
 \end{equation*}
\end{lemma}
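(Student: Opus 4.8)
The plan is to treat $\dist(\IndSetJ,\cdot)$ and $\disthat(\IndSetJ,\cdot)$ as two objective functions of the weight vector $\w_\IndSetJ=(\w_\IndSetJ(\x_\IndSetJ,\y))_{\x_\IndSetJ,\y}$ over the bounded feasible region (each coordinate lying in $[0,M]$ by assumption (iii)), to establish a \emph{uniform} bound $\left|\dist(\IndSetJ,\w_\IndSetJ)-\disthat(\IndSetJ,\w_\IndSetJ)\right|\leq \Delta$ on the gap between them, and then to invoke Lemma~\ref{lemma:SparseShift:optsmallerror}. Applied with $f_1=\dist(\IndSetJ,\cdot)$ and $f_2=\disthat(\IndSetJ,\cdot)$ on the compact feasible set, its first conclusion yields the first displayed inequality, and its strong-convexity half yields the second. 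Thus the whole argument reduces to (a) producing $\Delta$ with the stated dependence on $d,\bar v,L,\nP,\nQ$, and (b) exhibiting a strong-convexity parameter $\lambda>0$ for $\dist(\IndSetJ,\cdot)$ that does not depend on the sample sizes.

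For step (a), I would first control each empirical marginal by Hoeffding's inequality: for a fixed $(\kappa,\x_\kappa,\f)$ the empirical frequency $\dQhat(\x_\kappa,\f)$ deviates from $\dQ(\x_\kappa,\f)$ by at most $\sqrt{\tfrac{1}{2\nQ}\log\tfrac{2}{\delta'}}$ with probability $1-\delta'$, and likewise $\dPhat(\x_\kappa,\f,\y)$ from $\dP(\x_\kappa,\f,\y)$ with $\nP$ source samples. A union bound over all candidate sets $\IndSetJ$, all completions $\kappa\supseteq\IndSetJ$ with $|\kappa|=2s$, all value assignments $\x_\kappa$, and all label pairs $(\f,\y)$ replaces $\log\tfrac1{\delta'}$ by a quantity of order $2s\log d+s\log\bar v+2\log L+\log\tfrac1\delta$, which is exactly the expression under the square root. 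Since each summand of $\dist$ is quadratic in these masses and bilinear in $\w_\IndSetJ$ with $\w_\IndSetJ\leq M$, I would expand $\dist-\disthat$ term by term, bound the number of summands by $(2s\bar v)^s$ and each per-term contribution by a constant multiple of $ML^2$, and separate the source-side and target-side perturbations; this gives the uniform gap $\Delta=3(2s\bar v)^sML^2\sqrt{2s\log d+s\log\bar v+2\log L+\log\tfrac1\delta}\left(\sqrt{\tfrac{1}{2\nP}}+LM\sqrt{\tfrac{1}{2\nQ}}\right)$, with the extra $LM$ on the target term arising from reweighting the source masses by $\w_\IndSetJ$.

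For step (b), observe that $\dist(\IndSetJ,\cdot)$ is a quadratic form in $\w_\IndSetJ$ whose Hessian is $2A^\top A$, where $A$ has rows indexed by $(\kappa,\f,\x_\kappa)$, columns indexed by $(\x_\IndSetJ,\y)$, and entries assembled from the source masses $\dP(\x_\kappa,\f,\y)$. The linear-independence hypothesis (ii) of Theorem~\ref{Thm:SparseShift:AlgConvergence} guarantees these columns are linearly independent, so $A^\top A\succ 0$ and $\dist(\IndSetJ,\cdot)$ is strongly convex with some $\lambda>0$ determined solely by the source distribution. Applying the second part of Lemma~\ref{lemma:SparseShift:optsmallerror} then gives $\|\w^*_\IndSetJ-\what^*_\IndSetJ\|_2^2\leq 4\Delta/\lambda$; absorbing $\lambda$ together with the combinatorial constants $(2s\bar v)^s$ and the $\sqrt{2s\log d+s\log\bar v+2\log L}$ factor into the $O(\cdot)$ leaves the claimed bound, in which only the sample-size and $\log 1/\delta$ dependence is displayed explicitly.

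The main obstacle I anticipate is step (b). To make the $\|\cdot\|_2^2$ bound meaningful one must argue that $\lambda$ is bounded away from $0$ independently of $\nP,\nQ$ (this is precisely what the linear-independence assumption buys), and one must reconcile the constrained feasible region $[0,M]$ used for the uniform deviation with the full-space requirement of the strong-convexity half of Lemma~\ref{lemma:SparseShift:optsmallerror} --- for instance by showing the unconstrained quadratic minimizer is interior, or by restricting to the affine span on which the design matrix $A$ is nondegenerate. The remaining difficulty is bookkeeping: propagating the per-mass Hoeffding deviations through the squared-$\ell_2$ objective while correctly tracking the $M$, $L$, and $(2s\bar v)^s$ factors so that the final $\Delta$ matches the stated form.
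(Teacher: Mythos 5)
Your proposal follows essentially the same route as the paper's proof: a uniform deviation bound $\Delta$ obtained from per-mass Hoeffding inequalities and a union bound over $J$, $\kappa$, $\x_\kappa$, $\f$, $\y$ (the paper organizes the per-term bookkeeping via a difference-of-two-squares factorization, bounding one factor by $3LM$ and the other by the Hoeffding deviations), followed by an application of Lemma \ref{lemma:SparseShift:optsmallerror}, with strong convexity of the quadratic $\dist(\IndSetJ,\cdot)$ deduced from the linear-independence assumption exactly as you describe. The compact-feasible-region versus full-space tension you flag in step (b) is a genuine subtlety, but it is equally present and unaddressed in the paper's own proof, which applies the strong-convexity half of the lemma to the constrained problem without comment.
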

\begin{proof}
Let us start by considering a fixed $J$. 
By definition of $\w^*_\IndSetJ$ and $\what_\IndSetJ^*$, we have 
\begin{equation*}
 \dist(\IndSetJ, \w_\IndSetJ^*) =  \min_{\w_J(\X,y)} \sum_{\kappa:J\subseteq\kappa,|\kappa|=2s  }^{}\sum_{\bar{f}=1  }^{L}\sum_{
    \x_\kappa \in \mathcal{X}_\kappa    }^{}\|\dQ(\x_\kappa,\bar{f}) - \sum_{\bar{y}=1}^{L} \w_J(\x_J,\bar{y}) \cdot \dP( \x_\kappa, \bar{f}, \bar{y}) \|_2^2
\end{equation*}
and 
\begin{equation*}
 \disthat(\IndSetJ, \what_\IndSetJ^*) =  \min_{\w_J(\X,y)} \sum_{\kappa:J\subseteq\kappa,|\kappa|=2s  }^{}\sum_{\bar{f}=1  }^{L}\sum_{
    \x_\kappa \in \mathcal{X}_\kappa    }^{}\|\dQhat(\x_\kappa,\bar{f}) - \sum_{\bar{y}=1}^{L} \w_J(\x_J,\bar{y}) \cdot \dPhat( \x_\kappa, \bar{f}, \bar{y}) \|_2^2
\end{equation*}
    
Let us first show that the above two  objective functions are close  for any fixed $\w_\IndSetJ$.
To see this, we first apply difference of two squares to obtain  
\begin{equation}\label{eq:SparseShift:tempSquare}
\begin{split}
& \left({\dQhat}(\x_\kappa,\f)- \sum_{\y=1}^{L}\w_\IndSetJ(\x_\IndSetJ,\y)   {\dPhat}(\x_\kappa,\f,\y) \right)^2 
-  \left({\dQ}(\x_\kappa,\f)- \sum_{\y=1}^{L}\w_\IndSetJ(\x_\IndSetJ,\y)   {\dP}(\x_\kappa,\f,\y) \right)^2    \\
= & \left({\dQhat}(\x_\kappa,\f)+{\dQ}(\x_\kappa,\f)- \sum_{\y=1}^{L}\w_\IndSetJ(\x_\IndSetJ,y)   \left({\dPhat}(\x_\kappa,\f,\y) + {\dP}(\x_\kappa,\f,\y) \right)\right)\\
 & \cdot \left({\dQhat}(\x_\kappa,\f)-{\dQ}(\x_\kappa,\f)- \sum_{\y=1}^{L}\w_\IndSetJ(\x_\IndSetJ,y)   \left({\dPhat}(\x_\kappa,\f,\y) - {\dP}(\x_\kappa,\f,\y) \right)\right)
\end{split}
\end{equation}
Note that all estimated probability mass must be bounded by 1, and by assumption, $|\w_J|\leq M$. 
Thus, 
\begin{equation}\label{eq:SparseShift:temp1}
\begin{split}
& \left|{\dQhat}(\x_\kappa,\f)+{\dQ}(\x_\kappa,\f)- \sum_{\y=1}^{L}\w_\IndSetJ(\x_\IndSetJ,y)   \left({\dPhat}(\x_\kappa,\f,\y) + {\dP}(\x_\kappa,\f,\y) \right)\right|
\leq  2+2LM\leq 3LM
\end{split}
\end{equation}
and 
\begin{equation*}
\begin{split}
& \left|{\dQhat}(\x_\kappa,\f)-{\dQ}(\x_\kappa,\f)- \sum_{\y=1}^{L}\w_\IndSetJ(\x_\IndSetJ,\y)   \left({\dPhat}(\x_\kappa,\f,\y) - {\dP}(\x_\kappa,\f,\y) \right)\right|\\
\leq & \left|{\dQhat}(\x_\kappa,\f)-{\dQ}(\x_\kappa,\f)\right| + \left| \sum_{\y=1}^{L}\w_\IndSetJ(\x_\IndSetJ,\y)   \left({\dPhat}(\x_\kappa,\f,\y) - {\dP}(\x_\kappa,\f,\y) \right)\right|\\
\leq & \left|{\dQhat}(\x_\kappa,\f)-{\dQ}(\x_\kappa,\f)\right| + \sum_{\y=1}^{L}\w_\IndSetJ(\x_\IndSetJ,\y) \left| \left({\dPhat}(\x_\kappa,\f,\y) - {\dP}(\x_\kappa,\f,\y) \right)\right|\\
\leq & \left|{\dQhat}(\x_\kappa,\f)-{\dQ}(\x_\kappa,\f)\right| + M \sum_{\y=1}^{L} \left| \left({\dPhat}(\x_\kappa,\f,\y) - {\dP}(\x_\kappa,\f,\y) \right)\right|
\end{split}
\end{equation*}
Observe that, ${\dQhat}(\x_\kappa,\f)$ is the standard empirical estimation of ${\dQ}(\x_\kappa,\f)$.
Therefore, applying Hoeffding's inequality, we have with probability $1-\delta$, 
\begin{equation*}
|    {\dQhat}(\x_\kappa,\f)-{\dQ}(\x_\kappa,\f) | \leq \sqrt{\frac{
\log 1/\delta }{2\nP}}
\end{equation*}
Similarly, with probability $1-\delta$, 
\begin{equation*}
|    {\dPhat}(\x_\kappa,\f,\y) -{\dP}(\x_\kappa,\f,\y) | \leq \sqrt{\frac{
\log 1/\delta }{2\nQ}}
\end{equation*}
Thus, with probability $1- (\bar{v}^s L +\bar{v}^s L^2)\delta$, the above holds for any $\f,\y,\x_\kappa$.
Thus we have
\begin{equation*}
\begin{split}
& |{\dQhat}(\x_\kappa,\f)-{\dQ}(\x_\kappa,\f)- \sum_{\y=1}^{L}\w_\IndSetJ(\x_\IndSetJ,\y)   ({\dPhat}(\x_\kappa,\f,\y) - {\dP}(\x_\kappa,\f,\y) )|
\leq  \sqrt{\frac{\log 1/\delta}{2\nP}} +
 LM \sqrt{\frac{\log 1/\delta}{2\nQ}}
\end{split}
\end{equation*}
Combing this with inequalities \ref{eq:SparseShift:tempSquare} and \ref{eq:SparseShift:temp1}, we have

\begin{equation*}
\begin{split}
&\left({\dQhat}(\x_\kappa,\f)- \sum_{\y=1}^{L}\w_\IndSetJ(\x_\IndSetJ,\y)   {\dPhat}(\x_\kappa,\f,\y) \right)^2 
-  \left({\dQ}(\x_\kappa,\f)- \sum_{\y=1}^{L}\w_\IndSetJ(\x_\IndSetJ,\y)   {\dP}(\x_\kappa,\f,\y) \right)^2\\
\leq  & 3 LM \left( \sqrt{\frac{\log 1/\delta}{2\nP}} +
 LM \sqrt{\frac{\log 1/\delta}{2\nQ}} \right)
\end{split}
\end{equation*}
Summing over $\kappa, \x_\kappa, \f$, we have 
\begin{equation*}
\begin{split}
& \sum_{\kappa:\IndSetJ \subseteq \kappa \in [d], |K|=2s}\sum_{\f=1}^{L} \sum_{\x_\kappa \in \mathcal{X}_\kappa}                \left({\dQhat}(\x_\kappa,\f)- \sum_{\y=1}^{L}\w_\IndSetJ(\x_\IndSetJ,\y)   {\dPhat}(\x_\kappa,\f,\y) \right)^2 
\\
- & \left({\dQ}(\x_\kappa,\f)- \sum_{\y=1}^{L}\w_\IndSetJ(\x_\IndSetJ,\y)   {\dP}(\x_\kappa,\f,\y) \right)^2  \\
\leq  & (2s)^s\bar{v}^s L \cdot 3 LM \left( \sqrt{\frac{\log 1/\delta}{2\nP}} +
 LM \sqrt{\frac{\log 1/\delta}{2\nQ}} \right)
\end{split}
\end{equation*}
That is to say, for all $\w_\IndSetJ$,  $|\dist(J,\w_\IndSetJ)-\disthat(\IndSetJ,\w_\IndSetJ) | \leq 3 (2s\bar{v} )^s M L^2 \left( \sqrt{\frac{\log 1/\delta}{2\nP}} +
 LM \sqrt{\frac{\log 1/\delta}{2\nQ}} \right)$ with probability  $1- (\bar{v}^s L +\bar{v}^s L^2)\delta$.
 In addition, note that $\dist(\IndSetJ,\w_\IndSetJ)$ is a quadratic function over $\w_\IndSetJ$. By the linear independence assumption, $\dist(\IndSetJ,\w_\IndSetJ)$ must be strongly convex. 
 Now applying Lemma \ref{lemma:SparseShift:optsmallerror}, we have 
 \begin{equation*}
 |\dist(\IndSetJ,\w^*_\IndSetJ)-\disthat(\IndSetJ,\what^*_\IndSetJ) | \leq 3 (2s\bar{v} )^s M L^2 \left( \sqrt{\frac{\log 1/\delta}{2\nP}} +
 LM \sqrt{\frac{\log 1/\delta}{2\nQ}} \right)
 \end{equation*}
 and
  \begin{equation*}
 \|\w_\IndSetJ^*-\what^*_\IndSetJ\|_2^2 \leq \frac{12}{\lambda} (2s\bar{v} )^s M L^2 \left( \sqrt{\frac{\log 1/\delta}{2\nP}} +
 LM \sqrt{\frac{\log 1/\delta}{2\nQ}} \right)
 \end{equation*}
 where $\lambda$ is the parameter corresponding to  the strongly convexity of $\dist(\IndSetJ,\cdot)$.
 This holds for a fixed $J$ with probability  $1- (\bar{v}^s L +\bar{v}^s L^2)\delta$.
 There are $d\choose s$ many possible choices of $J$.
 Thus, with probability at least $1- d^s (\bar{v}^s L +\bar{v}^s L^2)\delta \geq 1- 2 d^s \bar{v}^s L^2 \delta $, the above holds. Replacing $2 d^s \bar{v}^s L^2 \delta$ by $\delta$ gives the desired form.
\end{proof}

Finally we are ready to prove the statement. 
By Lemma \ref{lemma:SparseShift:uniquedistance}, $\dist(\IndSet,\w^*)=0$ and for any $\IndSetJ\not=\IndSet$, we have $\dist(\IndSetJ,\w_\IndSetJ) > \dist(\IndSet,\w^*)$.
Let $c_1 = \min_{J\not=I} d(J,\w_J) >0$, $c_2 = \frac{c_1}{6(2\sparse\bar{v} )^\sparse M L^2}$, and the constant $c=c_2/c_1$. 
Now we make progresses in two steps.

\begin{itemize}
    \item First let us show with high probability, the estimated shifted features match the true shifted features.
    This is equivalent to show, with high probability,   $\disthat(\IndSet,\what^*_\IndSet) < \disthat(\IndSetJ,\what^*_\IndSetJ)$ for any $J\not=I$. To do so, let us note that, for any $J\not= I$,  
\begin{equation*}
\begin{split}
&   \disthat(\IndSet,\what^*_\IndSet) - \disthat(J,\what_\IndSetJ^*) \\ =& \disthat(\IndSet,\what^*_\IndSet)- \dist(\IndSet,\w^*_\IndSet)  - (\disthat(\IndSetJ,\what^*_\IndSetJ)-{\dist}(\IndSetJ,\w^*_\IndSetJ))  + {\dist}(\IndSet,{\w}^*_\IndSet) - {\dist}(\IndSetJ,{\w}_\IndSetJ^*) \\
    = &\disthat(\IndSet,\what^*_\IndSet)- {\dist}(\IndSet,{\w}^*_\IndSet)  - (\disthat(\IndSetJ,\what*_\IndSetJ)-{\dist}(\IndSetJ,{\w}^*_\IndSetJ)) + c_1
\end{split}
 \end{equation*}
By Lemma \ref{lemma:SparseShift:boundempiricaldistance}, with probability $1-\delta$, for any $J$,  \begin{equation*}
\begin{split}
& |\dist(J,\w_\IndSetJ^*)-\disthat(J,\what_\IndSetJ^*) | 
\\
\leq & 3 (2\sparse\bar{v} )^\sparse M L^2 \sqrt{2 \sparse \log d + \sparse \log \bar{v} +2 \log L + \log 1/\delta} \left( \sqrt{\frac{1}{2\nP}} +
 LM \sqrt{\frac{1}{2\nQ}} \right)\\ = &\frac{1}{3} c_2  \sqrt{2 \sparse \log d + \sparse \log \bar{v} +2 \log L + \log 1/\delta} \left( \sqrt{\frac{1}{2\nP}} +
 LM \sqrt{\frac{1}{2\nQ}} \right) 
\end{split}
\end{equation*}
Therefore, we have 
\begin{equation*}
\begin{split}
     & \disthat(\IndSet,\what_\IndSet^*) - \disthat(\IndSetJ,\what^*_\IndSetJ) \\
     \leq & 
 -\frac{2}{3} c_2  \sqrt{2 \sparse \log d + \sparse \log \bar{v} +2 \log L + \log 1/\delta} \left( \sqrt{\frac{1}{2\nP}} +
 LM \sqrt{\frac{1}{2\nQ}}\right) + c_1
\end{split}
 \end{equation*}
By the assumption, 
$\sqrt{2 \sparse \log d + \sparse \log \bar{v} +2 \log L + \log 1/\delta} \left( \sqrt{\frac{1}{2\nP}} +
 LM \sqrt{\frac{1}{2\nQ}}\right) < c_1/c_2$. Hence, we have
\begin{equation*}
\begin{split}
     \disthat(\IndSet,\what_\IndSet^*) - \disthat(\IndSetJ,\what^*_\IndSetJ) \leq 
 &-\frac{2}{3} c_2  \frac{c_1}{c-2} + c_1 = \frac{1}{3} c_1 <0 
\end{split}
 \end{equation*}
 for any $\IndSetJ\not=\IndSet$. Thus, the correct shifted features are selected with probability at least $1-\delta$.

    \item 
Finally we show the learned $\what^*_{\hat{J}}$ is close to the true importance weights $w^*$.
By Lemma \ref{lemma:SparseShift:boundempiricaldistance},
 \begin{equation*}
 \|\w_\IndSetJ-\what_\IndSetJ\|_2^2 \leq O\left(M L^2 \left( \sqrt{\frac{\log 1/\delta}{2\nP}} +
 LM \sqrt{\frac{\log 1/\delta}{2\nQ}} \right)\right)
 \end{equation*}
 with high probability for all $\IndSetJ$. Thus it holds for the selected features $\hat{\IndSetJ}$.
 We have just shown that with high probability, the correct shifted features are selected, i.e., $\hat{\IndSetJ}=\IndSet$.
 Thus, it simply means 
 \begin{equation*}
 \|\w^*-\what^*_{\hat{\IndSetJ}}\|_2^2 \leq O\left(M L^2 \left( \sqrt{\frac{\log 1/\delta}{2\nP}} +
 LM \sqrt{\frac{\log 1/\delta}{2\nQ}} \right)\right)
 \end{equation*}
which completes the proof.
\end{itemize}
\end{proof}

\section{Additional Discussions}
Here we provide additional discussions.
\paragraph{Motivating examples when \systemnameISS{} occurs.}
In Section \ref{Sec:SparseShift:Intro} we give one example when \systemnameISS{} occurs. Now we give two more examples to show how \systemnameISS{} broadly exists in different scenarios. 

\begin{itemize}
    \item Cancer diagnosis: Suppose we wish to build an ML model to diagnose cancer based on patient health records. The model is developed based on labeled dataset in some developed countries. However, when deploying it to hospitals in a developing country, there might be much more young patients, and the cancer rate for the elderly can also increase. Suppose the other features’ distribution remains unchanged given age and cancer diagnosis. Then the distribution shift is naturally an \systemnameISS{}.
    
    \item Toxic text recognition: Consider a mobile app that detects and filters toxic texts based on the content and senders’ information. Due to unexpected events (for example, disappointing football games), the toxic texts rate, as well as the total number of texts, may both significantly increase in some locations at different time periods. The shift of text locations and toxic text rate is thus another example of \systemnameISS{}.
    
\end{itemize}

\paragraph{Understanding  sparse covariate shifts.}
Sparse covariate shift is a special case of covariate shift~\cite{covariate_sugiyama2007}.
It occurs when the shifts are caused by a few variables. 
For example, consider two census datasets collected in two periods. If a large population moved from one city to another between the two periods and everything else remains the same, then there is a sparse covariate shift (location alone). It is also related to Adversarial patches: if adversarial noises are added to a few features (or a small number of pixels in image domains), it also corresponds to the sparse covariate shift.

\section{Experimental Details}\label{sec:SparseShift:experimentdetails}
Here we provide additional experimental details.

\paragraph{Datasets and ML tasks.}
\begin{table}[htbp]
  \centering
  \small
  \caption{Dataset statistics.}
    \begin{tabular}{|c||c|c|c|}
    \hline
    Dataset & \# of instances & \# of features & Shift types \bigstrut\\
    \hline
    \hline
    BANKCHURN & 10000 & 10    & \multirow{3}[6]{*}{Synthetic} \bigstrut\\
\cline{1-3}    COVID-19 & 660787 & 8     &  \bigstrut\\
\cline{1-3}    CREDIT & 29946 & 23    &  \bigstrut\\
    \hline
    EMPLOY & 227871 & 16    & Geography (CA,PR,IA,WI) \bigstrut\\
    \hline
    INCOME & 245783 & 10    & Geography (CO,CA,KS,OH) \bigstrut\\
    \hline
    INSURANCE & 32140 & 19    & Temporality (2014,2016,2018) \bigstrut\\
    \hline
    \end{tabular}%
  \label{tab:SparseShift:Dataset}%
\end{table}%
We use six datasets for evaluation, namely, BANKCHURN~\cite{dataset_bankchurn}, COVID-19~\cite{dataset_covid19}, and CREDIT~\cite{dataset_credit_yeh2009comparisons} for various \systemnameISS{} simulations, and EMPLOY, INCOME, and INSURANCE~\cite{dataset_census_ding2021retiring} for performance evaluation under real world distribution shifts.
BANKCHURN~\cite{dataset_bankchurn} contains 10 features such as gender, age, credit score and balance, and the goal is to predict whether a bank customer may churn. 
COVID-19~\cite{dataset_covid19} is a subset of the  publicly accessible COVID-19 dataset from the Israel government website,  containing both demographic and symptom features.
Here, we select the subset that contains all tested cases in January, 2022, and aim at predicting whether a person tests positive or negative for COVID-19.
CREDIT~\cite{dataset_credit_yeh2009comparisons} includes age, gender, education, bill payments and several other features for 29946 individuals. 
The goal is to predict the default payment. 
EMPLOY, INCOME, and INSURANCE are subset of the public use microdata samples from the US  census~\cite{dataset_census_ding2021retiring}. 
EMPLOY contain 16 features for individual samples from four different states, CA, PR, IA, and WI and our goal is to predict if a person is employed or not.
In INCOME, 245,783 anonymous census record samples from four states including CA, CO, KS, and OH are collected.  
The goal is to predict if a person's income is lower or higher than \$50,000.
INSURANCE contains
32140 individual samples from the state IA collected in year 2014, 2016, and 2018.
The task is to predict whether an individual is covered by an insurance plan. 
The dataset statistics can be found in Table \ref{tab:SparseShift:Dataset}.

\paragraph{Experiment setups.}
All experiments were run on a machine with 20 Intel Xeon E5-2660 2.6 GHz cores, 160 GB RAM, and 80 GB disk with Ubuntu 18.04 LTS as the OS. 
Our prototype was  implemented and tested in Python 3.8.
To apply \systemnameShiftAtt{}-d, we discretized all continuous features. 
To apply \systemnameShiftAtt{}-c, we set the trade-off parameter $\eta=0.001$.
For continuous features, linear functions were used as the basis.
For discrete features, indicate functions were adopted. 
For example, if $\x_1\in\R$ and $\x_2\in\{0,1\}$, then the basis functions consist of three components, $\phi_1(\x,y)=\x_1,\phi_2(\x,y)=\mathbbm{1}_{\x_2=0}$, and $\phi_3(\x,y) = \mathbbm{1}_{\x_2=1}$.
For KLIEP, the maximum number of iterations was set as 2,500. 
 
\paragraph{Effects of shift sparsity.} 
\begin{figure} \centering
\begin{subfigure}[BANKCHURN]{\label{fig:sarsity_a}\includegraphics[width=0.30\linewidth]{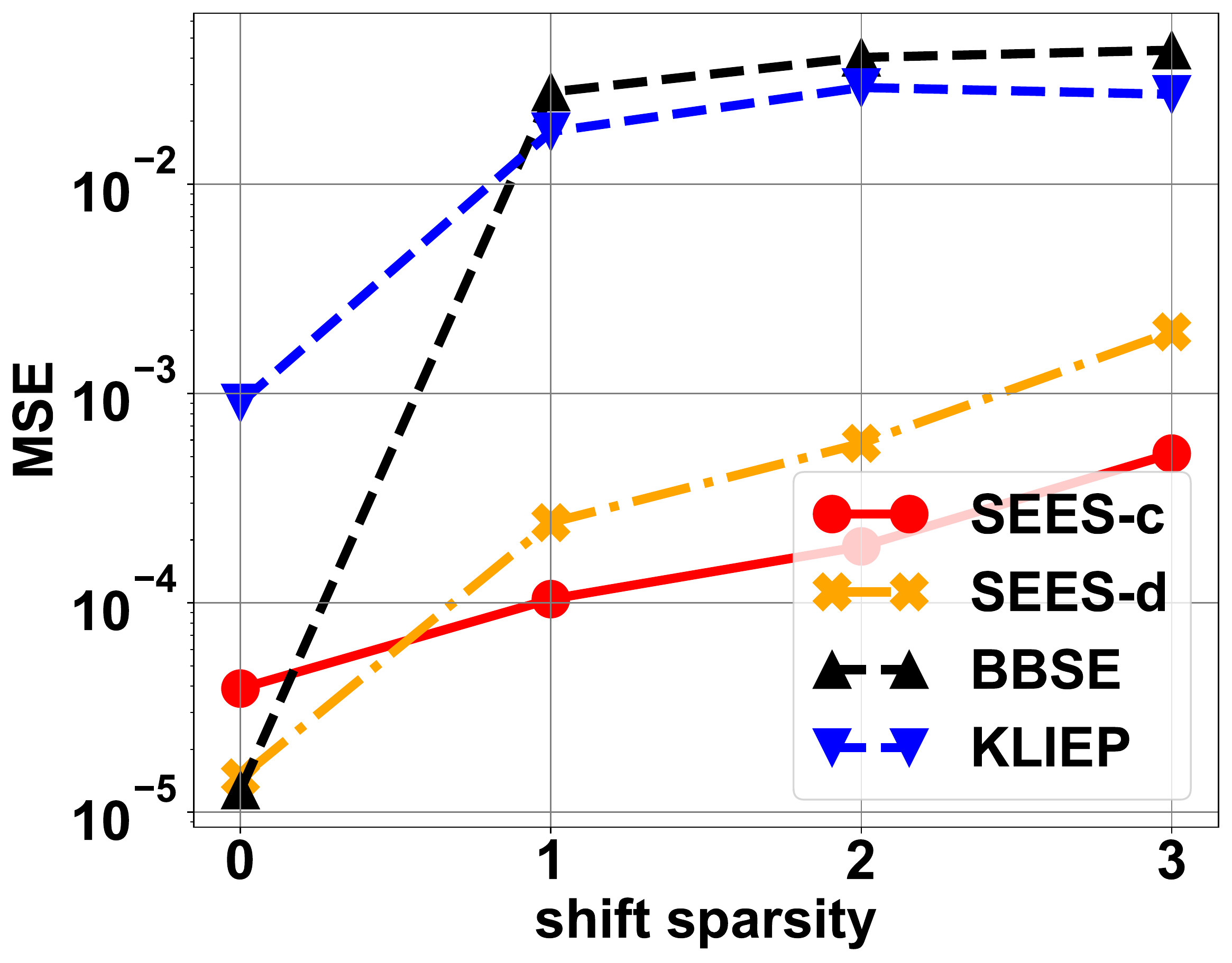}}
\end{subfigure}
\begin{subfigure}[COVID-19]{\label{fig:sparsity_b}\includegraphics[width=0.30\linewidth]{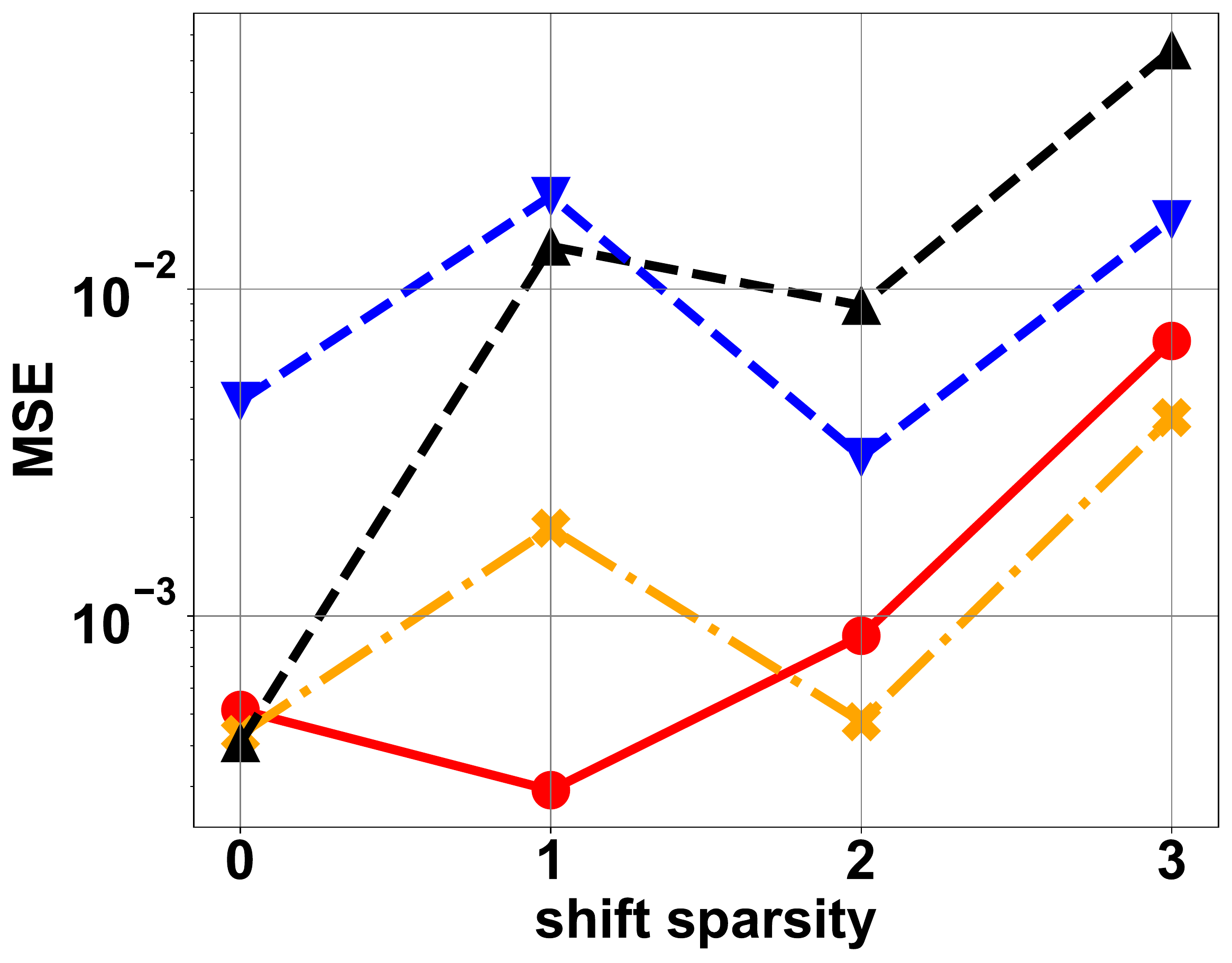}}
\end{subfigure}
\begin{subfigure}[CREDIT]{\label{fig:sparsity_c}\includegraphics[width=0.30\linewidth]{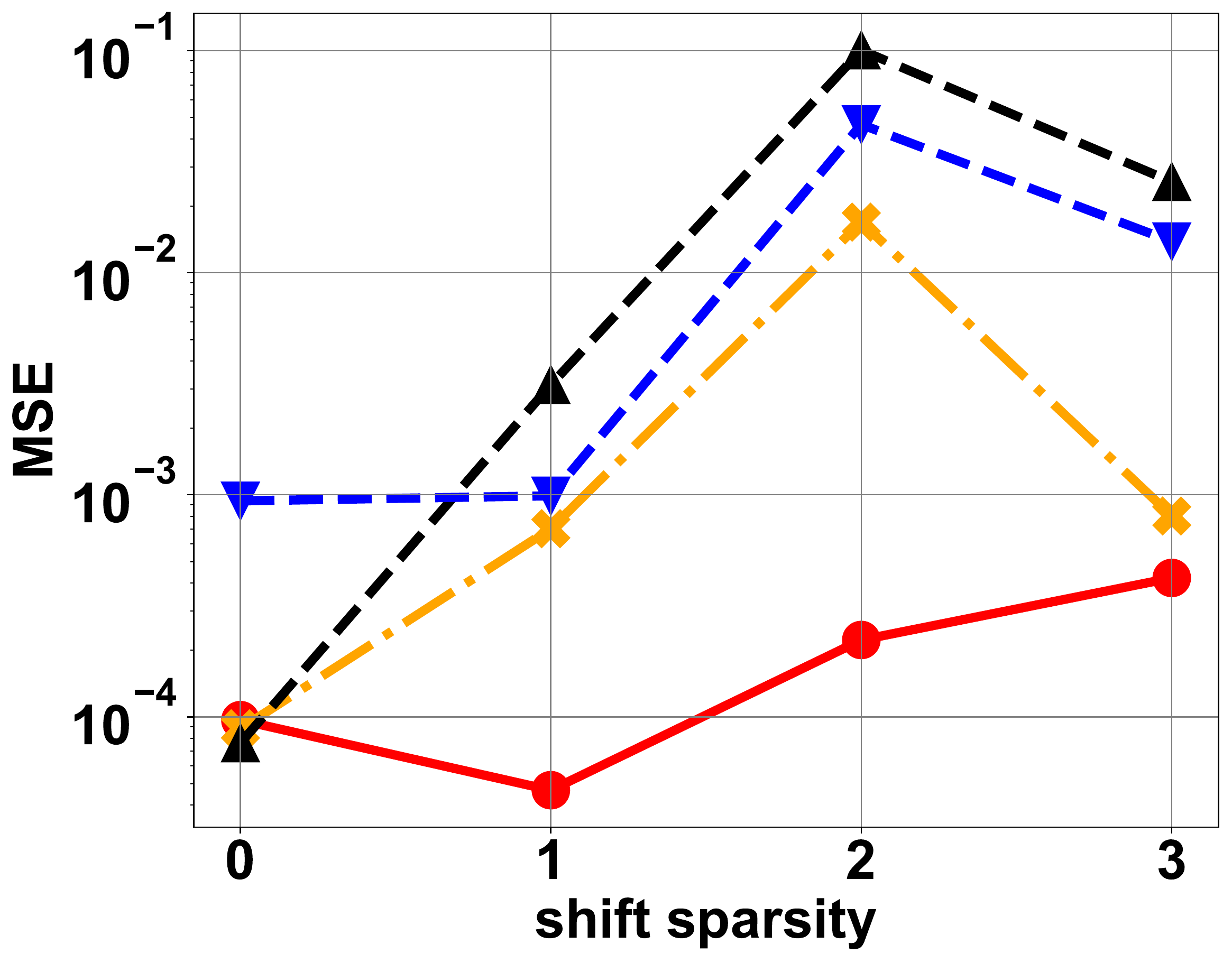}}
\end{subfigure}

\vspace{-0mm}
	\caption{{Effects of shift sparsity. For each dataset, we measure how the $\ell_2$ loss of the estimated accuracy varies as number of shifted features increases given the same sample sizes. Overall, the estimation error of \systemnameShiftAtt{} slowly grows as shift sparsity increases, but is  consistently lower than BBSE and KLIEP. }}\label{fig:SparseShift:sparsityeffect}
\end{figure}

Figure \ref{fig:SparseShift:sparsityeffect} shows how the accuracy gap estimation performance vary as the number of shifted features increase.
For each dataset, we start with shifting no features (label shift), to shifting 1, 2, and 3 features together with labels. 
Specifically, for BANKCHURN, shifts occur for (i) labels alone  (0-\systemnameISS{}), (ii) then both labels and geography feature (1-\systemnameISS{}), (iii) then labels, geography, and gender
(2-\systemnameISS{}), and (iv) finally labels, geography, gender, and credit card owned before.
For COVID-19, we start with shifting label alone (0-\systemnameISS{}), and then incrementally shift features age, gender, and contact risk to simulate 
1-\systemnameISS{}, 2-\systemnameISS{}, and 3--\systemnameISS{}, respectively.
For credits,
0-\systemnameISS{}, 1-\systemnameISS{}, 2-\systemnameISS{}, and 3-\systemnameISS{} correspond to shift in (i) labels, (ii) labels and marriage status, (iii) labels, marriage status, and gender, and (iv) labels, marriage status, gender, and credit balance. Overall, we observe that the estimation error of \systemnameShiftAtt{}-c and \systemnameShiftAtt{}-d slightly increases as the number of shifted features grows, but is consistently lower than that of BBSE and KLIEP.

\paragraph{Robustness to data  randomness.}
In Section \ref{Sec:SparseShift:Experiment} we mainly focus on the average performance metric ($\ell_2$ loss). 
Here we provide 
additional robustness measurement: on the COVID-19 dataset, we repeat the case study experiments 200 times with different random seeds to generate the source and target datasets, and report the variance of the estimated accuracy gap.
As shown in Table \ref{tab:SparseShift:robustness}, we observe that the variance for all of the methods is small: variance of SEES-c and KLIEP are are less than 0.00003, and the variance for SEES-d is 0.003 and for BBSE is 0.0003.

\begin{table}[htbp]
  \centering
  \caption{Variance of the estimated accuracy gap. The values were calculated over 200 experimental runs. Overall, the variance of all methods is small.}
    \begin{tabular}{|c||c|c|c|c|}
    \hline
    Method & SEES-c & SEES-d & KLIEP & BBSE \bigstrut\\
    \hline
    \hline
    Estimated Accuracy Gap Variance & 0.000018 & 0.0033 & 0.00005 & 0.0003 \bigstrut\\
    \hline
    \end{tabular}%
  \label{tab:SparseShift:robustness}%
\end{table}%

\paragraph{Sensitivity of the sparsity parameter in \systemnameShiftAtt{}-d.} 
\systemnameShiftAtt{}-d needs knowledge of the sparsity parameter $s$, and thus a natural question is how sensitive its performance is when the sparsity parameter does not exactly match the true sparsity. 
To study this, we first generate a source-target pair (each containing 10,000 data points) on the COVID-19 dataset where labels and 3 features (age, contact risk, and gender) shift, and then measure the $\ell_2$ loss of the estimated performance gap by \systemnameShiftAtt{}-d with sparsity parameter ranging from 0 to 7 (the number of features).
Here, sparsity parameter being 0, 1, 2, 4, 5, 6, and 7 corresponds to the mismatched case. 
Figure \ref{fig:SparseShift:sparsitysensitivity} summarizes the averaged $\ell_2$ loss over 100 experimental runs.
The randomness comes from the choices of shifted features and the samples from source and target datasets, and the shaded area indicates the standard deviation.
Overall, we observe that \systemnameShiftAtt{}-d is robust to small parameter mismatch: there is little change of the estimation error when the sparsity parameter (2, 3, 4, 5) is close to the true number of shifted features (3).
When the parameter mismatch is too large, a relatively larger change in the estimation error can be observed (though \systemnameShiftAtt{}-d still works better than BBSE even when the model mismatch is large). This is because a too small sparsity parameter restricts the search space, while a too large parameter often incurs an identifiability issue as our theory shows (i.e., different feature-label joint distributions correspond to the same observed target feature distribution). In practice, if the user has a prior belief that the distribution shift is not sparse (i.e. number of shifted features is > $d/2$), then SEES-d may not be appropriate. 

\begin{figure}
    \centering
    \includegraphics[width=0.89\linewidth]{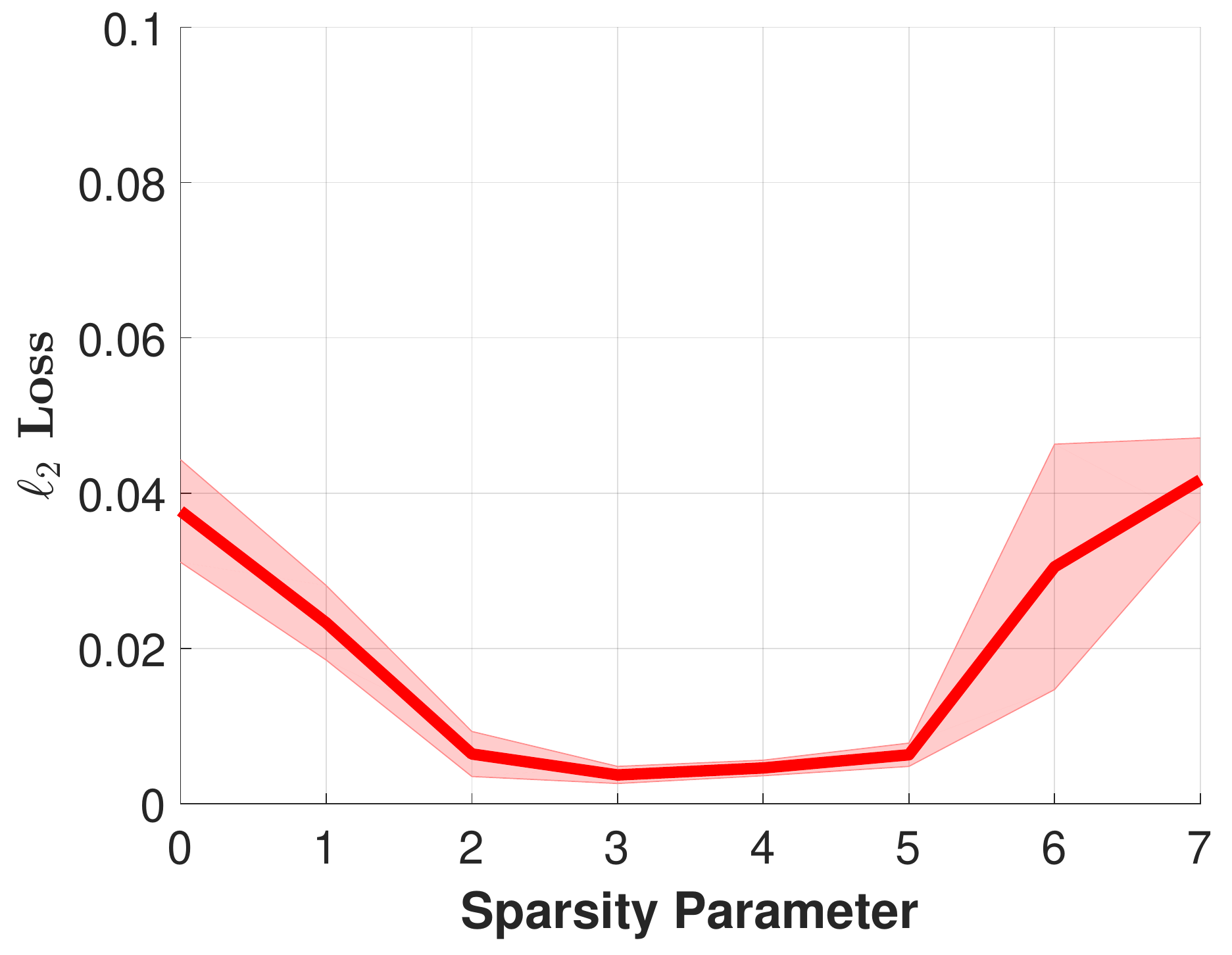}
    \caption{Sensitivity of the sparsity parameter in \systemnameShiftAtt{}-d on the COVID-19 dataset. On a source-target pair where labels and 3 features shift, we measure the $\ell_2$ loss of the estimated model performance gap produced by \systemnameShiftAtt{}-d with sparsity parameter ranging from 0 to 7 (number of features). The performance was averaged over 100 experimental runs, where the randomness comes from choices of shifted features and samples in the source-target pairs and the shaded area indicates the standard deviation.. 
	Overall, \systemnameShiftAtt{}-d is robust to small parameter mismatch: there is little change of the estimation error when the sparsity parameter (2,3,4,5) is close to the true number of shifted features (3).
	In addition, the estimation performance of \systemnameShiftAtt{}-d with small parameter mismatch is also consistently better than all baselines.}
    \label{fig:SparseShift:sparsitysensitivity}
\end{figure}

\eat{
\begin{table}[htbp]
  \centering
  \caption{Sensitivity of the sparsity parameter in \systemnameShiftAtt{}-d on the COVID-19 dataset. On a source-target pair where labels and 3 features shift, we measure the $\ell_2$ loss of the estimated model performance gap produced by \systemnameShiftAtt{}-d with sparsity parameter ranging from 0 to 7 (number of features). Overall, \systemnameShiftAtt{}-d is robust to small parameter mismatch: there is little change of the estimation error when the sparsity parameter (2, 3, 4) is close to the true number of shifted features (3).}
    \begin{tabular}{|c||c|c|c|c|c|c|c|c|}
    \hline
    \small
    sparsity param & 0     & 1     & 2     & 3     & 4     & 5     & 6     & 7 \bigstrut\\
    \hline
    \hline
    $\ell_2$ loss & 0.0307  & 0.0152  & 0.0030  & 0.0026  & 0.0060 & 0.0116 & 0.0371 & 0.0191 \bigstrut\\
    \hline
    \end{tabular}%
  \label{tab:SparseShift:sensitivity}%
\end{table}%
}

\paragraph{Comparison with additional baselines.}
For more in-depth understanding, we compare the performance of  \systemnameShiftAtt{} with an additional baseline DLU ~\cite{park2020calibrated}.
DLU basically adopts discriminative learning on the union of the source and target dataset, and then uses the classifier's prediction to reweigh the source data.  
We adopt it on the COVID-19 dataset for a case study and measure the performance of the weight and accuracy gap estimation for it along with \systemnameShiftAtt{}-c, \systemnameShiftAtt{}-d, BBSE, and KLIEP.  
As shown in Table \ref{tab:SparseShift:AdditionalBaseline}, DLU performs better than KLIEP, but is still much worse than \systemnameShiftAtt{}-c and \systemnameShiftAtt{}-d.
For example, the MSE of its estimated weights is 0.320, while that of \systemnameShiftAtt{}-d is only 0.002. 
Hence, \systemnameShiftAtt{}-c and \systemnameShiftAtt{}-d still lead to the best estimated accuracy gap. 

\begin{table}[htbp]
  \centering
  \small
  \caption{Performance of the weight and accuracy gap estimation on the COVID-19 case study. 
  Overall, \systemnameShiftAtt{}-c and \systemnameShiftAtt{}-d achieve the smallest mean square error (MSE) and largest Pearson correlation coefficient (PCC).
  Thus, their estimated accuracy gap is closest to the true gap.}
    \begin{tabular}{|c||c|c|c|c|}
    \hline
    Method & MSE   & PCC   & Est Gap & True Gap \bigstrut\\
    \hline
    \hline
    SEES-c & 0.003 & 0.996 & 18.4  & \multirow{5}[10]{*}{15.5} \bigstrut\\
\cline{1-4}    SEES-d & \textit{\textbf{0.002}} & \textit{\textbf{0.996}} & \textit{\textbf{16.7}} &  \bigstrut\\
\cline{1-4}    BBSE  & 0.144 & 0.857 & 26.3  &  \bigstrut\\
\cline{1-4}    KLIEP & 0.573 & -0.112 & 1.58  &  \bigstrut\\
\cline{1-4}    DLU   & \textit{0.320} & \textit{0.328} & \textit{0.38} &  \bigstrut\\
    \hline
    \end{tabular}%
  \label{tab:SparseShift:AdditionalBaseline}%
\end{table}%

\paragraph{Robustness across different shifts.}
Real world distribution shifts may vary, and thus it is important to understand how different methods behave when encountering different shifts. To understand this, we study the performance of \systemnameShiftAtt{}-c and \systemnameShiftAtt{}-d along with all baselines (BBSE, KLIEP, and DLU) on the COVID-19 dataset when different shifts occur. Specifically, we generate source-target pair where (i) label shift, (ii) (sparse) covariate shift (feature age), and (iii) 1-\systemnameISS{} (both label and feature age) occurs, separately. 
Table \ref{tab:SparseShift:CompareShifts} gives the squared $\ell_2$ loss of the accuracy gap estimation for all methods.  
Interestingly, we observe that \systemnameShiftAtt{}-c and \systemnameShiftAtt{}-d produce reliable accuracy gap estimation across different shifts, while all baselines are sensitive to shift types. 
For example, KLIEP and DLU achieves decent performance when there is only covariate shift, but their estimation is much worse when label shift occurs.
Similarly, the estimation error of BBSE is small when labels indeed shift but is much worse than other methods when the shift is due to covariate.  
On the other hand, the performance of \systemnameShiftAtt{}-c and \systemnameShiftAtt{}-d is as good as that of the best baseline when label or covariate shifts.
For \systemnameISS{}, the proposed methods achieve significantly better estimation than all baselines. 
Those suggest  that \systemnameShiftAtt{}-c and \systemnameShiftAtt{}-d are more robust to different shifts and thus safer to be deployed in the wild. 

\begin{table}[htbp]
  \centering
  \small
  \caption{Squared $\ell_2$ loss of the estimated accuracy gap on the COVID-19 dataset. \systemnameShiftAtt{}-c and \systemnameShiftAtt{}-d are the only approaches providing reliable estimation across all shifts.}
    \begin{tabular}{|c||c|c|c|}
    \hline
    Method & Label Shift & Covariate Shift & Joint Shift \bigstrut\\
    \hline
    \hline
    SEE-c & 0.0005  & 0.0010  & {0.0003 } \bigstrut\\
    \hline
    SEE-d & 0.0004  & 0.0015  & 0.0019  \bigstrut\\
    \hline
    BBSE  & 0.0004  & 0.0321  & 0.0135  \bigstrut\\
    \hline
    KLIEP & 0.0045  & 0.0008  & 0.0193  \bigstrut\\
    \hline
    DLU  & 0.0029  & 0.0001  & 0.0251  \bigstrut\\
    \hline
    \end{tabular}%
  \label{tab:SparseShift:CompareShifts}%
\end{table}%

\paragraph{More models on datasets with real world shifts.}
Next we provide the performance estimation for more models on datasets with real world shifts. We study the estimation performance for three models, namely, a gradient boosting, a neutral network, and a decision tree.
The maximum depth of the gradient boosting and deicision tree is 50, and the neutral network consists of two layers with 100 hidden units.
  As shown in Table \ref{tab:SparseShift:realshiftfull}, \systemnameShiftAtt{} often provides significant error reduction over the compared baselines BBSE and KLIEP.
\begin{table}[htbp]
  \centering
  \small
  \caption{Root mean square error of estimated accuracy gap (\%) under real shifts for gradient boosting, a neural network, and a decision tree.
  The numbers are averaged over all source-target pairs in each dataset.
  For each dataset and ML model, \systemnameShiftAtt{} provides significant estimation error reduction over baselines.}
    \begin{tabular}{|c|c||c|c|c|c|}
    \hline
    \multirow{2}[4]{*}{Dataset} & \multirow{2}[4]{*}{ML model} & \multicolumn{4}{c|}{Accuracy estimation's $\ell_2$ error (\%)} \bigstrut\\
\cline{3-6}          &       & \systemnameShiftAtt{}-c  & \systemnameShiftAtt{}-d & BBSE  & KLIEP \bigstrut\\
    \hline
    \hline
    \multirow{3}[6]{*}{EMPLOY} & Gradient boosting & \textbf{2.9 } & 3.0   & 5.2   & 5.2  \bigstrut\\
\cline{2-6}          & Neural network & \textbf{5.2 } & 6.0   & 6.2   & 5.6  \bigstrut\\
\cline{2-6}          & Decision tree & 4.0   & \textbf{3.8 } & 5.7   & 5.7  \bigstrut\\
    \hline
    \multirow{3}[6]{*}{INCOME} & Gradient boosting & \textbf{1.9 } & 2.4   & 3.0   & 3.4  \bigstrut\\
\cline{2-6}          & Neural network & \textbf{4.4 } & 6.4   & 9.8   & 7.6  \bigstrut\\
\cline{2-6}          & Decision tree & \textbf{2.4 } & 2.9   & 3.0   & 3.8  \bigstrut\\
    \hline
    \multirow{3}[6]{*}{INSURANCE} & Gradient boosting & \textbf{1.7 } & 2.2   & 2.1   & 5.0  \bigstrut\\
\cline{2-6}          & Neural network & \textbf{2.2 } & 4.7   & 10.8  & 7.3  \bigstrut\\
\cline{2-6}          & Decision tree & \textbf{2.0 } & 2.5   & 2.4   & 5.9  \bigstrut\\
    \hline
    \end{tabular}%
  \label{tab:SparseShift:realshiftfull}%
\end{table}%

\eat{raw estimation results on real world datasets.

\begin{table}[htbp]
  \centering
  \caption{Employ Grad Boosting.}
    \begin{tabular}{|c|c|c|c|c|c|c|c|}
    \hline
    \multicolumn{2}{|c|}{State} & \multicolumn{2}{c|}{Accuracy} & \multicolumn{4}{c|}{Estimated Accuracy} \bigstrut\\
    \hline
    Source & Target & Source & Target & BBSE  & KLIEP & \systemnameShiftAtt{}-c & \systemnameShiftAtt{}-d \bigstrut\\
    \hline
    \multirow{3}[6]{*}{CA} & IA    & \multirow{3}[6]{*}{81.5 } & 83.8  & -2.7  & -2.4  & -3.2  & -4.3  \bigstrut\\
\cline{2-2}\cline{4-8}          & PR    &       & 77.7  & 2.9   & 1.6   & -0.4  & -4.4  \bigstrut\\
\cline{2-2}\cline{4-8}          & WI    &       & 83.6  & -2.3  & -2.8  & -3.0  & -4.6  \bigstrut\\
    \hline
    \multirow{3}[6]{*}{IA} & CA    & \multirow{3}[6]{*}{84.6 } & 79.5  & 5.3   & 5.5   & 3.3   & 3.0  \bigstrut\\
\cline{2-2}\cline{4-8}          & PR    &       & 74.3  & 10.0  & 10.2  & -0.3  & 0.6  \bigstrut\\
\cline{2-2}\cline{4-8}          & WI    &       & 83.9  & 0.9   & -3.8  & 0.7   & -0.2  \bigstrut\\
    \hline
    \multirow{3}[6]{*}{PR} & CA    & \multirow{3}[6]{*}{82.6 } & 77.0  & 1.2   & 2.4   & -3.0  & 0.0  \bigstrut\\
\cline{2-2}\cline{4-8}          & PR    &       & 74.1  & 6.7   & 6.0   & 5.4   & 2.9  \bigstrut\\
\cline{2-2}\cline{4-8}          & WI    &       & 74.6  & 5.8   & 4.7   & 4.8   & -0.2  \bigstrut\\
    \hline
    \multirow{3}[6]{*}{WI} & CA    & \multirow{3}[6]{*}{84.2 } & 80.0  & 4.3   & 4.6   & 4.1   & 3.5  \bigstrut\\
\cline{2-2}\cline{4-8}          & IA    &       & 84.5  & -0.5  & 0.3   & -0.6  & -0.3  \bigstrut\\
\cline{2-2}\cline{4-8}          & PR    &       & 75.0  & 8.8   & 8.4   & -0.4  & 3.5  \bigstrut\\
    \hline
    \multicolumn{4}{|c|}{Overall root mean square error} & 5.2   & 5.2   & 3.0   & \textbf{2.9 } \bigstrut\\
    \hline
    \multicolumn{4}{|c|}{Overall mean absolute error} & 4.3   & 4.4   & 2.4   & \textbf{2.3 } \bigstrut\\
    \hline
    \end{tabular}%
  \label{tab:addlabel}%
\end{table}%

\begin{table}[htbp]
  \centering
  \caption{Employ + decision tree.}
    \begin{tabular}{|c|c|c|c|c|c|c|c|}
    \hline
    \multicolumn{2}{|c|}{State} & \multicolumn{2}{c|}{Accuracy} & \multicolumn{4}{c|}{Estimated Accuracy} \bigstrut\\
    \hline
    Source & Target & Source & Target & BBSE  & Klieps & Kiss (model free) & Kiss (Linear) \bigstrut\\
    \hline
    \multirow{3}[6]{*}{CA} & IA    & \multirow{3}[6]{*}{80.3 } & 84.3  & -5.3  & -3.3  & -6.7  & -7.6  \bigstrut\\
\cline{2-2}\cline{4-8}          & PR    &       & 74.0  & 4.5   & 4.8   & 5.0   & -5.2  \bigstrut\\
\cline{2-2}\cline{4-8}          & WI    &       & 83.9  & -4.3  & -3.5  & -5.8  & -7.5  \bigstrut\\
    \hline
    \multirow{3}[6]{*}{IA} & CA    & \multirow{3}[6]{*}{84.3 } & 78.6  & 6.2   & 6.2   & 3.2   & 3.5  \bigstrut\\
\cline{2-2}\cline{4-8}          & PR    &       & 72.0  & 12.1  & 12.1  & 1.2   & -0.1  \bigstrut\\
\cline{2-2}\cline{4-8}          & WI    &       & 83.4  & 1.1   & -3.3  & 0.9   & 0.2  \bigstrut\\
    \hline
    \multirow{3}[6]{*}{PR} & CA    & \multirow{3}[6]{*}{81.3 } & 74.8  & 2.1   & 3.4   & -3.2  & 1.8  \bigstrut\\
\cline{2-2}\cline{4-8}          & PR    &       & 74.6  & 4.0   & 4.5   & -0.8  & 2.3  \bigstrut\\
\cline{2-2}\cline{4-8}          & WI    &       & 74.4  & 4.0   & 3.7   & 1.2   & 0.8  \bigstrut\\
    \hline
    \multirow{3}[6]{*}{WI} & CA    & \multirow{3}[6]{*}{83.9 } & 79.4  & 4.6   & 5.1   & 4.3   & 3.5  \bigstrut\\
\cline{2-2}\cline{4-8}          & IA    &       & 84.4  & -0.6  & 0.3   & -0.5  & -0.5  \bigstrut\\
\cline{2-2}\cline{4-8}          & PR    &       & 74.5  & 9.0   & 8.9   & -4.8  & 3.8  \bigstrut\\
    \hline
    \multicolumn{4}{|c|}{Overall root mean square error} & 5.7   & 5.7   & \textbf{3.8 } & 4.0  \bigstrut\\
    \hline
    \multicolumn{4}{|c|}{Overall mean absolute error} & 4.8   & 4.9   & \textbf{3.2 } & 3.1  \bigstrut\\
    \hline
    \end{tabular}%
  \label{tab:addlabel}%
\end{table}%

\begin{table}[htbp]
  \centering
  \caption{Employ + MLP.}
    \begin{tabular}{|c|c|c|c|c|c|c|c|}
    \hline
    \multicolumn{2}{|c|}{State} & \multicolumn{2}{c|}{Accuracy} & \multicolumn{4}{c|}{Estimated Accuracy} \bigstrut\\
    \hline
    Source & Target & Source & Target & BBSE  & Klieps & Kiss (model free) & Kiss (Linear) \bigstrut\\
    \hline
    \multirow{3}[6]{*}{CA} & IA    & \multirow{3}[6]{*}{80.3 } & 79.2  & 1.2   & -1.5  & 1.9   & -0.6  \bigstrut\\
\cline{2-2}\cline{4-8}          & PR    &       & 76.4  & 2.1   & 0.4   & -4.1  & -2.0  \bigstrut\\
\cline{2-2}\cline{4-8}          & WI    &       & 82.4  & -2.0  & -1.9  & -2.3  & -2.1  \bigstrut\\
    \hline
    \multirow{3}[6]{*}{IA} & CA    & \multirow{3}[6]{*}{83.1 } & 77.3  & 7.1   & 5.1   & 6.5   & 8.3  \bigstrut\\
\cline{2-2}\cline{4-8}          & PR    &       & 74.2  & 8.8   & 8.5   & 2.7   & 0.2  \bigstrut\\
\cline{2-2}\cline{4-8}          & WI    &       & 81.0  & 2.2   & 10.5  & 1.8   & 5.4  \bigstrut\\
    \hline
    \multirow{3}[6]{*}{PR} & CA    & \multirow{3}[6]{*}{81.8 } & 73.9  & 3.1   & 2.2   & 1.8   & 7.0  \bigstrut\\
\cline{2-2}\cline{4-8}          & PR    &       & 69.9  & 11.5  & 6.1   & 11.7  & 13.4  \bigstrut\\
\cline{2-2}\cline{4-8}          & WI    &       & 71.5  & 10.0  & 5.2   & 10.2  & 9.5  \bigstrut\\
    \hline
    \multirow{3}[6]{*}{WI} & CA    & \multirow{3}[6]{*}{83.2 } & 79.3  & 3.4   & 4.2   & -0.4  & 1.0  \bigstrut\\
\cline{2-2}\cline{4-8}          & IA    &       & 83.9  & -0.6  & 0.1   & -0.5  & -0.3  \bigstrut\\
\cline{2-2}\cline{4-8}          & PR    &       & 74.6  & 8.5   & 9.0   & 0.5   & 2.3  \bigstrut\\
    \hline
    \multicolumn{4}{|c|}{Overall root mean square error} & 6.2   & 5.6   & \textbf{5.2 } & 6.0  \bigstrut\\
    \hline
    \multicolumn{4}{|c|}{Overall mean absolute error} & 5.0   & 4.6   & \textbf{3.7 } & 4.3  \bigstrut\\
    \hline
    \end{tabular}%
  \label{tab:addlabel}%
\end{table}%

\begin{table}[htbp]
  \centering
  \caption{employ + Logistic.}
    \begin{tabular}{|c|c|c|c|c|c|c|c|}
    \hline
    \multicolumn{2}{|c|}{State} & \multicolumn{2}{c|}{Accuracy} & \multicolumn{4}{c|}{Estimated Accuracy} \bigstrut\\
    \hline
    Source & Target & Source & Target & BBSE  & Klieps & Kiss (model free) & Kiss (Linear) \bigstrut\\
    \hline
    \multirow{3}[6]{*}{CA} & IA    & \multirow{3}[6]{*}{76.3 } & 79.3  & -3.7  & -1.9  & -6.5  & -4.9  \bigstrut\\
\cline{2-2}\cline{4-8}          & PR    &       & 74.6  & 0.4   & 0.8   & 1.7   & -10.6  \bigstrut\\
\cline{2-2}\cline{4-8}          & WI    &       & 78.6  & -2.8  & -2.3  & -5.5  & -5.5  \bigstrut\\
    \hline
    \multirow{3}[6]{*}{IA} & CA    & \multirow{3}[6]{*}{81.1 } & 75.6  & 5.6   & 6.3   & -1.4  & 3.5  \bigstrut\\
\cline{2-2}\cline{4-8}          & PR    &       & 73.4  & 7.0   & 8.3   & 7.7   & -1.2  \bigstrut\\
\cline{2-2}\cline{4-8}          & WI    &       & 80.2  & 1.0   & 12.9  & 0.8   & 0.4  \bigstrut\\
    \hline
    \multirow{3}[6]{*}{PR} & CA    & \multirow{3}[6]{*}{78.2 } & 73.4  & -7.3  & 0.8   & -1.9  & -8.8  \bigstrut\\
\cline{2-2}\cline{4-8}          & PR    &       & 71.6  & 2.4   & 5.0   & -0.5  & -7.0  \bigstrut\\
\cline{2-2}\cline{4-8}          & WI    &       & 71.1  & 2.6   & 4.5   & -4.5  & -9.4  \bigstrut\\
    \hline
    \multirow{3}[6]{*}{WI} & CA    & \multirow{3}[6]{*}{80.4 } & 75.6  & 4.6   & 5.4   & -1.3  & 4.0  \bigstrut\\
\cline{2-2}\cline{4-8}          & IA    &       & 81.0  & -0.8  & 0.6   & -0.6  & -0.4  \bigstrut\\
\cline{2-2}\cline{4-8}          & PR    &       & 73.1  & 5.9   & 7.5   & 7.3   & -2.0  \bigstrut\\
    \hline
    \multicolumn{4}{|c|}{Overall root mean square error} & 4.3   & 5.9   & \textbf{4.3 } & 5.9  \bigstrut\\
    \hline
    \multicolumn{4}{|c|}{Overall mean absolute error} & 3.7   & 4.7   & \textbf{3.3 } & 4.8  \bigstrut\\
    \hline
    \end{tabular}%
  \label{tab:addlabel}%
\end{table}%

\begin{table}[htbp]
  \centering
  \caption{Income + grad boosting. }
    \begin{tabular}{|c|c|c|c|c|c|c|c|}
    \hline
    \multicolumn{2}{|c|}{State} & \multicolumn{2}{c|}{Accuracy} & \multicolumn{4}{c|}{Estimated Accuracy} \bigstrut\\
    \hline
    Source & Target & Source & Target & BBSE  & Klieps & Kiss (model free) & Kiss (Linear) \bigstrut\\
    \hline
    \multirow{3}[6]{*}{CA} & CO    & \multirow{3}[6]{*}{82.0 } & 78.5  & 1.2   & \textbf{0.1 } & 1.6   & -0.6  \bigstrut\\
\cline{4-8}          & KS    &       & 76.6  & 3.9   & \textbf{0.5 } & 3.1   & -1.6  \bigstrut\\
\cline{4-8}          & OK    &       & 77.5  & 4.8   & \textbf{-0.6 } & 3.1   & -1.4  \bigstrut\\
\cline{1-1}\cline{3-8}    \multirow{3}[6]{*}{CO} & CA    & \multirow{3}[6]{*}{79.1 } & 80.3  & 1.0   & -2.7  & \textbf{0.3 } & 1.7  \bigstrut\\
\cline{4-8}          & KS    &       & 79.8  & 0.6   & -4.7  & \textbf{0.2 } & 0.5  \bigstrut\\
\cline{4-8}          & OK    &       & 79.8  & 2.8   & -4.7  & 3.3   & \textbf{2.8 } \bigstrut\\
\cline{1-1}\cline{3-8}    \multirow{3}[6]{*}{KS} & CA    & \multirow{3}[6]{*}{80.1 } & 77.8  & 2.3   & \textbf{0.2 } & 3.3   & 2.6  \bigstrut\\
\cline{4-8}          & CO    &       & 78.2  & -2.7  & -1.5  & \textbf{-0.3 } & 0.4  \bigstrut\\
\cline{4-8}          & OK    &       & 81.4  & \textbf{2.2 } & -5.2  & 3.4   & 3.1  \bigstrut\\
\cline{1-1}\cline{3-8}    \multirow{3}[6]{*}{OK} & CA    & \multirow{3}[6]{*}{81.5 } & 76.6  & \textbf{0.5 } & 2.5   & 1.3   & 2.2  \bigstrut\\
\cline{4-8}          & CO    &       & 77.5  & -7.3  & \textbf{-1.3 } & -3.1  & -2.0  \bigstrut\\
\cline{4-8}          & KS    &       & 80.3  & -4.8  & -4.6  & \textbf{-0.4 } & -1.4  \bigstrut\\
    \hline
    \multicolumn{4}{|c|}{Overall root mean square error} & 3.4   & 3.0   & 2.4   & \textbf{1.9 } \bigstrut\\
    \hline
    \multicolumn{4}{|c|}{Overall mean absolute error} & 2.8   & 2.4   & 1.9   & \textbf{1.7 } \bigstrut\\
    \hline
    \end{tabular}%
  \label{tab:addlabel}%
\end{table}%

\begin{table}[htbp]
  \centering
  \caption{Income + decision tree.}
    \begin{tabular}{|c|c|c|c|c|c|c|c|}
    \hline
    \multicolumn{2}{|c|}{State} & \multicolumn{2}{c|}{Accuracy} & \multicolumn{4}{c|}{Estimated Accuracy} \bigstrut\\
    \hline
    Source & Target & Source & Target & BBSE  & Klieps & Kiss (model free) & Kiss (Linear) \bigstrut\\
    \hline
    \multirow{3}[6]{*}{CA} & CO    & \multirow{3}[6]{*}{79.9 } & 77.2  & 0.4   & -1.0  & \textbf{0.4 } & -0.6  \bigstrut\\
\cline{4-8}          & KS    &       & 75.1  & 3.8   & \textbf{-1.1 } & 2.4   & -2.6  \bigstrut\\
\cline{4-8}          & OK    &       & 76.0  & 5.2   & -2.5  & \textbf{1.7 } & -3.0  \bigstrut\\
\cline{1-1}\cline{3-8}    \multirow{3}[6]{*}{CO} & CA    & \multirow{3}[6]{*}{76.5 } & 79.3  & \textbf{-0.7 } & -3.9  & -1.5  & -1.5  \bigstrut\\
\cline{4-8}          & KS    &       & 76.2  & 2.4   & -3.9  & 1.9   & \textbf{1.4 } \bigstrut\\
\cline{4-8}          & OK    &       & 77.3  & 3.3   & -5.0  & 3.5   & \textbf{2.8 } \bigstrut\\
\cline{1-1}\cline{3-8}    \multirow{3}[6]{*}{KS} & CA    & \multirow{3}[6]{*}{78.9 } & 76.3  & 0.7   & \textbf{0.5 } & 3.3   & 2.6  \bigstrut\\
\cline{4-8}          & CO    &       & 76.6  & -4.4  & -1.5  & -1.8  & \textbf{0.8 } \bigstrut\\
\cline{4-8}          & OK    &       & 79.7  & 5.2   & \textbf{-4.5 } & 6.2   & 4.7  \bigstrut\\
\cline{1-1}\cline{3-8}    \multirow{3}[6]{*}{OK} & CA    & \multirow{3}[6]{*}{79.3 } & 75.0  & 0.6   & 2.2   & 3.4   & \textbf{0.2 } \bigstrut\\
\cline{4-8}          & CO    &       & 76.1  & -6.9  & \textbf{-1.8 } & -3.2  & -2.6  \bigstrut\\
\cline{4-8}          & KS    &       & 78.4  & -4.6  & -3.9  & \textbf{-1.5 } & -1.9  \bigstrut\\
    \hline
    \multicolumn{4}{|c|}{Overall root mean square error} & 3.8   & 3.0   & 2.9   & \textbf{2.4 } \bigstrut\\
    \hline
    \multicolumn{4}{|c|}{Overall mean absolute error} & 3.2   & 2.6   & 2.6   & \textbf{2.1 } \bigstrut\\
    \hline
    \end{tabular}%
  \label{tab:addlabel}%
\end{table}%

\begin{table}[htbp]
  \centering
  \caption{Income + MLP}
    \begin{tabular}{|c|c|c|c|c|c|c|c|}
    \hline
    \multicolumn{2}{|c|}{State} & \multicolumn{2}{c|}{Accuracy} & \multicolumn{4}{c|}{Estimated Accuracy} \bigstrut\\
    \hline
    Source & Target & Source & Target & BBSE  & Klieps & Kiss (model free) & Kiss (Linear) \bigstrut\\
    \hline
    \multirow{3}[6]{*}{CA} & CO    & \multirow{3}[6]{*}{78.1 } & 74.5  & 3.6   & -4.6  & 2.2   & \textbf{1.1 } \bigstrut\\
\cline{4-8}          & KS    &       & 62.6  & 16.2  & -2.5  & \textbf{4.9 } & 6.3  \bigstrut\\
\cline{4-8}          & OK    &       & 78.3  & 2.5   & 15.2  & 2.5   & \textbf{-1.8 } \bigstrut\\
\cline{1-1}\cline{3-8}    \multirow{3}[6]{*}{CO} & CA    & \multirow{3}[6]{*}{73.4 } & 71.8  & 11.3  & -0.9  & 9.7   & \textbf{3.1 } \bigstrut\\
\cline{4-8}          & KS    &       & 76.6  & 2.9   & 2.9   & 3.2   & \textbf{-2.9 } \bigstrut\\
\cline{4-8}          & OK    &       & 78.4  & 6.0   & \textbf{-4.4 } & 6.2   & \textbf{-5.6 } \bigstrut\\
\cline{1-1}\cline{3-8}    \multirow{3}[6]{*}{KS} & CA    & \multirow{3}[6]{*}{69.0 } & 66.6  & 5.0   & \textbf{2.5 } & 8.9   & 10.3  \bigstrut\\
\cline{4-8}          & CO    &       & 66.9  & -10.6  & -11.0  & -9.0  & \textbf{-3.0 } \bigstrut\\
\cline{4-8}          & OK    &       & 73.7  & -0.7  & 4.2   & \textbf{1.0 } & 3.3  \bigstrut\\
\cline{1-1}\cline{3-8}    \multirow{3}[6]{*}{OK} & CA    & \multirow{3}[6]{*}{77.8 } & 72.6  & 4.7   & 4.7   & 5.1   & \textbf{3.6 } \bigstrut\\
\cline{4-8}          & CO    &       & 71.4  & -19.3  & -9.7  & -10.9  & \textbf{2.0 } \bigstrut\\
\cline{4-8}          & KS    &       & 75.4  & -12.8  & -11.7  & -3.3  & \textbf{0.0 } \bigstrut\\
    \hline
    \multicolumn{4}{|c|}{Overall root mean square error} & 9.8   & 7.6   & 6.4   & \textbf{4.4 } \bigstrut\\
    \hline
    \multicolumn{4}{|c|}{Overall mean absolute error} & 8.0   & 6.2   & 5.6   & \textbf{3.6 } \bigstrut\\
    \hline
    \end{tabular}%
  \label{tab:addlabel}%
\end{table}%

\begin{table}[htbp]
  \centering
  \caption{Income+Logistic.}
    \begin{tabular}{|c|c|c|c|c|c|c|c|}
    \hline
    \multicolumn{2}{|c|}{State} & \multicolumn{2}{c|}{Accuracy} & \multicolumn{4}{c|}{Estimated Accuracy} \bigstrut\\
    \hline
    Source & Target & Source & Target & BBSE  & Klieps & Kiss (model free) & Kiss (Linear) \bigstrut\\
    \hline
    \multirow{3}[6]{*}{CA} & CO    & \multirow{3}[6]{*}{78.2 } & 75.7  & -1.4  & -1.2  & \textbf{-0.5 } & -4.8  \bigstrut\\
\cline{4-8}          & KS    &       & 74.7  & \textbf{0.6 } & -1.2  & 1.0   & -7.9  \bigstrut\\
\cline{4-8}          & OK    &       & 76.0  & 2.2   & -2.5  & \textbf{0.9 } & -5.7  \bigstrut\\
\cline{1-1}\cline{3-8}    \multirow{3}[6]{*}{CO} & CA    & \multirow{3}[6]{*}{76.4 } & 77.2  & 3.2   & \textbf{-2.6 } & 2.7   & 3.2  \bigstrut\\
\cline{4-8}          & KS    &       & 77.7  & \textbf{-0.1 } & -4.7  & -0.2  & -0.7  \bigstrut\\
\cline{4-8}          & OK    &       & 79.5  & 1.3   & -6.2  & 1.4   & \textbf{0.2 } \bigstrut\\
\cline{1-1}\cline{3-8}    \multirow{3}[6]{*}{KS} & CA    & \multirow{3}[6]{*}{78.4 } & 74.4  & 6.6   & \textbf{1.2 } & 6.1   & 4.9  \bigstrut\\
\cline{4-8}          & CO    &       & 75.0  & 0.6   & \textbf{-0.6 } & 2.1   & 1.3  \bigstrut\\
\cline{4-8}          & OK    &       & 79.7  & 3.2   & -4.8  & \textbf{3.0 } & 3.0  \bigstrut\\
\cline{1-1}\cline{3-8}    \multirow{3}[6]{*}{OK} & CA    & \multirow{3}[6]{*}{80.0 } & 73.8  & 5.9   & \textbf{3.0 } & 6.0   & 8.4  \bigstrut\\
\cline{4-8}          & CO    &       & 75.0  & -4.1  & -1.5  & 1.1   & \textbf{-0.5 } \bigstrut\\
\cline{4-8}          & KS    &       & 78.6  & -4.9  & -4.9  & \textbf{0.6 } & -2.4  \bigstrut\\
    \hline
    \multicolumn{4}{|c|}{Overall root mean square error} & 3.5   & 3.4   & \textbf{2.9 } & 4.5  \bigstrut\\
    \hline
    \multicolumn{4}{|c|}{Overall mean absolute error} & 2.8   & 2.9   & \textbf{2.1 } & 3.6  \bigstrut\\
    \hline
    \end{tabular}%
  \label{tab:addlabel}%
\end{table}%

\begin{table}[htbp]
  \centering
  \caption{publi +grad boosting.}
    \begin{tabular}{|c|c|c|c|c|c|c|c|}
    \hline
    \multicolumn{2}{|c|}{State} & \multicolumn{2}{c|}{Accuracy} & \multicolumn{4}{c|}{Estimated Accuracy} \bigstrut\\
    \hline
    Source & Target & Source & Target & BBSE  & Klieps & Kiss (model free) & Kiss (Linear) \bigstrut\\
    \hline
    \multirow{2}[4]{*}{2014} & 2016  & \multirow{3}[6]{*}{81.5 } & 79.7  & 3.0   & 6.5   & \textbf{0.0 } & 3.0  \bigstrut\\
\cline{2-2}\cline{4-8}          & 2018  &       & 77.7  & 3.8   & 8.5   & \textbf{2.1 } & 3.9  \bigstrut\\
\cline{1-2}\cline{4-8}    \multirow{2}[4]{*}{2016} & 2014  &       & 81.7  & \textbf{-0.1 } & 1.9   & 1.3   & -1.8  \bigstrut\\
\cline{2-8}          & 2018  & \multirow{3}[6]{*}{79.2 } & 78.3  & 1.6   & 5.4   & 2.4   & \textbf{1.0 } \bigstrut\\
\cline{1-2}\cline{4-8}    \multirow{2}[4]{*}{2018} & 2014  &       & 80.9  & -1.4  & 1.4   & -2.2  & \textbf{-1.1 } \bigstrut\\
\cline{2-2}\cline{4-8}          & 2016  &       & 80.1  & -0.5  & 1.7   & -0.8  & \textbf{-0.1 } \bigstrut\\
    \hline
    \multicolumn{4}{|c|}{Overall root mean square error} & 2.1   & 5.0   & \textbf{1.7 } & 2.2  \bigstrut\\
    \hline
    \multicolumn{4}{|c|}{Overall mean absolute error} & 1.7   & 4.2   & \textbf{1.5 } & 1.8  \bigstrut\\
    \hline
    \end{tabular}%
  \label{tab:addlabel}%
\end{table}%

\begin{table}[htbp]
  \centering
  \caption{public +decision tree.}
    \begin{tabular}{|c|c|c|c|c|c|c|c|}
    \hline
    \multicolumn{2}{|c|}{State} & \multicolumn{2}{c|}{Accuracy} & \multicolumn{4}{c|}{Estimated Accuracy} \bigstrut\\
    \hline
    Source & Target & Source & Target & BBSE  & Klieps & Kiss (model free) & Kiss (Linear) \bigstrut\\
    \hline
    \multirow{2}[4]{*}{2014} & 2016  & \multirow{3}[6]{*}{80.3 } & 78.2  & 3.2   & 7.6   & \textbf{1.7 } & 3.3  \bigstrut\\
\cline{2-2}\cline{4-8}          & 2018  &       & 76.7  & 3.7   & 9.1   & \textbf{2.8 } & 4.3  \bigstrut\\
\cline{1-2}\cline{4-8}    \multirow{2}[4]{*}{2016} & 2014  &       & 79.2  & \textbf{0.2 } & 3.3   & 2.5   & -0.5  \bigstrut\\
\cline{2-8}          & 2018  & \multirow{3}[6]{*}{77.3 } & 75.7  & 3.2   & 6.9   & 2.6   & \textbf{2.8 } \bigstrut\\
\cline{1-2}\cline{4-8}    \multirow{2}[4]{*}{2018} & 2014  &       & 78.4  & -0.5  & 2.4   & -0.1  & \textbf{-0.9 } \bigstrut\\
\cline{2-2}\cline{4-8}          & 2016  &       & 78.0  & -0.2  & 2.4   & -0.2  & \textbf{-0.1 } \bigstrut\\
    \hline
    \multicolumn{4}{|c|}{Overall root mean square error} & 2.4   & 5.9   & \textbf{2.0 } & 2.5  \bigstrut\\
    \hline
    \multicolumn{4}{|c|}{Overall mean absolute error} & 1.8   & 5.3   & \textbf{1.7 } & 2.0  \bigstrut\\
    \hline
    \end{tabular}%
  \label{tab:addlabel}%
\end{table}%

\begin{table}[htbp]
  \centering
  \caption{public + MLP.}
    \begin{tabular}{|c|c|c|c|c|c|c|c|}
    \hline
    \multicolumn{2}{|c|}{State} & \multicolumn{2}{c|}{Accuracy} & \multicolumn{4}{c|}{Estimated Accuracy} \bigstrut\\
    \hline
    Source & Target & Source & Target & BBSE  & Klieps & Kiss (model free) & Kiss (Linear) \bigstrut\\
    \hline
    \multirow{2}[4]{*}{2014} & 2016  & \multirow{3}[6]{*}{73.9 } & 71.4  & -0.7  & 9.6   & \textbf{-0.4 } & 2.7  \bigstrut\\
\cline{2-2}\cline{4-8}          & 2018  &       & 70.3  & -14.8  & 11.8  & \textbf{-7.8 } & 2.6  \bigstrut\\
\cline{1-2}\cline{4-8}    \multirow{2}[4]{*}{2016} & 2014  &       & 74.8  & \textbf{7.2 } & 3.5   & 2.4   & -2.3  \bigstrut\\
\cline{2-8}          & 2018  & \multirow{3}[6]{*}{68.5 } & 67.3  & 0.7   & 7.4   & 0.1   & \textbf{1.1 } \bigstrut\\
\cline{1-2}\cline{4-8}    \multirow{2}[4]{*}{2018} & 2014  &       & 71.0  & 8.6   & -3.2  & 7.8   & \textbf{-2.7 } \bigstrut\\
\cline{2-2}\cline{4-8}          & 2016  &       & 71.1  & 18.7  & 3.1   & -1.6  & \textbf{-1.1 } \bigstrut\\
    \hline
    \multicolumn{4}{|c|}{Overall root mean square error} & 10.8  & 7.3   & 4.7   & \textbf{2.2 } \bigstrut\\
    \hline
    \multicolumn{4}{|c|}{Overall mean absolute error} & 8.4   & 6.4   & 3.3   & \textbf{2.1 } \bigstrut\\
    \hline
    \end{tabular}%
  \label{tab:addlabel}%
\end{table}%

\begin{table}[htbp]
  \centering
  \caption{public + Logistic.}
    \begin{tabular}{|c|c|c|c|c|c|c|c|}
    \hline
    \multicolumn{2}{|c|}{State} & \multicolumn{2}{c|}{Accuracy} & \multicolumn{4}{c|}{Estimated Accuracy} \bigstrut\\
    \hline
    Source & Target & Source & Target & BBSE  & Klieps & Kiss (model free) & Kiss (Linear) \bigstrut\\
    \hline
    \multirow{2}[4]{*}{2014} & 2016  & \multirow{3}[6]{*}{78.3 } & 76.6  & 0.1   & 7.2   & \textbf{0.9 } & 0.7  \bigstrut\\
\cline{2-2}\cline{4-8}          & 2018  &       & 74.5  & 0.1   & 9.3   & \textbf{0.1 } & -0.9  \bigstrut\\
\cline{1-2}\cline{4-8}    \multirow{2}[4]{*}{2016} & 2014  &       & 79.4  & \textbf{-0.5 } & 1.9   & -0.2  & -0.1  \bigstrut\\
\cline{2-8}          & 2018  & \multirow{3}[6]{*}{75.8 } & 75.9  & -0.2  & 5.7   & -0.4  & \textbf{-0.5 } \bigstrut\\
\cline{1-2}\cline{4-8}    \multirow{2}[4]{*}{2018} & 2014  &       & 78.1  & 1.0   & 1.2   & \textbf{-0.8 } & 3.1  \bigstrut\\
\cline{2-2}\cline{4-8}          & 2016  &       & 76.5  & 1.0   & 2.4   & \textbf{-0.8 } & 2.2  \bigstrut\\
    \hline
    \multicolumn{4}{|c|}{Overall root mean square error} & 0.6   & 5.5   & \textbf{0.6 } & 1.6  \bigstrut\\
    \hline
    \multicolumn{4}{|c|}{Overall mean absolute error} & 0.5   & 4.6   & \textbf{0.5 } & 1.2  \bigstrut\\
    \hline
    \end{tabular}%
  \label{tab:addlabel}%
\end{table}%
}



\end{document}